\newcommand{\Lap}{\text{Lap}}
\newcommand{\E}[1]{\mathbb{E}\left[#1\right]}
\newcommand{\eps}{\varepsilon}
\renewcommand{\epsilon}{\varepsilon}
\newcommand{\poly}{\mathrm{poly}}
\definecolor{DarkGreen}{rgb}{0.1,0.5,0.1}
\newcommand{\new}[1]{#1}
\newcommand{\SAFFRON}{{\textsc{SAFFRON}}}
\newcommand{\PSAFFRON}{{\textsc{PAPRIKA}}}
\newcommand{\AboveThresh}{{\textsc{AboveThresh}}}
\newcommand{\ReportMin}{{\textsc{ReportNoisyMin}}}
\newcommand{\SparseVector}{{\textsc{SparseVector}}}
\newtheorem{theorem}{Theorem}
\newtheorem{lemma}{Lemma}
\newtheorem{definition}{Definition}
\def\icmlver{0}
\begin{document}

\title{\PSAFFRON: Private Online False Discovery Rate Control}

\ifnum\icmlver=0
\author{Wanrong Zhang\footnotemark[1] \and Gautam Kamath\footnotemark[2] \footnotemark[3] \and Rachel Cummings\footnotemark[1] \footnotemark[3]}

\renewcommand{\thefootnote}{\fnsymbol{footnote}}
\footnotetext[1]{H. Milton Stewart School of Industrial and Systems Engineering, Georgia Institute of Technology. {\tt \{wanrongz, rachelc\}@gatech.edu}. W.Z. supported in part by a Mozilla Research Grant, NSF grant CNS-1850187, and an ARC-TRIAD Fellowship from the Georgia Institute of Technology. R.C. supported in part by a Mozilla Research Grant, a Google Research Fellowship, and NSF grant CNS-1850187.  This work was initiated while both authors were visiting the Simons Institute for the Theory of Computing.}
\footnotetext[2]{Cheriton School of Computer Science, University of Waterloo. {\tt g@csail.mit.edu}. Supported by a University of Waterloo startup grant. This work was initiated while the author was a Microsoft Research Fellow at the Simons Institute for the Theory of Computing at UC Berkeley.}
\footnotetext[3]{Indicates equal contribution as last author.}

\renewcommand{\thefootnote}{\arabic{footnote}}

\fi
\maketitle

\begin{abstract}

In hypothesis testing, a \emph{false discovery} occurs when a hypothesis is incorrectly rejected due to noise in the sample. When adaptively testing multiple hypotheses, the probability of a false discovery increases as more tests are performed. Thus the problem of \emph{False Discovery Rate (FDR) control} is to find a procedure for testing multiple hypotheses that accounts for this effect in determining the set of hypotheses to reject. The goal is to minimize the number (or fraction) of false discoveries, while maintaining a high true positive rate (i.e., correct discoveries).

In this work, we study False Discovery Rate (FDR) control in multiple hypothesis testing under the constraint of differential privacy for the sample. 
Unlike previous work in this direction, we focus on the \emph{online setting}, meaning that a decision about each hypothesis must be made immediately after the test is performed, rather than waiting for the output of all tests as in the offline setting.
We provide new private algorithms based on state-of-the-art results in non-private online FDR control.
Our algorithms have strong provable guarantees for privacy and statistical performance as measured by FDR and power.
We also provide experimental results to demonstrate the efficacy of our algorithms in a variety of data environments.

\end{abstract}


\section{Introduction}

In the modern era of big data, data analyses play an important role in decision-making in healthcare, information technology, and government agencies. The growing availability of large-scale datasets and ease of data analysis, while beneficial to society, has created a severe crisis of reproducibility in science. 
In 2011, Bayer HealthCare reviewed 67 in-house projects and found that they could replicate fewer than 25 percent, and found that over two-thirds of the projects had major inconsistencies \cite{engineering2019reproducibility}. 
One major reason is that random noise in the data can often be mistaken for interesting signals, which does not lead to valid and reproducible results. 
This problem is particularly relevant when testing multiple hypotheses, when there is an increased chance of false discoveries based on noise in the data. 
For example, an analyst may conduct 250 hypothesis tests and find that 11 are significant at the 5\% level. 
This may be exciting to the researcher who publishes a paper based on these findings, but elementary statistics suggests that (in expectation) 12.5 of those tests should be significant at that level purely by chance, even if the null hypotheses were all true. 
To avoid such problems, statisticians have developed tools for controlling overall error rates when performing multiple hypothesis tests. 

In hypothesis testing, the \emph{null hypothesis} of no interesting scientific discovery (e.g., a drug has no effect), is tested against the alternative hypothesis of a particular scientific theory being true (e.g., a drug has a particular effect).  The significance of each test is measured by a \emph{$p$-value}, which is the probability of the observed data occurring under the null hypothesis, and a hypothesis is \emph{rejected} if the corresponding $p$-value is below some (fixed) significance level. Each rejection is called a \emph{discovery}, and a rejected hypothesis is a \emph{false discovery} if the null hypothesis is actually true.  When testing multiple hypotheses, the probability of a false discovery increases as more tests are performed.  The problem of \emph{false discovery rate (FDR) control} is to find a procedure for testing multiple hypotheses that takes in the $p$-values of each test, and outputs a set of hypotheses to reject. The goal is to minimize the number of false discoveries, while maintaining high true positive rate (i.e., true discoveries).

In many applications, the dataset may contain sensitive personal information, and the analysis must be conducted in a privacy-preserving way.  For example, in genome-wide association studies (GWAS), a large number of single-nucleotide polymorphisms (SNPs) are tested for an association with a disease simultaneously or adaptively. Prior work has shown that the statistical analysis of these datasets can lead to privacy concerns, and it is possible to identify an individual's genotype when only minor allele frequencies are revealed \cite{homer2008resolving}. The field of {\em differential privacy} \cite{DMNS06} offers data analysis tools that provide powerful worst-case privacy guarantees, and has become a de facto gold standard in private data analysis. Informally, an algorithm that is $\eps$-differentially private ensures that any particular output of the algorithm is at most $e^\eps$ more likely when a single data point is changed. This parameterization allows for a smooth tradeoff between accurate analysis and privacy to the individuals who have contributed data. In the past decade, researchers have developed a wide variety of differentially private algorithms for many statistical tasks; these tools have been implemented in practice at major organizations including Google~\cite{ErlingssonPK14}, Apple~\cite{AppleDP17}, Microsoft~\cite{DingKY17}, and the U.S. Census Bureau~\cite{DajaniLSKRMGDGKKLSSVA17}.

\paragraph{Related Work.}
\setlength{\textfloatsep}{5pt}
The only prior work on differentially private FDR control \cite{dwork2018differentially} considers the classic offline multiple testing problem, where an analyst has all the hypotheses and corresponding $p$-values upfront. Their private method repeatedly applies \ReportMin\ \cite{dwork2014algorithmic} to the celebrated Benjamini-Hochberg (BH) procedure \cite{benjamini1995controlling} in offline multiple testing to privately pre-screen the $p$-values, and then applies the BH procedure again to select the significant $p$-values.  The (non-private) BH procedure first sorts all $p$-values, and then sequentially compares them to an increasing threshold, where all $p$-values below their (ranked and sequential) threshold are rejected.  The \ReportMin\ mechanism privatizes this procedure by repeatedly (and privately) finding the hypothesis with the lowest $p$-value.

Although the work of \cite{dwork2018differentially} showed that it was possible to integrate differential privacy with FDR control in multiple hypothesis testing, the assumption of having all hypotheses and $p$-values upfront is not reasonable in many practical settings.  For example, a hospital may conduct multi-phase clinical trials where more patients join over time, or a marketing company may perform A/B testings sequentially. In this work, we focus on the more practical \emph{online hypothesis testing problem}, where a stream of hypotheses arrive sequentially, and decisions to reject hypotheses must be made based on current and previous results before the next hypothesis arrives. This sequence of the hypotheses could be independent or adaptively chosen. Due to the fundamental difference between the offline and online FDR procedures, the method of \cite{dwork2018differentially} \new{based on  \ReportMin} cannot be applied to the online setting. \new{Instead, we use \SparseVector, described in Section \ref{sec.dpprelim}, as a starting point.} 
Discussion of non-private online multiple hypothesis testing appears in Section~\ref{s.background}.

\nocite{TianR19}



\paragraph{Our Results.}
We develop a differentially private online FDR control procedure for multiple hypothesis testing, which takes a stream of $p$-values and a target FDR level and privacy parameter $\varepsilon$, and outputs discoveries that can control the FDR at a certain level at any time point.  Such a procedure provides unconditional differential privacy guarantees (to ensure that privacy will be protected even in the worst case) and satisfy the theoretical guarantees dictated by the FDR control problem.

Our algorithm, Private Alpha-investing P-value Rejecting Iterative sparse veKtor Algorithm (\PSAFFRON, Algorithm~\ref{algo.psaffron}), is presented in Section~\ref{s.algo}.
Its privacy and accuracy guarantees are stated in Theorem~\ref{thm.priv} and~\ref{thm.fdr}, respectively.
While the full proofs appear in the appendix, we describe the main ideas behind the algorithms and proofs in the surrounding prose.
In Section~\ref{s.exper}, we provide a thorough empirical investigation of \PSAFFRON, with additional empirical results in Appendix \ref{app.plots}.


\section{Preliminaries}

\subsection{Background on Differential Privacy}
Differential Privacy bounds the maximal amount that one data entry can change the output of the computation. Databases belong to the space $\mathcal{D}^n$ and contain $n$ entries--one for each individual--where each entry belongs to data universe $\mathcal{D}$. We say that $D,D' \in \mathcal{D}^n$ are \emph{neighboring databases} if they differ in at most one data entry.

\begin{definition}[Differential Privacy \cite{DMNS06}]\label{def.dp}
	An algorithm $\mathcal{M}: \mathcal{D}^n \rightarrow \mathcal{R}$ is \emph{$(\epsilon,\delta)$-differentially private} if for every pair of neighboring databases $D,D' \in \mathbb{R}^n$, and for every subset of possible outputs $\mathcal{S} \subseteq \mathcal{R}$,
	$\Pr[\mathcal{M}(D) \in \mathcal{S}] \leq \exp(\epsilon)\Pr[\mathcal{M}(D') \in \mathcal{S}] + \delta$. 
	If $\delta = 0$, we say that $\mathcal{M}$ is {\em $\epsilon$-differentially private}.
\end{definition}

The \emph{additive sensitivity} of a real-valued query $f:\mathcal{D}^n \to \mathbb{R}$ is denoted $\Delta f$, and is defined to be the maximum change in the function's value that can be caused by changing a single entry.  That is,
\[ \Delta f = \max_{D,D' \text{ neighbors }} \left| f(D) - f(D') \right|. \]
If $f$ is a vector-valued query, the expression above can be modified with the appropriate norm in place of the absolute value. Differential privacy guarantees are often achieved by adding \emph{Laplace noise} at various places in the computation, where the noise scales with $\Delta f/\epsilon$.  A Laplace random variable with parameter $b$ is denoted $\Lap(b)$, and has probability density function,
\[ p_{\Lap(b)}(x) = \frac{1}{2b} \exp\left(\frac{-|x|}{b}\right) \quad \forall x \in \mathbb{R}. \]
We may sometimes abuse notation and also use $\Lap(b)$ to denote the realization of a random variable with this distribution.

The \SparseVector\ algorithm, first introduced by \cite{DNPR10} and refined to its current form by \cite{dwork2014algorithmic}, privately reports the outcomes of a potentially very large number of computations, provided that only a few are ``significant.'' It takes in a stream of queries, and releases a bit vector indicating whether or not each noisy query answer is above the fixed noisy threshold. We use this algorithm as a framework for our online private false discovery rate control algorithm as new hypotheses arrive online, and we only care about those ``significant'' hypotheses when the $p$-value is below a certain threshold.  We note that the standard presentation below checks for queries with values above a threshold, but by simply changing signs this framework can be used to check for values \emph{below} a threshold, as we will do with the $p$-values.

{\centering
	\begin{minipage}{\linewidth}
		\begin{algorithm}[H]
			\caption{Sparse Vector: \SparseVector($D, \Delta, \{f_1, f_2, \ldots \}, T, c, \epsilon$) }
			\begin{algorithmic}
				\State \textbf{Input:} database $D$, stream of queries $\{f_1, f_2, \ldots \}$ each with sensitivity $\Delta$, threshold $T$, a cutoff point $c$, privacy parameter $\epsilon$
				\State Let $\hat{T}_0 = T + \Lap(\frac{2\Delta c}{\epsilon})$
				\State Let count $=0$
				\For {each query $i$}
				\State Let $Z_i \sim \Lap(\frac{4\Delta c}{\epsilon})$
				\If {$f_i(X)+Z_i>\hat{T}$}
				\State Output $a_i=\top$
				\State Let count $=$ count $+1$
				\State Let $\hat{T}_{\text{count}}=T+\Lap(\frac{2\Delta c}{\epsilon})$
				\Else
				\State Output $a_i=\bot$
				\EndIf
				\If{count $\ge c$}
				\State Halt.
				\EndIf
				\EndFor
			\end{algorithmic}
		\end{algorithm}
	\end{minipage}
	
	\begin{theorem}[\cite{DNPR10}]
		\SparseVector\ is $(\epsilon, 0)$-differentially private. 
	\end{theorem}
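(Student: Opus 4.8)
The plan is to reduce to the single-threshold ``above-threshold'' subroutine and then argue directly by integrating out the injected noise. The key structural observation is that the noisy threshold $\hat{T}$ is resampled immediately after every $\top$ output and the algorithm halts once the count reaches $c$. Hence any realized transcript $\vec{a} = (a_1, a_2, \ldots)$ decomposes into at most $c$ consecutive \emph{blocks}, each terminating in a single $\top$ (with a possible final block of all $\bot$'s if the stream ends first). Each block is exactly one run of the basic above-threshold mechanism on a fresh threshold draw, using noise scales $\Lap(2\Delta c/\epsilon)$ and $\Lap(4\Delta c/\epsilon)$. If I show each block is $(\epsilon/c)$-differentially private, then since the number and locations of the blocks are determined by the publicly observed output and there are at most $c$ of them, the standard basic composition theorem over these $c$ (adaptively chosen) mechanisms yields the claimed $\epsilon$ bound. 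So it suffices to analyze one block.

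For a single block, fix neighboring databases $D, D'$ and a target output of the form $(\bot, \ldots, \bot, \top)$; the all-$\bot$ terminal block is a strictly easier special case, handled identically but without the final factor. I would condition on the realizations $Z_1, \ldots, Z_{k-1}$ of the per-query noise for the $\bot$ answers, and let $\nu$ denote the threshold noise and $Z_k$ the noise on the final (accepting) query. The conditional probability of this transcript can then be written as
\begin{equation*}
\int_{-\infty}^{\infty} \Pr[\nu = v]\, \mathbf{1}\!\left[v > g(D)\right]\, \Pr\!\left[Z_k \ge T + v - f_k(D)\right] dv,
\end{equation*}
where $g(D) = \max_{i<k}\left(f_i(D) + Z_i\right) - T$ records the largest noisy $\bot$-query (shifted by $T$): the indicator encodes that every $\bot$ query fell below $\hat{T} = T+v$, while the tail factor encodes that the final query cleared it.

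The core step is a change of variables $v \mapsto v + \left(g(D) - g(D')\right)$, which aligns the indicator thresholds across $D$ and $D'$. Because each $f_i$ has sensitivity $\Delta$ and the $Z_i$ are held fixed, the maximum moves by at most $\Delta$, so $\abs{g(D) - g(D')} \le \Delta$ and also $\abs{f_k(D) - f_k(D')} \le \Delta$. After the substitution two ratios appear. The threshold density ratio $\Pr[\nu = v + (g(D)-g(D'))]/\Pr[\nu = v]$ is at most $\exp\!\left(\epsilon\abs{g(D)-g(D')}/(2\Delta c)\right) \le \exp(\epsilon/(2c))$ since $\nu \sim \Lap(2\Delta c/\epsilon)$. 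The tail-probability ratio for $Z_k$ has its argument shifted by at most $\abs{g(D)-g(D')} + \abs{f_k(D)-f_k(D')} \le 2\Delta$, contributing at most $\exp\!\left(2\Delta\cdot\epsilon/(4\Delta c)\right) = \exp(\epsilon/(2c))$ since $Z_k \sim \Lap(4\Delta c/\epsilon)$. Multiplying gives $\exp(\epsilon/c)$, and since the bound is pointwise in the conditioning noise, integrating over $Z_1, \ldots, Z_{k-1}$ preserves it, establishing per-block $(\epsilon/c)$-differential privacy.

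The main obstacle I anticipate is bookkeeping rather than conceptual: splitting the privacy loss correctly between the two noise sources (the $\exp(\epsilon/(2c)) \cdot \exp(\epsilon/(2c))$ decomposition) and, more delicately, justifying that the block decomposition is a legitimate adaptive composition. This requires checking that the halting rule and threshold resampling depend only on the observed output and not on hidden private state, so that the run is genuinely a composition of at most $c$ mechanisms. I would also verify the boundary cases carefully---a block that never emits a $\top$ because the stream terminates, and the accounting when fewer than $c$ discoveries occur---to confirm that the composition count never exceeds $c$ and that the all-$\bot$ case contributes at most $\exp(\epsilon/(2c)) \le \exp(\epsilon/c)$.
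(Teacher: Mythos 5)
Your proposal is correct and is the standard proof of \SparseVector's privacy; the paper itself only cites this theorem without proof, but your argument is essentially identical to the one the paper adapts in Appendix~\ref{privacy.proofs} for Theorem~\ref{thm.priv}: write the probability of a block ending in the first $\top$ as an integral over the threshold noise, perform a change of variables to align the below-threshold events across neighbors, split the loss as $\exp(\eps/2c)\cdot\exp(\eps/2c)$ between the threshold-density ratio and the final-query tail ratio, and compose over at most $c$ blocks. No gaps.
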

}

\begin{theorem}[\cite{DNPR10}]\label{thm.atacc}
	For any sequence of $k$ queries $f_1,\ldots, f_k$ with sensitivity $\Delta$ such that $|\{i: f_i(D)\ge T-\alpha\}|\le c$, \SparseVector\ outputs with probability at least $1-\beta$ a stream of $a_1, \ldots, a_k \in \{\top, \bot\}$ such that $a_i=\bot$ for every $i\in[m]$ with $f(i)<T-\alpha_{SV}$ and $a_i=\top$ for every $i\in[m]$ with $f(i)>T+\alpha_{SV}$ as long as
	$\alpha_{SV}\ge \frac{8\Delta c\log (2k c/\beta)}{\epsilon}$.
\end{theorem}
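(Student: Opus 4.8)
The plan is to follow the standard conditioning-on-clean-noise analysis of the Sparse Vector mechanism. First I would define the good event $E$ on which every threshold perturbation (each distributed as $\Lap(2\Delta c/\epsilon)$) and every query perturbation $Z_i \sim \Lap(4\Delta c/\epsilon)$ has absolute value at most $\alpha_{SV}/2$. Because the algorithm halts once the count reaches $c$, at most $c+1$ thresholds are ever drawn, and there are at most $k$ query noises. Using the Laplace tail bound $\Pr[\,|\Lap(b)| \ge t\,] = \exp(-t/b)$ together with a union bound, I would show that under the stated choice $\alpha_{SV} \ge 8\Delta c \log(2kc/\beta)/\epsilon$ each query noise is small except with probability at most $\beta/(2kc)$ (this is the binding case, since the $Z_i$ have the larger scale), so all $k$ of them are controlled except with probability $\beta/2$; the threshold noises, having the smaller scale, each fail with probability at most $(\beta/(2kc))^2$ and contribute at most $\beta/2$ in total. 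Hence $\Pr[E] \ge 1 - \beta$, and the rest of the argument proceeds deterministically on $E$.

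Conditioned on $E$, correctness of the two tails is immediate. If $f_i(D) > T + \alpha_{SV}$, then $f_i(D) + Z_i > T + \alpha_{SV}/2$ while the active noisy threshold satisfies $\hat{T} \le T + \alpha_{SV}/2$, so the test $f_i(D) + Z_i > \hat{T}$ passes and the algorithm outputs $\top$. Symmetrically, if $f_i(D) < T - \alpha_{SV}$ then $f_i(D) + Z_i < T - \alpha_{SV}/2 \le \hat{T}$, forcing the output $\bot$. (Here I read the hypothesis with $\alpha = \alpha_{SV}$ and the index set $[m]$ as $[k]$, which I take to be the intended normalization.)

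The step requiring the most care is ensuring that the halting rule does not cause the algorithm to terminate before processing all $k$ queries, since the two tail guarantees are vacuous for any index that is never reached. On $E$, an index that produces a $\top$ must satisfy $f_i(D) + Z_i > \hat{T} \ge T - \alpha_{SV}/2$ while $Z_i \le \alpha_{SV}/2$, and therefore $f_i(D) > T - \alpha_{SV}$. By hypothesis at most $c$ indices satisfy $f_i(D) \ge T - \alpha_{SV}$, so the running count can never exceed $c$ and the algorithm never halts early; consequently every query in the stream is processed. Combining this observation with $\Pr[E] \ge 1 - \beta$ yields the claimed accuracy guarantee.
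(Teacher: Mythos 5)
The paper does not prove this statement --- it is quoted from \cite{DNPR10} (in the refined form of \cite{dwork2014algorithmic}) --- so there is no internal proof to compare against. Your argument is the standard one for \SparseVector\ accuracy and it checks out: with $\alpha_{SV}\ge 8\Delta c\log(2kc/\beta)/\epsilon$, each $Z_i\sim\Lap(4\Delta c/\epsilon)$ exceeds $\alpha_{SV}/2$ with probability at most $\exp(-\alpha_{SV}\epsilon/(8\Delta c))\le\beta/(2kc)$, the at most $c+1$ threshold noises are controlled with quadratically smaller failure probability, and the union bound plus the deterministic case analysis on the good event gives both tails. Your reading of the statement's notational slips ($\alpha=\alpha_{SV}$, $[m]=[k]$) matches the intended meaning.

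One small caveat: your claim that the algorithm ``never halts early'' is slightly too strong. On the good event every $\top$ comes from an index with $f_i(D)>T-\alpha_{SV}$, so the count never \emph{exceeds} $c$, but it can \emph{reach} $c$ before the stream ends, at which point \SparseVector\ halts and emits nothing for the remaining queries. This is harmless under the usual convention that unanswered queries default to $\bot$ (on the good event all remaining indices satisfy $f_i(D)<T-\alpha_{SV}$ and so should be $\bot$ anyway), but as written your justification of that step is not quite what you need; you should either invoke that convention explicitly or note that the tail guarantees are only claimed for queries that are actually processed.
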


Unlike the conventional use of additive sensitivity, \cite{dwork2018differentially} defined the notion of {\em multiplicative sensitivity} specifically for $p$-values. It is motivated by the observation that, although the additive sensitivity of a $p$-value may be large, the relative change of the $p$-value on two neighboring datasets is stable unless the $p$-value is very small. Using this alternative sensitivity notion means that preserving privacy for these $p$-values only requires a small amount of noise.

\begin{definition}[Multiplicative Sensitivity \cite{dwork2018differentially}]\label{def.multsens}
	A p-value function $p$ is said to be $(\eta,\mu)$-multiplicative sensitive if for all neighboring databases $D$ and $D'$, either both $p(D), p(D')\le \mu$ or 
	\[ exp(-\eta)p(D)\le p(D') \le \exp(\eta)p(D). \]
\end{definition}

Specifically, when $\mu$ is sufficiently small, then we can treat the logarithm of the $p$-values as having additive sensitivity $\eta$, and we only need to add noise that scales with $\eta/\epsilon$, which may be much smaller than the noise required under the standard additive sensitivity notion.

\subsection{Background on Online False Discovery Rate Control}
\label{s.background}
In the online false discovery rate (FDR) control problem, a data analyst receives a stream of hypotheses on the database $D$, or equivalently, a stream of $p$-values $p_1,p_2,\ldots$. The analyst must pick a threshold $\alpha_t$ at each time $t$ to reject the hypothesis when $p_t\le \alpha_t$; this threshold can depend on previous hypotheses and discoveries, and rejection must be decided before the next hypothesis arrives. 


The error metric is the false discovery rate, formally defined as: 
\[ \text{FDR}=\E{\text{FDP}}=\E{\frac{|\mathcal{H}^0\cap\mathcal{R}|}{|\mathcal{R}|}}, \]
where $\mathcal{H}^0$ is the (unknown to the analyst) set of hypotheses where the null hypothesis is true, and $\mathcal{R}$ is the set of rejected hypotheses.  We will also write these terms as a function of time $t$ to indicate their values after the first $t$ hypotheses: $\text{FDR}(t),\; \text{FDP}(t),\; \mathcal{H}^0(t), \; \mathcal{R}(t)$.
The goal of FDR control is to guarantee that for any time $t$, the FDR up to time $t$ is less than a pre-determined quantity $\alpha\in(0,1)$. 

Such a problem was first investigated by \cite{foster2008alpha}, who proposed a framework known as \emph{online alpha-investing} that models the hypothesis testing problem as an investment problem.  The analyst is endowed with an initial budget, can test hypotheses at a unit cost, and receives an additional reward for each discovery. The alpha-investing procedure ensures that the analysts always maintains an $\alpha$-fraction of their wealth, and can therefore continue testing future hypotheses indefinitely.  Unfortunately, this approach only controls a slightly relaxed version of FDR, known as \emph{mFDR}, which is given by
$\text{mFDR}(t)=\frac{\E{|\mathcal{H}^0\cap\mathcal{R}|}}{\E{|\mathcal{R}|}}$.
This approach was later extended to a class of generalized alpha-investing (GAI) rules \cite{aharoni2014generalized}. One subclass of GAI rules, the Level based On Recent Discovery (LORD), was shown to have consistently good performance in practice \cite{javanmard2015online,javanmard2018online}. The SAFFRON procedure, proposed by \cite{ramdas2018saffron}, further improves the LORD procedures by adaptively estimating the proportion of true nulls. The SAFFRON procedure is the current state-of-the-art in online FDR control for multiple hypothesis testing.

To understand the main differences between the SAFFRON and the LORD procedures, we first introduce an oracle estimate of the FDP as
$\text{FDP}^*(t)=\frac{\sum_{j\le t, j\in \mathcal{H}^0}\alpha_j}{|\mathcal{R}(t)|}$. The numerator $\sum_{j\le t, j\in \mathcal{H}^0}\alpha_j$ overestimates the number of false discoveries, so $\text{FDP}^*(t)$ overestimates the $\text{FDP}$. The oracle estimator $\text{FDP}^*(t)$ cannot be calculated since $\mathcal{H}^0$ is unknown. LORD's naive estimator $\sum_{j\le t}\alpha_j/|\mathcal{R}(t)|$ is a natural overestimate of $\text{FDP}^*(t)$.
The SAFFRON's threshold sequence is based on a novel estimate of FDP as
\begin{equation}\label{eq.safffdp}
\widehat{\text{FDP}}_{\text{SAFFRON}}(t)=\frac{\sum_{j\le t}\alpha_j\frac{I(p_j>\lambda_j)}{1-\lambda_j}}{|\mathcal{R}(t)|},
\end{equation}
where $\{\lambda_j\}_{j=1}^\infty$ is a sequence of user-chosen parameters in the interval $(0,1)$, which can be a constant or a deterministic function of the information up to time $t-1$. This is a much better estimator than LORD's naive estimator $\sum_{j\le t}\alpha_j/|\mathcal{R}(t)|$. The SAFFRON estimator is a fairly tight estimate of $\text{FDP}^*(t)$, since intuitively $I(p_j>\lambda_j)/(1-\lambda_j)$ has unit expectation under null hypotheses and is stochastically smaller than uniform under non-null hypotheses.

The SAFFRON algorithm is given formally in Algorithm \ref{algo.saffron}. SAFFRON starts off with an error budget $(1-\lambda_1)W_0<(1-\lambda_1)\alpha$, which will be allocated to different tests over time. It never loses wealth when testing candidate $p$-values with $p_j<\lambda_j$, and it earns back wealth of $(1-\lambda_j)\alpha$ on every rejection except for the first.  By construction, the SAFFRON algorithm controls $\widehat{\text{FDP}}_{\text{SAFFRON}}(t)$ to be less than $\alpha$ at any time $t$. The function $g_t$ for defining the sequence $\{\lambda_j\}_{j=1}^\infty$ can be any coordinatewise non-decreasing function. For example, $\{\lambda_j\}_{j=1}^\infty$ can be a deterministic sequence of constants, or $\lambda_t=\alpha_t$, as in the case of alpha-investing.  These $\lambda_j$ values serve as a weak overestimate of $\alpha_j$. The algorithm first checks if a $p$-value is below $\lambda_j$, and if so, adds it to the \emph{candidate set} of hypotheses that may be rejected.  It then computes the $\alpha_j$ threshold based on current wealth, current size of the candidate set, and the number of rejections so far, and decides to reject the hypothesis if $p_j \leq \alpha_j$.  It also takes in a non-increasing sequence of decay factors $\gamma_j$ which sum to one.  These decay factors serve to depreciate past wealth and ensure that the sum of the wealth budget is always below the desired level $\alpha$.

{\centering
	\begin{minipage}{\linewidth}
		\begin{algorithm}[H]
			\caption{\SAFFRON($\alpha, W_0, \{\gamma_j\}_{j=0}^\infty$)}
			\begin{algorithmic}
				\State \textbf{Input:} stream of $p$-values $\{p_1, p_2, \ldots\}$, target FDR level $\alpha$, initial wealth $W_0<\alpha$, positive non-increasing sequence $\{\gamma_j\}_{j=0}^\infty$ of summing to one.
				\State Set rejection number $i=0$
				\For{each $p$-value $p_t$}
				\State Set $\lambda_t=g_t(R_{1:t-1},C_{1:t-1})$
				\State Set the indicator for candidacy $C_t=I( p_t<\lambda_t)$. Set the candidates after the $j$-th rejection as $C_{j+}=\sum_{i=\tau_j+1}^{t-1}C_i$
				\If{$t=1$}
				\State Set $\alpha_1=(1-\lambda_1)\gamma_1W_0$
				\Else
				\State Compute $\alpha_t=(1-\lambda_t)(W_0\gamma_{t-C_{0+}}+(\alpha-W_0)\gamma_{t-\tau_1-C_{1+}}+\sum_{j\ge 2}\alpha\gamma_{t-\tau_j-C_{j+}})$
				\EndIf
				\State Output $R_t=I(p_t\le \alpha_t)$
				\If{$R_t=1$}
				\State Update rejection number $i=i+1$. Set the $i$-th rejection time as $\tau_i=t$
				\EndIf				
				\EndFor
			\end{algorithmic}\label{algo.saffron}
		\end{algorithm}
	\end{minipage}
}

The \SAFFRON\ algorithm requires that the input sequence of $p$-values are not too correlated under the null hypothesis.  This condition is formalized through a \emph{filtration} on the sequence of candidacy and rejection decisions.  Intuitively, this means that the sequence of hypotheses cannot be too adaptively chosen, otherwise the $p$-values may become overly correlated and violate this condition. Denote by $R_j:=I(p_j\le\alpha_j)$ the indicator for rejection, and let $C_j:=I(p_j\le \lambda_j)$ be the indicator for candidacy. Define the filtration formed by the sequences of $\sigma$-fields $\mathcal{F}^t:=\sigma(R_1,\ldots,R_t,C_1,\ldots,C_t)$, and let $\alpha_t:=f_t(R_1,\ldots,R_{t-1},C_1,\ldots,C_{t-1})$, where $f_t$ is an arbitrary function of the first $t-1$ indicators for rejections and candidacy. We say that the null $p$-values are conditionally super-uniformly distributed with respect to the filtration $\mathcal{F}$ if:

\begin{equation}\label{assump.saffron}
\text{If null hypothesis } H_i \text{ is true, then } \Pr(p_t\le \alpha_t|\mathcal{F}^{t-1})\le \alpha_t.
\end{equation}

We note that independent $p$-values is a special case of the conditional super-uniformity condition of \eqref{assump.saffron}.  When $p$-values are independent, they satisfy the following condition:



\begin{equation*}
\text{If the null hypothesis } H_i \text{ is true, then } \Pr(p_t\le u)\le u \text{ for all } u\in[0,1].
\end{equation*}

\noindent SAFFRON provides the following accuracy guarantees, where the first two conditions apply if $p$-values are conditionally super-uniformly distributed, and the last two conditions apply if the $p$-values are additionally independent under the null.

\begin{theorem}[\cite{ramdas2018saffron}]\label{thm.saffron} If the null $p$-values are conditionally super-uniformly distributed, then we have:\\
	(a) $\E{\sum_{j\le t, j\in \mathcal{H}^0}\alpha_j\frac{I(p_j>\lambda_j)}{1-\lambda_j}}\ge \E{|\mathcal{H}^0\cap \mathcal{R}(t)|}$;\\
	(b) The condition $\widehat{\text{FDP}}_{\text{SAFFRON}}(t) \le \alpha$ for all $t\in \mathbb{N}$ implies that mFDR$(t)\le \alpha$ for all $t\in \mathbb{N}$.\\
	\noindent If the null $p$-values are independent of each other and of the non-null $p$-values, and $\{\alpha_t\}$ and $\{\lambda_t\}$ are coordinatewise non-decreasing functions of the vector $R_1,\ldots, R_{t-1},C_1,\ldots, C_{t-1}$, then \\
	(c) $\E{\widehat{\text{FDP}}_{\text{SAFFRON}}(t)}\ge \E{FDP(t)} := FDR(t)$ for all $t\in \mathbb{N}$;\\
	(d) The condition $\widehat{\text{FDP}}_{\text{SAFFRON}}(t) \le \alpha$ for all $t$ implies that $FDR(t)\le \alpha$ for all $t\in \mathbb{N}$.
\end{theorem}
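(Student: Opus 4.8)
The plan is to split the four claims into two groups. Parts (a) and (b) need only conditional super-uniformity, which I will use in the form $\Pr(p_j \le s \mid \mathcal{F}^{j-1}) \le s$ for any $\mathcal{F}^{j-1}$-measurable threshold $s$ and any null index $j$; since $\alpha_j$ and $\lambda_j$ are functions of $R_1,\dots,R_{j-1},C_1,\dots,C_{j-1}$ they are $\mathcal{F}^{j-1}$-measurable, so this applies at both thresholds. Parts (c) and (d) additionally require independence and the coordinatewise monotonicity of $\{\alpha_t\}$ and $\{\lambda_t\}$.

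For (a) I would argue one null index at a time. Fixing a null $j\le t$ and conditioning on $\mathcal{F}^{j-1}$ makes $\alpha_j,\lambda_j$ constants; then $\Pr(p_j>\lambda_j\mid\mathcal{F}^{j-1})\ge 1-\lambda_j$ gives $\E{\frac{\alpha_j I(p_j>\lambda_j)}{1-\lambda_j}\mid\mathcal{F}^{j-1}}\ge\alpha_j$, while super-uniformity at $\alpha_j$ gives $\E{R_j\mid\mathcal{F}^{j-1}}\le\alpha_j$. Taking total expectations of this termwise inequality and summing over null $j\le t$ proves (a), using $\sum_{j\le t,\,j\in\mathcal{H}^0}R_j=|\mathcal{H}^0\cap\mathcal{R}(t)|$. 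Part (b) is then bookkeeping: the hypothesis $\widehat{\text{FDP}}_{\text{SAFFRON}}(t)\le\alpha$ says that $\sum_{j\le t}\alpha_j\frac{I(p_j>\lambda_j)}{1-\lambda_j}$ is surely at most $\alpha|\mathcal{R}(t)|$, so taking expectations and lower-bounding this full numerator by its null-only part from (a) yields $\E{|\mathcal{H}^0\cap\mathcal{R}(t)|}\le\alpha\,\E{|\mathcal{R}(t)|}$, i.e.\ $\text{mFDR}(t)\le\alpha$.

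The real work is in (c), where the shared denominator $|\mathcal{R}(t)|$ couples all decisions, so conditional expectations cannot be taken termwise. Here I would run a leave-one-out argument: fix a null $j$, condition on all $p$-values except $p_j$ (invoking independence), so $\alpha_j,\lambda_j$ are fixed and the downstream decisions depend on $p_j$ only through the pair $(C_j,R_j)$. Since $\alpha_j\le\lambda_j$, decreasing $p_j$ sweeps through exactly the three states $(0,0)\to(1,0)\to(1,1)$, and coordinatewise monotonicity forces the realized rejection count to be nondecreasing along this sweep; writing $D_A$ and $D_C$ for its values in the first and last states, this gives $D_C\ge D_A$, and within each state $|\mathcal{R}(t)|$ is constant in $p_j$. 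Super-uniformity at $\alpha_j$ then gives $\E{\frac{R_j}{|\mathcal{R}(t)|}\mid p_{-j}}\le\frac{\alpha_j}{D_C}$, and super-uniformity at $\lambda_j$ gives $\E{\frac{\alpha_j I(p_j>\lambda_j)}{(1-\lambda_j)|\mathcal{R}(t)|}\mid p_{-j}}\ge\frac{\alpha_j}{D_A}$; since $D_C\ge D_A$ these combine to the pointwise inequality between the two conditional expectations. Averaging over the remaining $p$-values, summing over null $j\le t$, and enlarging the numerator by the nonnegative non-null terms gives $\text{FDR}(t)\le\E{\widehat{\text{FDP}}_{\text{SAFFRON}}(t)}$. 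Part (d) is immediate: $\widehat{\text{FDP}}_{\text{SAFFRON}}(t)\le\alpha$ surely implies $\E{\widehat{\text{FDP}}_{\text{SAFFRON}}(t)}\le\alpha$, which with (c) yields $\text{FDR}(t)\le\alpha$.

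I expect the monotonicity step of (c) to be the main obstacle to make fully rigorous: I must confirm that $|\mathcal{R}(t)|$ really is a coordinatewise-nondecreasing function of the decision vector---so that switching $C_j$ and then $R_j$ from $0$ to $1$ can only raise later thresholds and hence later rejections---and that within each of the three states the downstream run, and thus $|\mathcal{R}(t)|$, is constant in $p_j$, which is exactly what lets me replace it by the constants $D_A$ and $D_C$. This is precisely where the coordinatewise-monotonicity hypothesis is used, and it explains why (c)--(d) demand stronger assumptions than (a)--(b).
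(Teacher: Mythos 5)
Your proposal is correct and takes essentially the same route as the source: the paper cites this theorem from Ramdas et al.\ without reproving it, but your termwise super-uniformity argument for (a)--(b) and your leave-one-out/monotonicity argument for (c) are exactly the structure the paper uses for its private analogs (Theorem~\ref{thm.fdr} via Lemma~\ref{lemma1}), where your ``three-state sweep'' appears as the hallucinated $p$-value vectors with $\tilde p_t = 1$ (forcing $(C_t,R_t)=(0,0)$) and $\bar p_t = 0$ (forcing $(C_t,R_t)=(1,1)$) combined with monotonicity of $h(R_{1:k})$. The only detail to tidy is the convention $|\mathcal{R}(t)| \vee 1$ in the denominator so that your constant $D_A$ is never zero.
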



\section{Private online false discovery rate control}\label{s.algo}



In this section, we provide our algorithm for private online false discovery rate control, \PSAFFRON, given formally in Algorithm \ref{algo.psaffron}. It is a differentially private version of \SAFFRON, where we use \SparseVector\ to ensure privacy of our rejection set.  However, the combination of these tools is far from immediate, \new{and several algorithmic innovations are required, including: dynamic thresholds in \SparseVector\ to match the alpha-investing rule of SAFFRON, adding noise that scales with the multiplicative sensitivity of $p$-values to reduce the noise required for privacy, shifting the SparseVector threshold to accommodate FDR as a novel accuracy metric, and the candidacy indicator step which cannot be done privately and requires new analysis for both privacy and accuracy.} Complete proofs of our privacy and accuracy results appear in the appendix; we elaborate here on the algorithmic details and why these modifications are needed to ensure privacy and FDR control.

{\centering
	\begin{minipage}{\linewidth}
		\begin{algorithm}[H]
			\caption{ \PSAFFRON($\alpha, \lambda, W_0, \gamma, c, \epsilon, \delta,s$)}
			\begin{algorithmic}
				\State \textbf{Input:} stream of $p$-values $\{p_1, p_2, \ldots \}$ with mutiplicative sensitivity ($\eta$,$\mu$), target FDR level $\alpha$, initial wealth $W_0<\alpha$, positive non-increasing sequence $\{\gamma_j\}_{j=0}^\infty$ of summing to one, expected number of rejections $c$, privacy parameters $\eps,\delta$, threshold shift magnitude $s$, maximum number of $p$-values $k$.
				\State Let $Z_\alpha^0\sim \Lap(2\eta c/\eps)$, count $=0$,\\ $A=\frac{sc\eta}{\eps} \log\frac{2}{3\min\{\delta,1-((1-\delta)/\exp(\eps))^{1/k}\}}$
				\For{each p-value $p_t$}
				\State \textbf{if} count $\ge c$ \textbf{then} Output $R_t=0$
				\State \textbf{else} 
				\State \indent Sample $Z_t\sim \Lap(4\eta c/\eps)$. Set $\lambda_t=g_t(R_{1:t-1},C_{1:t-1})$. Set the indicator for candidacy $C_t=I(\log p_t<\log2 \lambda_t)$. 
				\State \indent \textbf{if} $t=1$ 
				\State \indent \indent \textbf{then} Set $\alpha_1=(1-2\lambda_1)\gamma_1W_0$
				\State \indent \textbf{else} 
				\State \indent  \indent  Compute $\alpha_t=(1-2\lambda_t)(W_0\gamma_{t}+(\alpha-W_0)\gamma_{t-\tau_1}+\sum_{j\ge 2}\alpha\gamma_{t-\tau_j})$
				\State \indent \textbf{if} $C_t=1$ and $\log p_t+Z_t\le \log \alpha_t-A+Z_\alpha^{\text{count}}$ 
				\State \indent \indent \textbf{then} Output $R_t=1$. Set count = count $+1$ and sample $Z_\alpha^{\text{count}}\sim \Lap(2\eta c/\eps)$
				\State \indent \textbf{else} Output $R_t=0$
				\EndFor
			\end{algorithmic}\label{algo.psaffron}
		\end{algorithm}
	\end{minipage}
}

The \SAFFRON~algorithm decides to reject hypothesis $t$ if the corresponding $p$-value $p_t$ is less than the rejection threshold $\alpha_t$; that is, if $p_t \leq \alpha_t$.  We instantiate the \SparseVector\ framework in this setting, where $p_t$ plays the role of the $t^{th}$ query answer $f_t(X)$, and $\alpha_t$ plays the role of the threshold. Note that \SparseVector\ uses a single fixed threshold for all queries, while our algorithm \PSAFFRON\ allows for a dynamic threshold that depends on the previous output.  Our privacy analysis of the algorithm accounts for this change and shows that dynamic thresholds do not affect the privacy guarantees of \SparseVector. However, the algorithm would not be private if the dynamic thresholds also depend on the data. Note that \SAFFRON\ never loses wealth when testing candidate $p$-values with $p_j\le \lambda_j$, and the threshold $\alpha_j$ depends on the data since it is based on current wealth. We remove such dependence in \PSAFFRON\ by losing wealth at every step regardless of whether we test a candidate $p$-values, similar to LORD. This will result in stricter FDR control (and potentially weaker power) because our wealth decays faster. 

Similar to prior work on private offline FDR control \cite{dwork2018differentially}, we use {\em multiplicative sensitivity} as described in Definition \ref{def.multsens}, as $p$-values may have high sensitivity and require unacceptably large noise to be added to preserve privacy.  We assume that our input stream of $p$-values $p_1,p_2,\ldots,$ each has multiplicative sensitivity $(\eta,\mu)$. As long as $\mu$ is small enough (i.e., less than the rejection threshold), we can treat the logarithm of the $p$-values as the queries with additive sensitivity $\eta$.  Because of this change, we must make rejection decisions based on the logarithm of the $p$-values, so our reject condition is $\log p_t + Z_t \leq \log \alpha_t + Z_{\alpha}$ for Laplace noise terms $Z_t, Z_{\alpha}$ drawn from the appropriate distributions.

The accuracy guarantees of \SparseVector\ ensure that if a value is reported to be below threshold, then with high probability it will not be more than $\alpha_{SV}$ above the threshold.  However, to ensure that our algorithm satisfies the desired bound $FDR\leq \alpha$, we require that reports of ``below threshold'' truly do correspond to $p$-values that are below the desired threshold $\alpha_t$.  To accommodate this, we shift our rejection threshold $\log \alpha_t$ down by a parameter $A$.
$A$ is chosen such that the algorithm satisfies $(\varepsilon, \delta)$-differential privacy, but the choice can be seen as inspired by the $\alpha_{SV}$-accuracy term of \SparseVector\ as given in Theorem \ref{thm.atacc}. 
Therefore our final reject condition is $\log p_t + Z_t \leq \log \alpha_t - A + Z_{\alpha}$. This ensures that ``below threshold'' reports are below $(\log \alpha_t - A) + \alpha_{SV} \approx \log \alpha_t$ with high probability.  Empirically, we see that the bound of $A$ in Theorem \ref{thm.priv} may be overly conservative and lead to no hypotheses being rejected, so we allow an additional scaling parameter $s$ that will scale the magnitude of shift by a factor of $s$.  The conservative bounds of Theorem \ref{thm.priv} correspond to $s=4$, but in many scenarios a smaller value of $s=1$ or $2$ will lead to better performance while still satisfying the privacy guarantee.  Further guidance choosing this shift parameter is given in Section \ref{sec:shift}. 

Even with these modifications, a naive combination of \SparseVector\ and \SAFFRON\ would still not satisfy differential privacy. This is due to the \emph{candidacy indicator} step of the algorithm.  In the \SAFFRON\ algorithm, a pre-processing candidacy step occurs before any rejection decisions.  This step checks whether each $p$-value $p_t$ is smaller than a loose upper bound $\lambda_t$ on the eventual reject threshold $\alpha_t$.  The algorithm chooses $\alpha_t$ using an $\alpha$-investing rule that depends on the number of candidate hypotheses seen so far, and ensures that $\alpha_t \leq \lambda_t$, so only hypotheses in this candidate set can be rejected. These $\lambda$ values are used to control $\widehat{\text{FDP}}_{\text{SAFFRON}}(t)$, which serves as a conservative overestimate of FDP$(t)$. (For a discussion of how to choose $\lambda_t$, see Lemma \ref{lemma1} or our experimental results in Section \ref{s.exper}. Reasonable choices would be $\lambda_t=\alpha_t$ or a small constant such as $0.2$.)

Without adding noise to the candidacy condition, there may be neighboring databases with $p$-values $p_t, p_t'$ for some hypothesis such that $\log p_t < \log \lambda_t < \log p_t'$, and hence the hypothesis would have positive probability of being rejected under the first database and zero probability of rejection under the neighbor.  This would violate the $(\epsilon,0)$-differential privacy guarantee intended under \SparseVector. If we were to privatize the condition for candidacy using, for example, a parallel instantiation of \SparseVector, then we would have to reuse the same realizations of the noise when computing the rejection threshold $\alpha_t$ to still control FDP, but this would no longer be private. 

Since we cannot add noise to the candidacy condition, we weaken it in \PSAFFRON\ to be $\log p_t < \log 2\lambda_t$. Then if a hypothesis has different candidacy results under neighboring databases and the multiplicative sensitivity $\eta$ is small, then the hypothesis is still extremely unlikely to be rejected even under the database for which it was candidate. To see this, consider a pair of neighboring databases that induce $p$-values where $\log p_t < \log 2\lambda_t < \log p_t'$.  Due to the multiplicative sensitivity constraint, we know that $\log p_t \geq \log 2\lambda_t -\eta$. Plugging this into the rejection condition $\log p_t + Z_t \leq \log \alpha_t - A + Z_{\alpha}$, we see that we would need the difference of the noise terms to satisfy $Z_t - Z_{\alpha} \leq \log \frac12 -A + \eta$, which by analysis of the Laplace distribution, will happen with exponentially small probability in $n$ when $\eta=\poly^{-1}(n)$.\footnote{Such values of $\eta$ are typical; see examples in Section \ref{s.exper} where $\eta=\frac{1}{\sqrt{n}}$. The shift term $A$ also has dependence on $\eta$ which contributes to the bound.}  Our \PSAFFRON\ algorithm is thus $(\epsilon, \delta)$-differentially private, and we account for this failure probability in our (exponentially small) $\delta$ parameter, as stated in Theorem \ref{thm.priv}. 

Our \PSAFFRON\ algorithm allows analysts to specify a maximum number of hypotheses tested $k$ and rejections $c$. We require a bound on the maximum number of hypotheses tested because the accuracy guarantees of \SparseVector\ only allows exponentially (in the size of the database) many queries to be answered accurately. Once the total number of rejections reaches $c$, the algorithm will fail to reject all future hypotheses. We do not halt the algorithm as in \SparseVector\, and therefore, \PSAFFRON\  does not have a stopping criterion, and we can safely talk about the FDR control at any fixed time, just like \SAFFRON.

Our algorithm also controls at each time $t$,
$\widehat{\text{FDP}}_{\PSAFFRON}(t)\le \frac{\sum_{j\le t}\alpha_t\frac{I(p_j>2\lambda_j)}{1-2\lambda_j}}{|\mathcal{R}(t)|}$.
We note that this is equivalent to $\widehat{\text{FDP}}_{\text{SAFFRON}}(t)$ by scaling down $\lambda_j$ by a factor of 2.  By analyzing and bounding this expression, we achieve FDR bounds for our \PSAFFRON\ algorithm, as stated in Theorem \ref{thm.fdr}.

\begin{restatable}{theorem}{privthm}\label{thm.priv} 
For any stream of p-values $\{p_1, p_2, \ldots\}$, \PSAFFRON\ is $(\eps,\delta)$-differentially private. 
\end{restatable}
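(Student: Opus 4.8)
The plan is to reduce the analysis to the privacy proof of \SparseVector, applied not to the raw $p$-values but to the transformed queries $\log p_t$, which by the multiplicative sensitivity assumption have additive sensitivity $\eta$. Two features take \PSAFFRON\ outside the vanilla \SparseVector\ template, and I would handle them separately: (i) the rejection thresholds $\log\alpha_t-A$ are dynamic rather than fixed, and (ii) the data-dependent candidacy gate $C_t=I(\log p_t<\log 2\lambda_t)$ decides whether step $t$ is even eligible for rejection. I expect the pure $\eps$ loss to come entirely from the \SparseVector-style coupling on the non-disagreeing steps, and the additive $\delta$ to come entirely from the candidacy gate; the heart of the argument is showing these two pieces do not interfere. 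Formally, I would define a \emph{bad event} $B$ (a rejection occurring on $D$ at a step whose candidacy decision disagrees across $D,D'$), show $\Pr_D[B]\le\delta$, and show $\Pr[\PSAFFRON(D)\in S,\,B^c]\le e^{\eps}\Pr[\PSAFFRON(D')\in S]$ for every output set $S$; adding these gives the claim.

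For the $\eps$ part I would fix a neighboring pair $D,D'$ and work off $B$. There, every step eligible for rejection on $D$ is handled identically on $D'$: steps where candidacy agrees are genuine \SparseVector\ queries, and steps where candidacy disagrees produce $R_t=0$ on both runs and --- crucially, because the thresholds in \PSAFFRON\ are computed from the public rejection history $\tau_j$ alone (with $\lambda_t$ a fixed sequence, as in the recommended instantiations) rather than from the $C_t$'s as in \SAFFRON\ --- do not perturb the downstream thresholds. Hence the dynamic thresholds are data-independent given the output and the standard coupling goes through: fix the realizations of $Z_t$ on the non-rejection steps, then absorb the sensitivity-$\eta$ shift in $\log p_t$ using the threshold noise $Z_\alpha^{\text{count}}\sim\Lap(2\eta c/\eps)$ (a shift of $\eta$ costs a density factor $e^{\eps/(2c)}$, and there are at most $c$ thresholds, giving $e^{\eps/2}$) and the query noise $Z_t\sim\Lap(4\eta c/\eps)$ on the at most $c$ rejected steps (a net shift of $2\eta$ costs $e^{\eps/(2c)}$ per rejection, again $e^{\eps/2}$). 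Multiplying yields the factor $e^{\eps}$.

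The main obstacle is the candidacy gate, and this is where $\delta$ and the shift $A$ enter. Consider a step $t$ at which candidacy disagrees in the relevant direction, $C_t=1$ on $D$ but $C_t=0$ on $D'$, so $R_t=1$ is possible on $D$ yet forced to $0$ on $D'$; this is the only way the coupling can fail. Here I would invoke multiplicative sensitivity: $C_t=0$ on $D'$ means $\log p_t(D')\ge\log 2\lambda_t$, hence $\log p_t(D)\ge\log 2\lambda_t-\eta$. Plugging this into the rejection condition $\log p_t(D)+Z_t\le\log\alpha_t-A+Z_\alpha^{\text{count}}$ and using $\alpha_t\le\lambda_t$ (which I would verify from the alpha-investing recursion, so that $\log\alpha_t-\log 2\lambda_t\le\log\tfrac12$) shows that a rejection on $D$ forces $Z_t-Z_\alpha^{\text{count}}\le\log\tfrac12-A+\eta$. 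Since $Z_t-Z_\alpha^{\text{count}}$ is a difference of two Laplace variables with scale $O(\eta c/\eps)$, this event has probability exponentially small in $A\eps/(\eta c)$.

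Finally, I would calibrate $A$ so that a union bound over the at most $k$ steps drives $\Pr_D[B]$ below $\delta$; the stated value $A=\frac{sc\eta}{\eps}\log\frac{2}{3\min\{\delta,\,1-((1-\delta)/\exp(\eps))^{1/k}\}}$ with $s=4$ is exactly the threshold at which the Laplace tail, after accounting for the interaction with the $e^{\eps}$ multiplier from the first part, sums to $\delta$ across all steps. Combining the $e^{\eps}$ bound off $B$ with the $\le\delta$ mass on $B$ gives $\Pr[\PSAFFRON(D)\in S]\le e^{\eps}\Pr[\PSAFFRON(D')\in S]+\delta$, which is the claim. I expect the two delicate points to be (a) justifying that only output-dependent (not $C$-dependent) quantities enter the thresholds, so that disagreement steps that do not reject cannot silently desynchronize the two runs, and (b) matching the constants in the Laplace-tail and union-bound step to the precise closed form of $A$.
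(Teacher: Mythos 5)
Your proposal is correct and follows essentially the same route as the paper's proof: the $\eps$ factor comes from the \SparseVector-style coupling with dynamic, rejection-history-dependent thresholds (shifting the integration variable by $\eta$ to pay $e^{\eps/2c}$ per threshold and per rejection), and the $\delta$ comes from candidacy-disagreement steps, where multiplicative sensitivity forces $\log p_t \ge \log 2\lambda_t - \eta$ and the shift $A$ makes the Laplace tail event $Z_t - Z_\alpha \le \log\tfrac12 - A + \eta$ small enough that both required conditions (the paper's \eqref{eq.case2} and \eqref{eq.case1}, matching your rejection-tail and all-zeros calibrations) hold. Your bad-event decomposition is only an organizational variant of the paper's case analysis, and your calibration of $A$ (with $s=4$) and your caveat that $\lambda_t$ must not depend on the data-dependent candidacy indicators both match the paper.
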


As a starting point, our privacy comes from \SparseVector, but as discussed above, many crucial modifications are required.
To briefly summarize the key considerations, we must handle different thresholds at different times, multiplicative rather than additive sensitivity, a modified notion of the candidate set, and introducing a small delta parameter to account for the new candidate set definition and the shift.
The proof of Theorem~\ref{thm.priv} appears in Appendix~\ref{privacy.proofs}.

Next we describe the theoretical guarantees of FDR control for our private algorithm \PSAFFRON\, which is an analog of Theorem \ref{thm.saffron}. 
We modify \new{the notation of} the conditional super-uniformity assumption of \SAFFRON\ to incorporate the added Laplace noise. \new{The conditions are otherwise identical. (See \eqref{assump.saffron} in Appendix \ref{app.algs} for comparison.) We note that independent $p$-values is a special case of conditional super-uniformity, but this requirement more generally allows for a broader class of dependencies among $p$-values.}
Let $R_j:=I(p_j+Z_j\le\alpha_j+Z_\alpha)$ be the rejection decisions, and let $C_j:=I(p_j\le 2\lambda_j)$ be the indicators for candidacy. 
We let $\alpha_t:=f_t(R_1,\ldots,R_{t-1},C_1,\ldots,C_{t-1})$, where $f_t$ is an arbitrary function of the first $t-1$ indicators for rejections and candidacy. Define the filtration formed by the sequences of $\sigma$-fields $\mathcal{F'}^t:=\sigma(R_1,\ldots,R_t,C_1,\ldots,C_t, Z_1,\ldots,Z_t,Z_\alpha)$. The null $p$-values are conditionally super-uniformly distributed with respect to the filtration $\mathcal{F'}$ if when the null hypothesis $H_i$ is true, then $\Pr(p_t\le \alpha_t|\mathcal{F'}^{t-1})\le \alpha_t$.
\new{We emphasize that this condition is only needed for FDR control, and that our privacy guarantee of Theorem~\ref{thm.priv} holds for arbitrary streams of $p$-values, even those which do not satisfy conditional super-uniformity.}

Our FDR control guarantees for \PSAFFRON\ mirror those of \SAFFRON\ (Theorem \ref{thm.saffron}).  The first two statements apply if $p$-values are conditionally super-uniform, and the last two statements apply if the $p$-values are additionally independent under the null. The proof of Theorem~\ref{thm.fdr} appears in Appendix~\ref{sec.fdr-appendix}.

\begin{restatable}{theorem}{fdrthm}\label{thm.fdr} 
If the null $p$-values are conditionally super-uniformly distributed, then we have:\\
	(a) $\E{\sum_{j\le t, j\in \mathcal{H}^0}\alpha_j\frac{I(p_j>2\lambda_j)}{1-2\lambda_j}}\ge \E{|\mathcal{H}^0\cap \mathcal{R}(t)|}$;\\
  (b)The condition $\widehat{\text{FDP}}_{\PSAFFRON}(t)\le \alpha$ for all $t \in \mathbb{N}$ implies that mFDR$(t)\le \alpha+\delta t$ for all $t \in  \mathbb{N}$.\\
\noindent If the null $p$-values are independent of each other and of the non-null $p$-values, and $\{\alpha_t\}$ and $\{\lambda_t\}$ are coordinate-wise non-decreasing functions of the vector $R_1,\ldots, R_{t-1},C_1,\ldots, C_{t-1}$, then \\
  (c) $\E{\widehat{\text{FDP}}_{\PSAFFRON}(t)}\ge \E{FDP(t)} := FDR(t)$ for all $t \in \mathbb{N}$;\\
  (d) The condition $\widehat{\text{FDP}}_{\PSAFFRON}(t)\le \alpha$ for all $t$ implies that $FDR(t)\le \alpha+\delta t$ for all $t \in \mathbb{N}$.
\end{restatable}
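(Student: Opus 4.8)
The plan is to follow the proof of SAFFRON's guarantee (Theorem~\ref{thm.saffron}) statement by statement, isolating the only two features that are genuinely new in \PSAFFRON: the factor-of-two weakening of the candidacy test (from $\lambda_j$ to $2\lambda_j$) and the Laplace perturbations $Z_j, Z_\alpha$ in the rejection rule. The factor of two is essentially cosmetic: because $\widehat{\text{FDP}}_{\PSAFFRON}$ is obtained from $\widehat{\text{FDP}}_{\text{SAFFRON}}$ by replacing $\lambda_j$ with $2\lambda_j$, every appeal SAFFRON makes to $\Pr(p_j \le \lambda_j) \le \lambda_j$ becomes an appeal to $\Pr(p_j \le 2\lambda_j) \le 2\lambda_j$, which is again just conditional super-uniformity with respect to $\mathcal{F'}$. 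So the real work lies in absorbing the noise.

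For part (a) I would condition on $\mathcal{F'}^{j-1}$ and argue term by term over the null indices. The estimator contribution satisfies $\E{\frac{I(p_j > 2\lambda_j)}{1-2\lambda_j} \mid \mathcal{F'}^{j-1}} = \frac{\Pr(p_j > 2\lambda_j \mid \mathcal{F'}^{j-1})}{1-2\lambda_j} \ge 1$ by super-uniformity and the $\mathcal{F'}^{j-1}$-measurability of $\lambda_j$, so after multiplying by the measurable $\alpha_j$ its conditional expectation is at least $\alpha_j$. It then remains to bound the conditional rejection probability $\E{R_j \mid \mathcal{F'}^{j-1}}$ by $\alpha_j$. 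Here I would use that $Z_\alpha$ and $Z_1,\ldots,Z_{j-1}$ are $\mathcal{F'}^{j-1}$-measurable while $Z_j$ is a fresh independent draw: integrating over $Z_j$ and applying super-uniformity pointwise for each fixed value of $Z_j$ (valid since $Z_j$ is independent of $p_j$), the null rejection probability reduces, in the log-domain form actually executed by the algorithm, to $\alpha_j\,\mathbb{E}_{Z_j}[e^{-A+Z_\alpha-Z_j}]$, which the Laplace moment generating function evaluates in closed form, and the shift $A$ is precisely what offsets this moment factor. Part (c) follows the same leave-one-out comparison SAFFRON uses under independence: for each null $j$ one replaces $p_j$ by a draw forcing non-candidacy and uses the assumed coordinatewise monotonicity of $\{\alpha_t\}$ and $\{\lambda_t\}$ to argue the rejection set can only grow; independence of the null $p$-values lets each fresh $Z_j$ be handled by the same pointwise integration.

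For parts (b) and (d) I would combine (a), respectively (c), with the deterministic budget guarantee $\widehat{\text{FDP}}_{\PSAFFRON}(t) \le \alpha$ enforced by the $\alpha$-investing update, exactly as in SAFFRON. From (a), $\E{|\mathcal{H}^0 \cap \mathcal{R}(t)|} \le \E{\sum_{j\le t}\alpha_j \frac{I(p_j>2\lambda_j)}{1-2\lambda_j}} \le \alpha\,\E{|\mathcal{R}(t)|}$, which gives $\text{mFDR}(t)\le\alpha$ in the noiseless skeleton, and the analogous chain from (c) gives $\text{FDR}(t)\le\alpha$. The extra $\delta t$ is where privacy and accuracy meet: I would identify, at each step $j$, the same low-probability noise event used in the privacy proof — the tail event in which $Z_j - Z_\alpha$ is large enough to reject a $p$-value the shifted noiseless rule would not — which the choice of $A$ forces to have probability at most $\delta$. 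On its complement the chains above go through verbatim, and since the false discovery proportion never exceeds $1$, the bad events contribute at most their total probability to the expectation; a union bound over the at most $t$ steps bounds this contribution by $\delta t$, yielding the stated $\alpha + \delta t$.

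The step I expect to be the main obstacle is the null-rejection bound inside part (a): making precise that the fresh Laplace noise does not inflate the conditional null-rejection probability beyond $\alpha_j$ up to the $\delta$ budget, i.e.\ verifying that the privacy-driven shift $A$ dominates both the Laplace moment factor $\mathbb{E}[e^{-Z_j}]$ and the $\mathcal{F'}^{j-1}$-measurable realizations of $Z_\alpha$. Closely related is the bookkeeping that reconciles the idealized rule $R_j = I(p_j + Z_j \le \alpha_j + Z_\alpha)$ of the statement with the log-domain, $A$-shifted, candidacy-gated rule of the algorithm, and that tracks the per-step failure probability $\delta$ cleanly through to the $\delta t$ term; this is where I would concentrate the rigor.
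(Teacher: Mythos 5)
Your overall skeleton is right, and your treatment of parts (b) and (d) --- combine the per-term bounds with the budget constraint $\widehat{\text{FDP}}_{\PSAFFRON}(t)\le\alpha$, then charge a per-step bad noise event of probability at most $\delta$ and union-bound over $t$ steps --- is essentially what the paper does. The factor-of-two remark about candidacy is also correct. The genuine gap is in your proposed mechanism for part (a). You want to show $\E{R_j\mid\mathcal{F'}^{j-1}}\le\alpha_j$ by integrating out $Z_j$ and writing the conditional null-rejection probability as $\alpha_j\,\mathbb{E}_{Z_j}[e^{-A+Z_\alpha-Z_j}]$, with $A$ chosen to ``offset the moment factor.'' This fails for three reasons. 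First, $Z_\alpha$ is $\mathcal{F'}^{j-1}$-measurable, so under the conditioning it is a fixed realization that can be arbitrarily large; no deterministic shift $A$ can dominate it pointwise, and indeed the conditional bound $\E{R_j\mid\mathcal{F'}^{j-1}}\le\alpha_j$ is simply false on the event that $Z_\alpha$ is large. Second, $\mathbb{E}[e^{-Z_j}]$ for $Z_j\sim\Lap(4\eta c/\eps)$ is infinite unless $4\eta c<\eps$, a parameter restriction the theorem does not assume. Third, super-uniformity only gives $\Pr(p_j\le u)\le u$ for $u\in[0,1]$, so the bound must be truncated to $\min\{\alpha_j e^{Z_\alpha-Z_j-A},1\}$; the untruncated MGF expression overstates what super-uniformity delivers.

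The paper sidesteps all of this with an elementary event inclusion: $\{\log p_j+Z_j\le\log\alpha_j-A+Z_\alpha\}\subseteq\{p_j\le\alpha_j\}\cup\{Z_j\le Z_\alpha-A\}$. The first event is handled by super-uniformity exactly as in SAFFRON (giving the $\alpha_j$ term, which is then tied to the estimator), and the second is bounded \emph{unconditionally}, over the joint law of $Z_j$ and $Z_\alpha$, by an explicit Laplace computation (the paper's Lemma~\ref{lemma2}) showing $\Pr(Z_j\le Z_\alpha-A)\le\frac{2}{3}\exp(-\frac{A\eps}{4\eta c})\le\delta$ by the choice of $A$; this is where the additive $\delta t$ comes from, not from any moment cancellation. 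Your own fallback in the (b)/(d) discussion is exactly this decomposition, so the repair is to apply it inside (a) as well and drop the MGF route. For (c)/(d) you correctly identify the leave-one-out hallucination, but note the paper needs both directions of its Lemma~\ref{lemma1} --- one hallucination setting $p_t=1$ to force non-candidacy, and a second setting $p_t=0$ to force rejection, producing the majorant $\min\{\alpha_t e^{Z_\alpha-Z_t-A},1\}$ --- followed by the same two-case split on $\{Z_j\le Z_\alpha-A\}$; your sketch omits the second hallucination and the truncation, which are needed to make the independence argument go through.
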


Relative to the non-private guarantees of Theorem~\ref{thm.saffron}, the FDR bounds provided by \PSAFFRON\ are weaker by an additive of $\delta t$. \new{In most differential privacy applications, $\delta$ is typically required to be cryptographically small (i.e., at most negligible in the size of the database) \cite{dwork2014algorithmic}, so this additional term should have a minuscule effect on the FDR.\footnote{Alternatively, $\delta$ could be treated like a tunable parameter to balance the tradeoff between privacy and FDR control. If an analyst has an upper bound on the allowable slack in FDR, say 0.01, then she could set $\delta=0.01/t$ to ensure her desired bound.} We note that $\epsilon$ plays a role in the analysis of Theorem \ref{thm.fdr}, although it does not appear in FDR bounds.  See \eqref{eq.lap2} in the appendix, where the term with dependence on $\epsilon$ can be upper bounded by $\delta$ for any $\eps>0$.}

The following lemma is a key tool in the proof of Theorem~\ref{thm.fdr}.
Though it is qualitatively similar to Lemma 2 in~\cite{ramdas2018saffron}, it is crucially modified to show an analogous statement holds under the addition of Laplace noise.
Its proof appears in Appendix~\ref{sec.lemma-proof}.

\begin{restatable}{lemma}{lemfirst}\label{lemma1}
	Assume $p_1,p_2,\ldots$ are all independent and let $h:\{0,1\}^k\rightarrow R$ be any coordinate-wise non-decreasing function. Assume $f_t$ and $g_t$ are coordinate-wise non-decreasing functions and that $\alpha_t=f_t(R_{1:t-1},C_{1:t-1})$ and $\lambda_t=g_t(R_{1:t-1},C_{1:t-1})$. Then for any $t\le k$ such that $H_t\in \mathcal{H}^0$, we have $\E{\frac{\alpha_tI(p_t>2\lambda_t)}{(1-2\lambda_t)h(R_{1:k})}|\mathcal{F'}^{t-1}} \ge \E{\frac{\alpha_t}{h(R_{1:k})}|\mathcal{F'}^{t-1}}$ and $ \E{\frac{\min\{\alpha_t\exp(Z_\alpha-Z_t-A),1\}}{h(R_{1:k})}|\mathcal{F'}^{t-1}} \ge \E{\frac{I(\log p_t +Z_t\le \log \alpha_t+Z_\alpha-A)}{h(R_{1:k})}|\mathcal{F'}^{t-1}}$.
\end{restatable}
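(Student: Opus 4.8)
The plan is to prove both inequalities by a leave-one-out coupling that isolates how $h(R_{1:k})$ depends on $p_t$, following Lemma 2 of \cite{ramdas2018saffron} but carrying the extra Laplace noise through. Throughout I fix $t \le k$ with $H_t \in \mathcal{H}^0$ and condition on $\mathcal{F'}^{t-1}$, which renders $\alpha_t$, $\lambda_t$, and $Z_\alpha$ constants; since the $p_j$ are independent and $H_t$ is null, super-uniformity gives $\Pr(p_t \le u \mid \mathcal{F'}^{t-1}) \le u$ for any conditionally-fixed $u \in [0,1]$. Everything else on which $h$ depends, namely $p_{t+1},\ldots,p_k$ and the noise variables, is independent of $p_t$.

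First I would establish the monotone coupling. Fix all randomness except $p_t$ and compare the true decision sequence $(R_j,C_j)_j$ with a downward-forced sequence obtained by setting $p_t = 1$ (so $C_t = R_t = 0$) and an upward-forced sequence obtained by setting $p_t = 0$ (so $C_t = R_t = 1$). Because $f_j$ and $g_j$ are coordinate-wise non-decreasing, an induction on $j>t$ shows that lowering (resp.\ raising) $(R_t,C_t)$ lowers (resp.\ raises) every subsequent $\alpha_j$ and $\lambda_j$, hence every $C_j = I(p_j \le 2\lambda_j)$ and every $R_j = I(\log p_j + Z_j \le \log\alpha_j + Z_\alpha - A)$; since $h$ is non-decreasing, the forced values $h^{(t)}$ (downward) and $\tilde h^{(t)}$ (upward) satisfy $h^{(t)} \le h(R_{1:k}) \le \tilde h^{(t)}$ pointwise, and both are functions of $p_{t+1:k}$ and the noise alone, so both are independent of $p_t$ given $\mathcal{F'}^{t-1}$.

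For inequality (1), observe that on $\{p_t > 2\lambda_t\}$ we have $C_t = R_t = 0$, so the true and downward-forced sequences coincide from time $t$ on and $h(R_{1:k}) = h^{(t)}$ there. Thus $\E{I(p_t>2\lambda_t)/h(R_{1:k}) \mid \mathcal{F'}^{t-1}} = \E{I(p_t>2\lambda_t)/h^{(t)} \mid \mathcal{F'}^{t-1}}$; factoring the independent $p_t$ and using super-uniformity yields the factor $\Pr(p_t > 2\lambda_t \mid \mathcal{F'}^{t-1}) \ge 1-2\lambda_t$, after which $h^{(t)} \le h(R_{1:k})$ lets me replace $1/h^{(t)}$ by $1/h(R_{1:k})$. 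Dividing by $(1-2\lambda_t)$ and multiplying by the constant $\alpha_t \ge 0$ gives exactly (1).

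For inequality (2), I would additionally condition on $Z_t$, making $u_t := \alpha_t\exp(Z_\alpha - Z_t - A)$ a constant and the event $\{\log p_t + Z_t \le \log\alpha_t + Z_\alpha - A\}$ equal to $\{p_t \le u_t\}$, on which $R_t = 1$ and, provided the noisy rejection stays inside the candidacy window so that $C_t = 1$ (which the shift $A$ and the weakened threshold $2\lambda_t$ are designed to guarantee), the true and upward-forced sequences coincide, giving $h(R_{1:k}) = \tilde h^{(t)}$. Then $\E{I(p_t \le u_t)/h(R_{1:k}) \mid \mathcal{F'}^{t-1}, Z_t} = \E{I(p_t \le u_t)/\tilde h^{(t)} \mid \mathcal{F'}^{t-1}, Z_t}$; factoring the independent $p_t$ and applying super-uniformity bounds this by $\min\{u_t,1\}\,\E{1/\tilde h^{(t)} \mid \mathcal{F'}^{t-1}, Z_t}$, and $\tilde h^{(t)} \ge h(R_{1:k})$ upgrades this to $\E{\min\{u_t,1\}/h(R_{1:k}) \mid \mathcal{F'}^{t-1}, Z_t}$. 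Integrating out $Z_t$ recovers (2). The main obstacle is this coupling step: I must verify that coordinate-wise monotonicity propagates jointly through the candidacy rule and the noisy rejection rule, and, most delicately, that a noisy rejection actually entails candidacy (i.e.\ $\{p_t \le u_t\} \subseteq \{p_t \le 2\lambda_t\}$) so that the true and forced sequences genuinely coincide on the rejection event — this is exactly where the weakened $2\lambda$ candidacy window and the downward shift $A$ earn their place — while keeping the $\mathcal{F'}^{t-1}$- versus $Z_t$-measurability straight is what makes the independence factorizations and the super-uniformity bounds legitimate.
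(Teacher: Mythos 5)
Your proposal is correct and follows essentially the same route as the paper's proof: a leave-one-out hallucination of $p_t$ (set to $1$ for the first inequality, $0$ for the second), monotone propagation of the decision sequence through the non-decreasing $f_j,g_j$ and $h$, and conditional super-uniformity applied after factoring out the independent $p_t$. The coupling subtlety you flag — that a noisy rejection must entail candidacy so the true and forced sequences coincide on the rejection event — is exactly the step the paper also relies on (asserting $\bar{C}_t=C_t=1$ there), so your plan matches the published argument.
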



\new{There are no known theoretical bounds on the statistical power of \SAFFRON\, even in the non-private setting.  Instead, we validate power empirically through the experimental results in Section \ref{s.exper}.}

\section{Experiments}
\label{s.exper}
We experimentally compare the FDR and the statistical power of variations of the \PSAFFRON\ and \SAFFRON\ procedures, under different sequences of $\{\lambda_j\}$. Following the convention of \cite{ramdas2018saffron}, we define \PSAFFRON-Alpha-Investing, or \PSAFFRON\ AI, to be the instantiation of Algorithm \ref{algo.psaffron} with the sequence $\lambda_j=\alpha_j$, where the rejection threshold matches the $\alpha$-investing rule, and we use \PSAFFRON\ to denote Algorithm \ref{algo.psaffron} instantiated with a sequence of constant of $\lambda_j$, which in our experiments is $\lambda_j=0.2$. We use $\lambda_j=0.5$ in \SAFFRON.\footnote{Recall from Section 3 that our $\lambda_j$ is equivalent to the $\lambda_j$ in \SAFFRON\ scaling down by a factor of 2.} We generally observe that, even under moderately stringent privacy restrictions, \PSAFFRON\ and its AI variant perform comparably to the non-private alternatives, \new{with \PSAFFRON\ AI typically outperforming \PSAFFRON. This suggests that even though setting $\lambda_j$ as a fixed constant may be easier for implementation, parameter optimization can lead to meaningful performance improvements.} We chose the sequence $\{\gamma_j\}$ to be a constant $1/k$ up to time $k$. Note that the sequence can be decreasing such as of the form $\gamma_j \propto j^{-s}$ in \cite{ramdas2018saffron}, which controls the wealth to be more concentrated around small values of $j$. See \cite{ramdas2018saffron} for more discussion on the choice of $\{\gamma_j\}$. In our experiments, we set  the target FDR level $\alpha+\delta t=0.2$, and thus our privacy parameter $\delta$ is set to be bounded by $0.2/800 = 2.5 \times 10^{-4}$. The maximum number of rejections $c=40$. All the results are averaged over $100$ runs.
We investigate two settings: in Section~\ref{sec:bernoulli}, the observations come Bernoulli distributions, and in Section~\ref{sec:expon}, the observations are generated from truncated exponential distributions.
In Section~\ref{sec:shift}, we discuss our choice of the shift parameter $A$ and give guidance on how to choose this parameter in practice.
Code for \PSAFFRON\ and our experiments is available at \url{https://github.com/wanrongz/PAPRIKA}.

\vspace{-0.2cm}

\subsection{Testing with Bernoulli Observations}\label{sec:bernoulli}
We assume that the database $D$ contains $n$ individuals with $k$ independent features.
The $i$th feature is associated with $n$ i.i.d.\ Bernoulli variables $\xi_1^i,\ldots,\xi_n^i$, each of which takes the value $1$ with probability $\theta_i$, and takes the value $0$ otherwise. Let $t_i$ be the sum of the $i$th features. A $p$-value for testing null hypothesis $H_0^i: \theta_i\le 1/2$ against $H_1^i: \theta_i>1/2$ is given by 
\begin{equation*}
p_i(D)=\sum_{k=t_i}^n\frac{1}{2^n}{n \choose k}.
\end{equation*}
\cite{dwork2018differentially} showed that $p_i$ is $(\mu, \eta)$-multiplicatively sensitive for $\mu=m^{-1-c}$ and $\eta\asymp\sqrt{\frac{\log n}{n}}$, where $m\le \text{poly}(n)$ and $c$ is any small positive constant.

We choose $\theta_i$ for our experiments as follows:
\begin{equation*}
\theta_i=\begin{cases}
0.5 \quad \text{with probability } 1-\pi_1\\
0.75 \quad \text{with probability } \pi_1,
\end{cases}
\end{equation*}
for varying values of $\pi_1$, \new{which represents the expected fraction of non-null hypotheses.  We consider relatively small values of $\pi_1$ as most practical applications of FDR control (such as GWAS studies) will have only a small fraction of true ``discoveries'' in the data.}

In the following experiments, we sequentially test $H_0^i$ versus $H_1^i$ for $i=1, \ldots,k$. We use $n=1000$ as the size of the database $D$, and $k=800$ as the number of features as well as the number of hypotheses. 
Our experiments are run under several different shifts $A$, but due to space constraints, we only report results in the main body with $A=\frac{c\eta}{\eps} \log\frac{2}{3\min\{\delta,1-((1-\delta)/\exp(\eps))^{1/k}\}}$ (i.e., when $s=1$), which still satisfies our privacy guarantee.
Further discussion on the choice of $A$ and additional results under other shift parameters $s$ are deferred to Appendix \ref{sec:shift}. The results are summarized in Figure \ref{fig:example_bernoulli}, which plots the FDR and statistical power against the expected fraction of non-nulls, $\pi_1$. In Figure \ref{fig:example_bernoulli}(a) and (b), we compare our algorithms with privacy parameter $\eps=5$ to the non-private baseline methods of LORD \cite{javanmard2015online, javanmard2018online}, Alpha-investing \cite{aharoni2014generalized}, and \SAFFRON\ and \SAFFRON\ AI from \cite{ramdas2018saffron}. In Figure \ref{fig:example_bernoulli}(c,d) and (e,f), we compare the performance of \PSAFFRON\ AI and \PSAFFRON, respectively, with varying privacy parameters $\eps=3,5,10$. We also list these values in Table \ref{table.bin} in Appendix \ref{app.plots}.



\begin{center}
	\begin{minipage}{0.8\linewidth}
		\begin{figure}[H]
			\centering
			\subfloat[][$\epsilon=5$]{\includegraphics[width=.45\textwidth]{./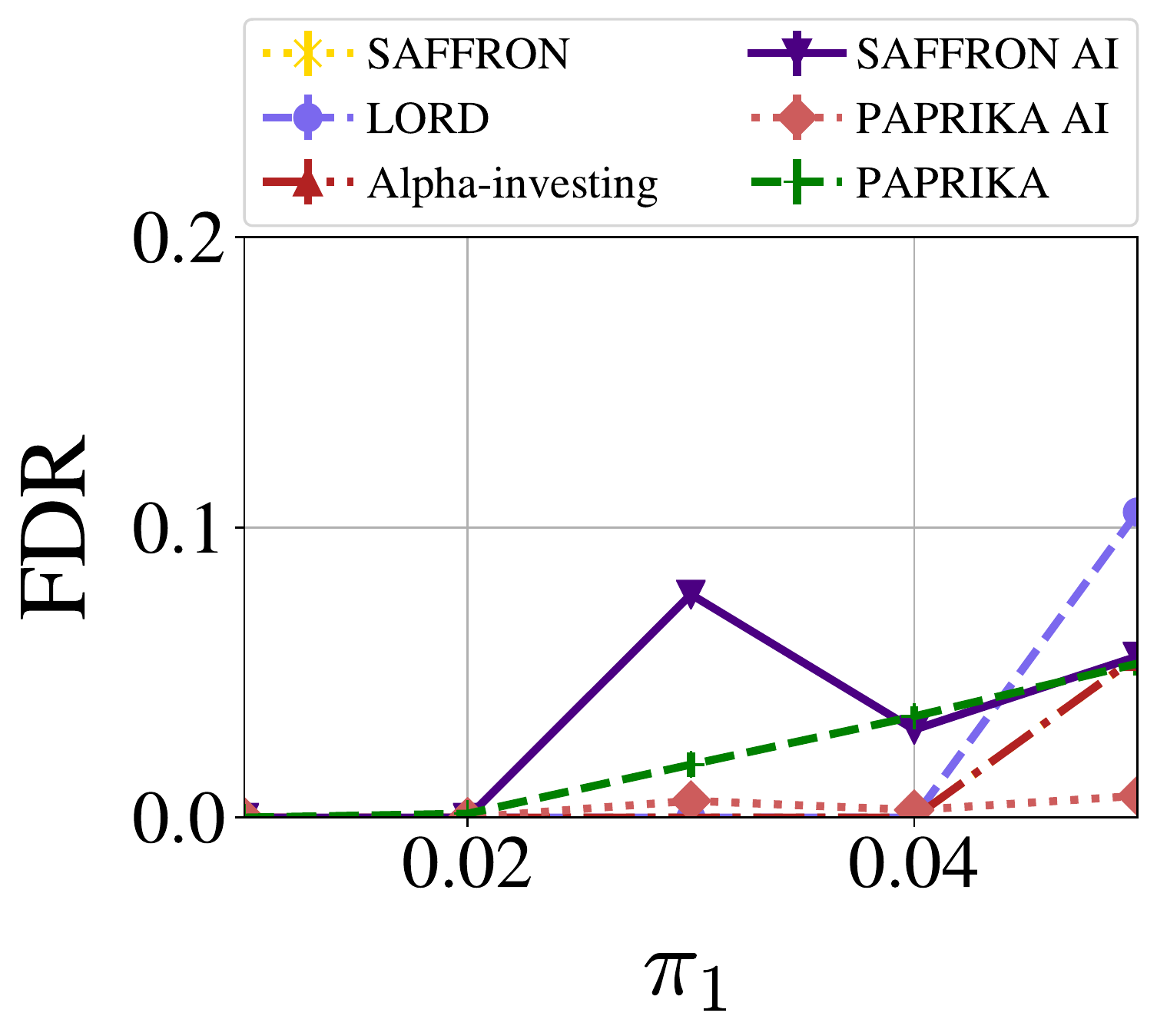}} 
			\subfloat[][$\epsilon=5$]{\includegraphics[width=.45\textwidth]{./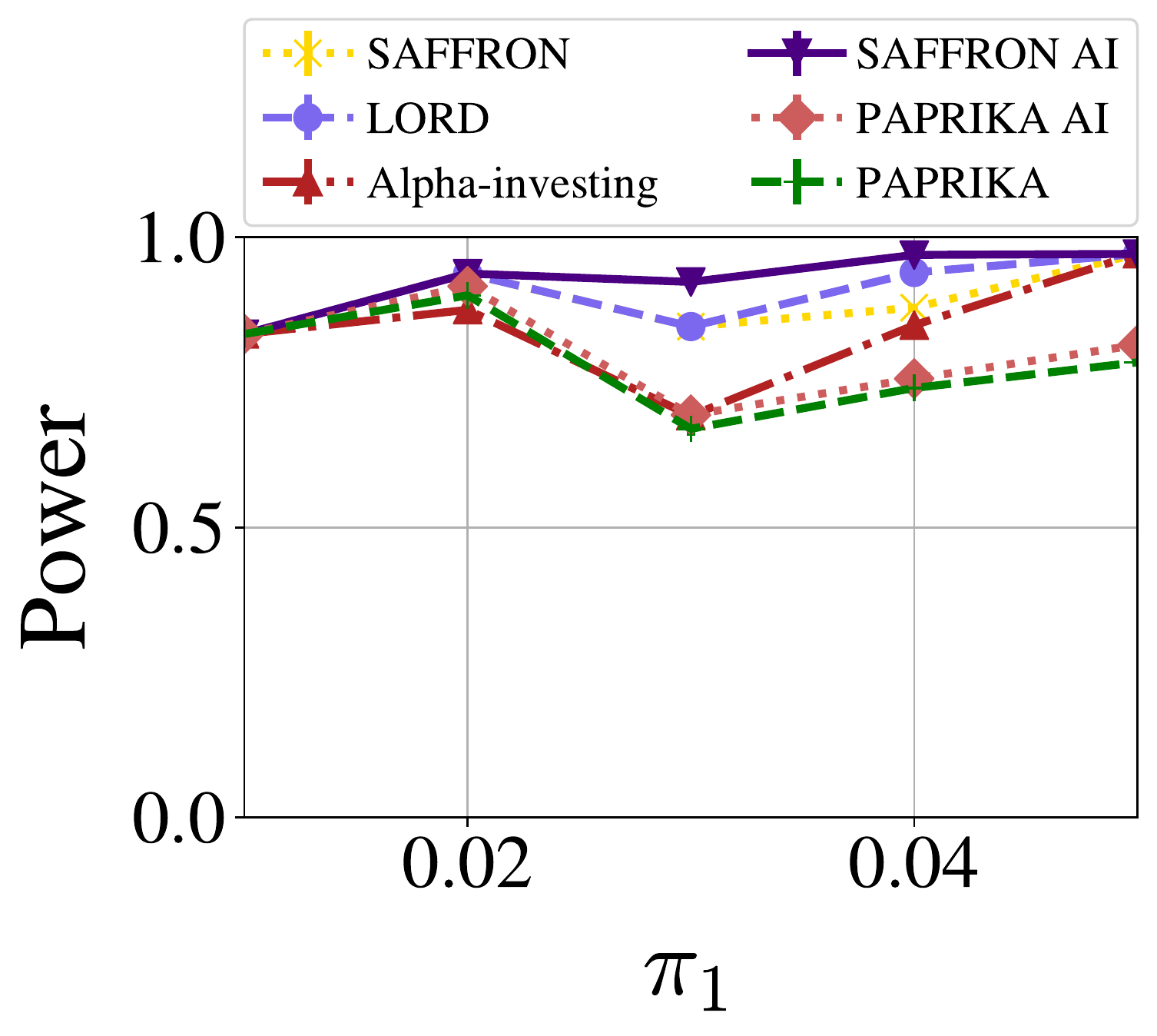}}\\\vspace{-0.2cm}
			\subfloat[][PAPRIKA AI]{\includegraphics[width=.45\textwidth]{./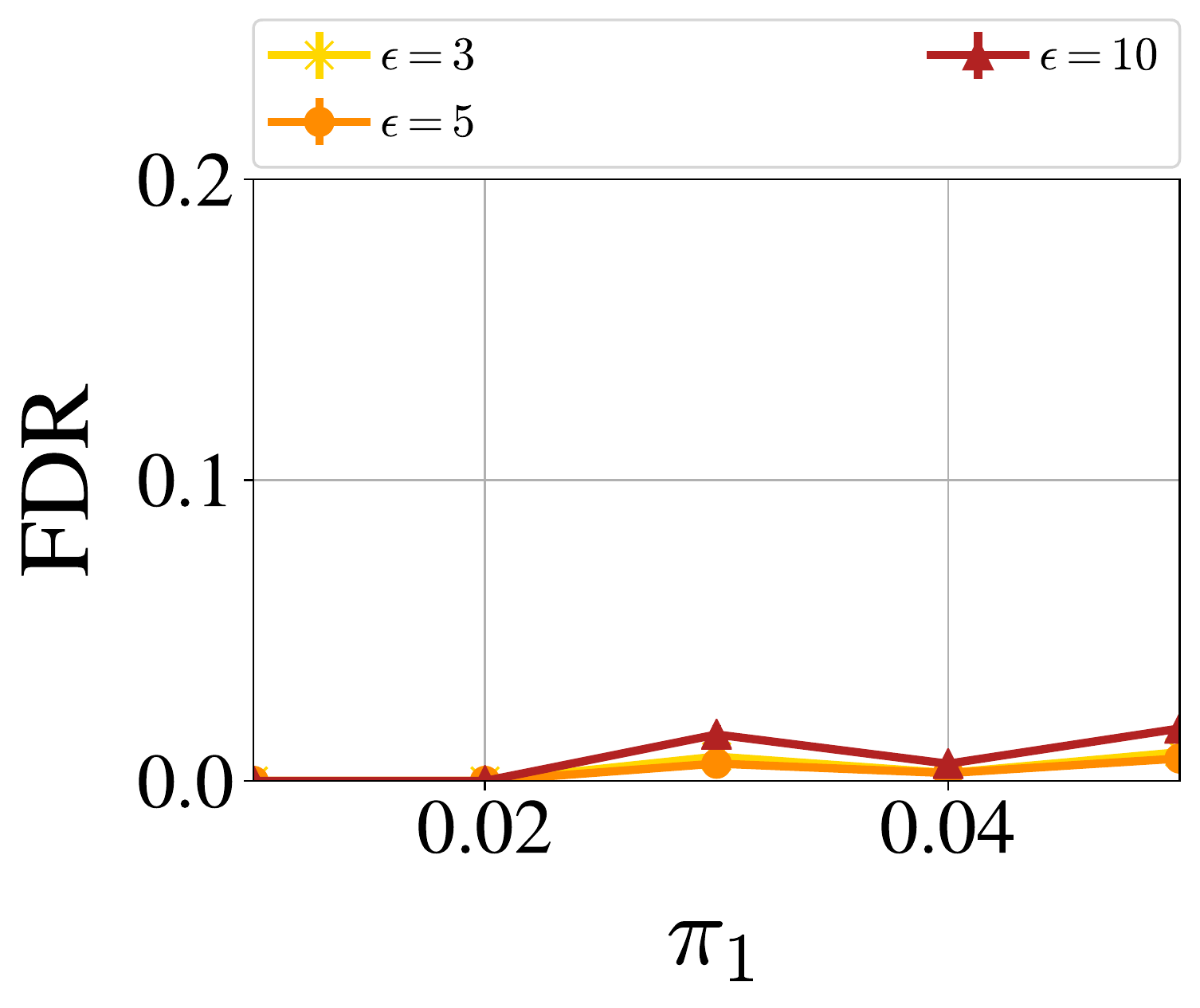}}
			\subfloat[][PAPRIKA AI]{\includegraphics[width=.45\textwidth]{./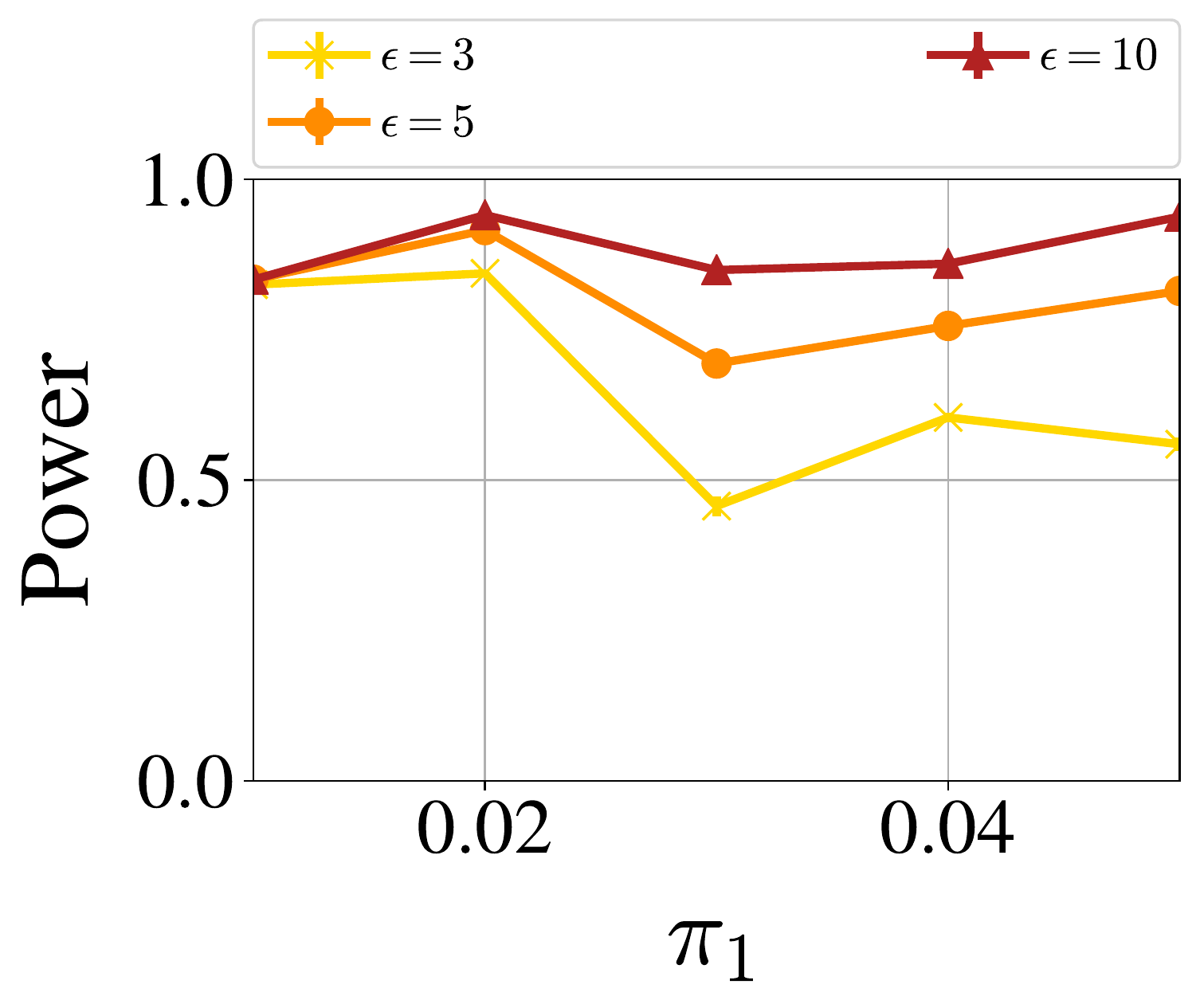}}\\\vspace{-0.2cm}
			\subfloat[][PAPRIKA]{\includegraphics[width=.45\textwidth]{./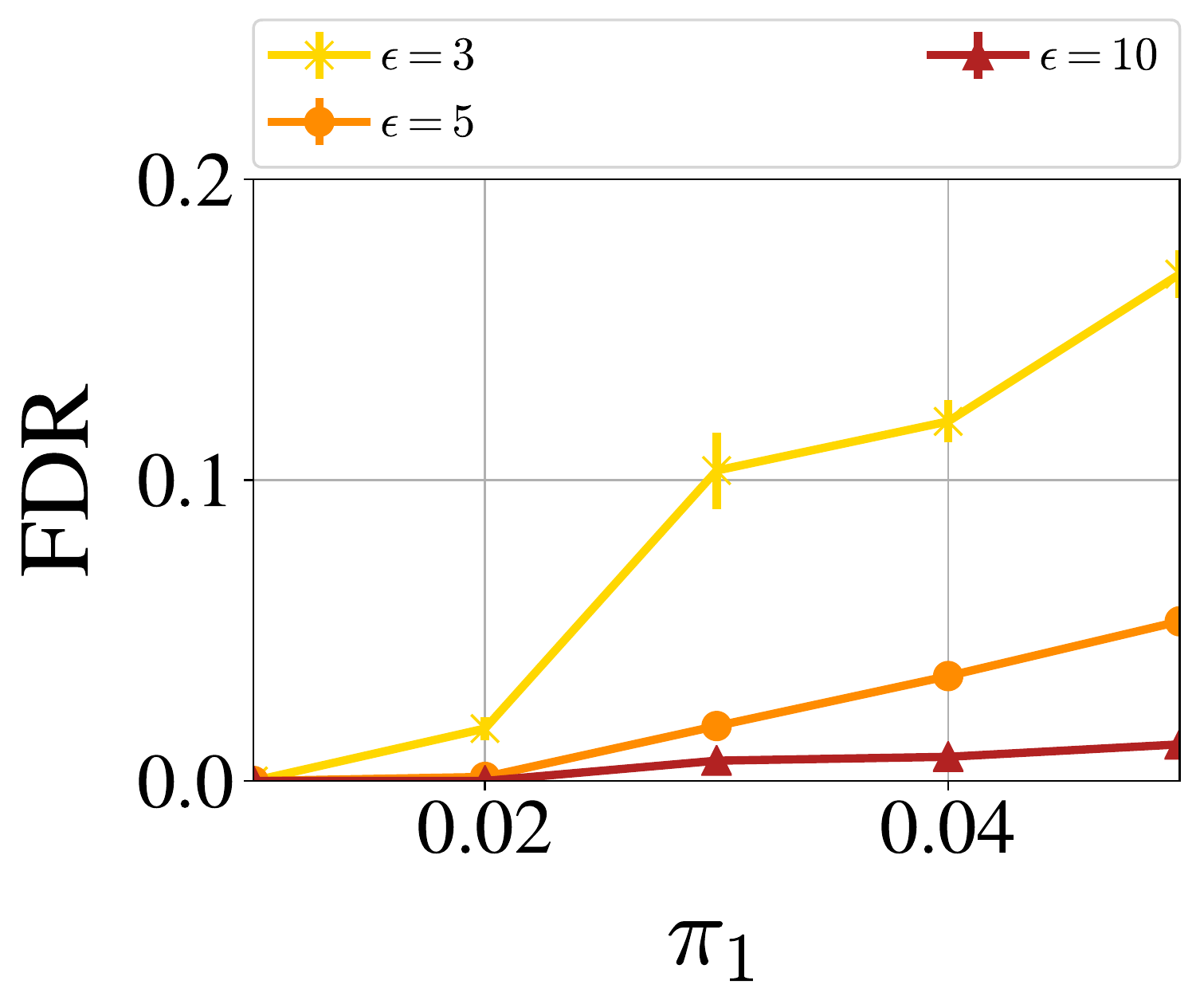}} 
			\subfloat[][PAPRIKA]{\includegraphics[width=.45\textwidth]{./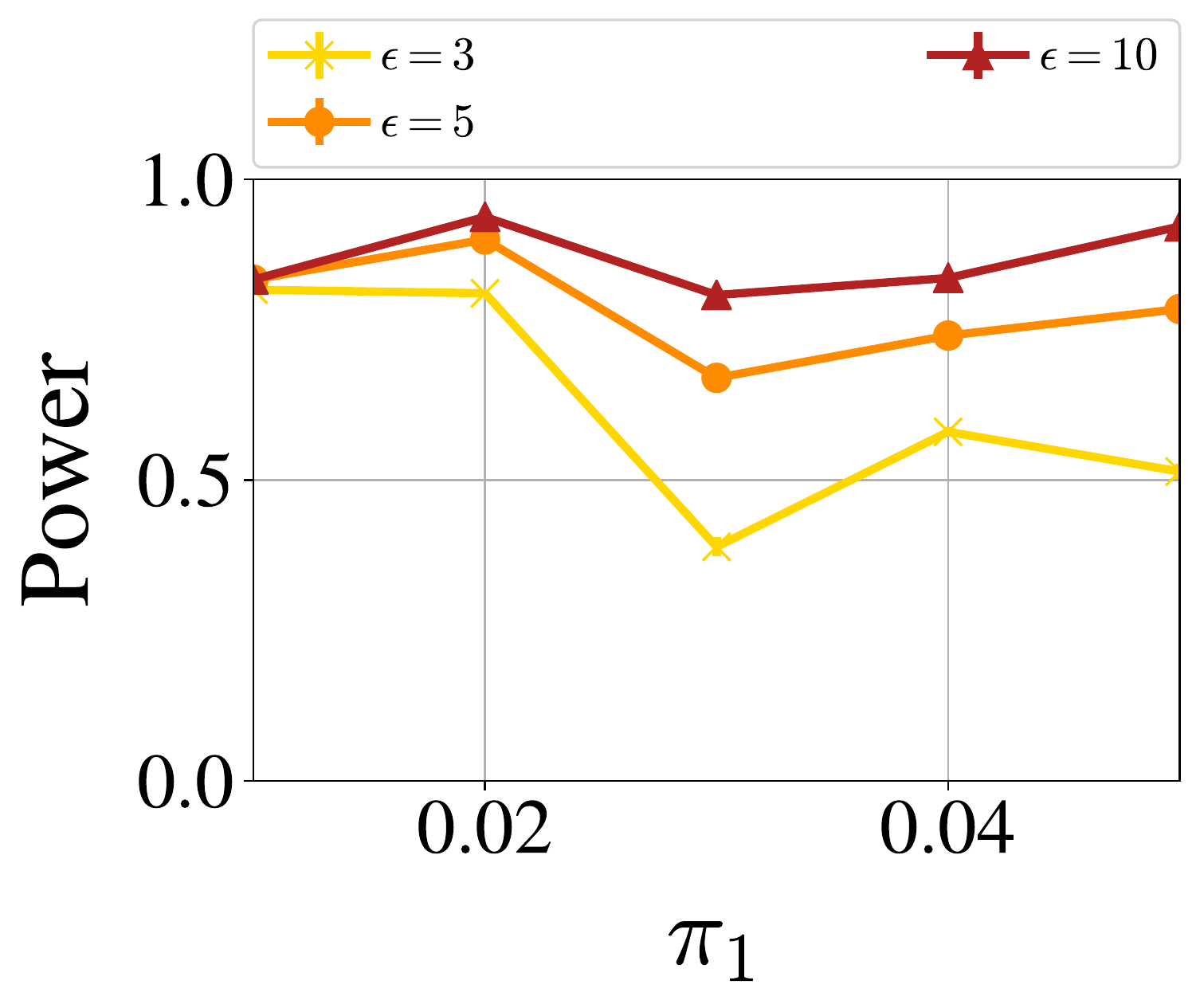}}
			\caption{\small FDR and statistical power versus fraction of non-null hypotheses $\pi_1$ for \PSAFFRON\ (with $\lambda_j=0.2$), \PSAFFRON\ AI (with $\lambda_j=\alpha_j$), and non-private algorithms  when the database consists of Bernoulli observations. 
			} \label{fig:example_bernoulli}
		\end{figure}
	\end{minipage}
\end{center}

As expected, the performance of \PSAFFRON\ generally diminishes as $\eps$ decreases. 
A notable exception is that FDR also decreases in Figure \ref{fig:example_bernoulli}(c). 
This phenomenon is because we set $\lambda_j=\alpha_j$, resulting in a smaller candidacy set and leading to insufficient rejections. 
Surprisingly, \PSAFFRON\ AI also yields a lower FDR than many of the non-private algorithms (Figure \ref{fig:example_bernoulli}(a)), since it tends to make fewer rejections.  We also see that \PSAFFRON\ AI performs dramatically better than \PSAFFRON, suggesting that the choice of $\lambda_j=\alpha_j$ should be preferred to constant $\lambda_j$ to ensure good performance in practice. 

\new{As \PSAFFRON\ is the first algorithm for private online FDR control, there is no private baseline for comparison. In Appendix~\ref{app.plots}, we show that na\"ive Laplace privatization plus \SAFFRON\ is ineffective.}


\vspace{-0.2cm}
\subsection{Testing with Truncated Exponential Observations}\label{sec:expon}
We again assume that the database $D$ contains $n$ individuals with $k$ independent features.
The $i$th feature is associated with $n$ i.i.d.\ truncated exponential distributed variables $\xi_1^i,\ldots,\xi_n^i$, each of which is sampled according to density 
\begin{equation*}
f_i(x \; | \; \theta_i,b)=\frac{\theta_i \exp(-\theta_i x)}{1-\exp(-b\theta_i)}I(0\le x \le b),
\end{equation*}
for positive parameters $b$ and $\theta_i$. 
Let $t_i$ be the realized sum of the $i$th features, and let $T_i$ denote the random variable of the sum of the $n$ truncated exponential distributed variables in the $i$th entry. A $p$-value for testing the null hypothesis $H_0^i: \theta_i=1$ against the alternative hypothesis $H_1^i: \theta_i > 1$ is given by,
\begin{equation*}
p_i(D)=\Pr_{\theta_i=1}(T_i>t_i).
\end{equation*}
\cite{dwork2018differentially} showed that $p_i$ is $(\mu,\eta)$-multiplicatively sensitive for $\mu=m^{-1-c}$ and $\eta\asymp\sqrt{\frac{\log n}{n}}$, where $m\le \text{poly}(n)$ and $c$ is any small positive constant. In the following experiments, we generate our database using the exponential distribution model truncated at $b=1$.
We set $\theta_i $ as follows:

\begin{equation*}
\theta_i=\begin{cases}
1 \quad \text{with probability } 1-\pi_1\\
1.95 \quad \text{with probability } \pi_1,
\end{cases}
\end{equation*}
where we vary the parameter $\pi_1$, corresponding to the expected fraction of non-nulls.

We sequentially test $H_0^i$ versus $H_1^i$ for $i=1,\ldots,k$. We use $n=1000$ as the size of the database $D$, and $k=800$ as the number of features as well as the number of hypotheses. While there is no closed form to compute the $p$-values, the sum of $n=1000$ i.i.d.\ samples is approximately normally distributed by the Central Limit Theorem. 
The expectation and the variance of $\xi^i_j$ with $b=1$ are 
\begin{align*}
\E{\xi^i_j}&=\frac{1}{\theta_i}+\frac{1}{1-\exp(\theta_i)}, \text{ and } \\
\text{Var}[\xi^i_j]&=\frac{1}{\theta_i^2}-\frac{\exp(\theta_i)}{(\exp(\theta_i)-1)^2},
\end{align*}
respectively. 
Therefore, $T_i$ is approximately distributed as $\mathcal{N}(n\E{\xi^i_j},n\text{Var}[\xi^i_j])$, and we compute the $p$-values accordingly.
We run the experiments with shift $A=\frac{c\eta}{\eps} \log\frac{2}{3\min\{\delta,1-((1-\delta)/\exp(\eps))^{1/k}\}}$ (shift magnitude $s=1$). The results are shown in Figure \ref{fig:example_expon}, which plots the FDR and statistical power against the expected fraction of non-nulls, $\pi_1$.

\begin{center}
	\begin{minipage}{0.8\linewidth}
		\begin{figure}[H]
			\centering
			\subfloat[][$\epsilon=5$]{\includegraphics[width=.45\textwidth]{./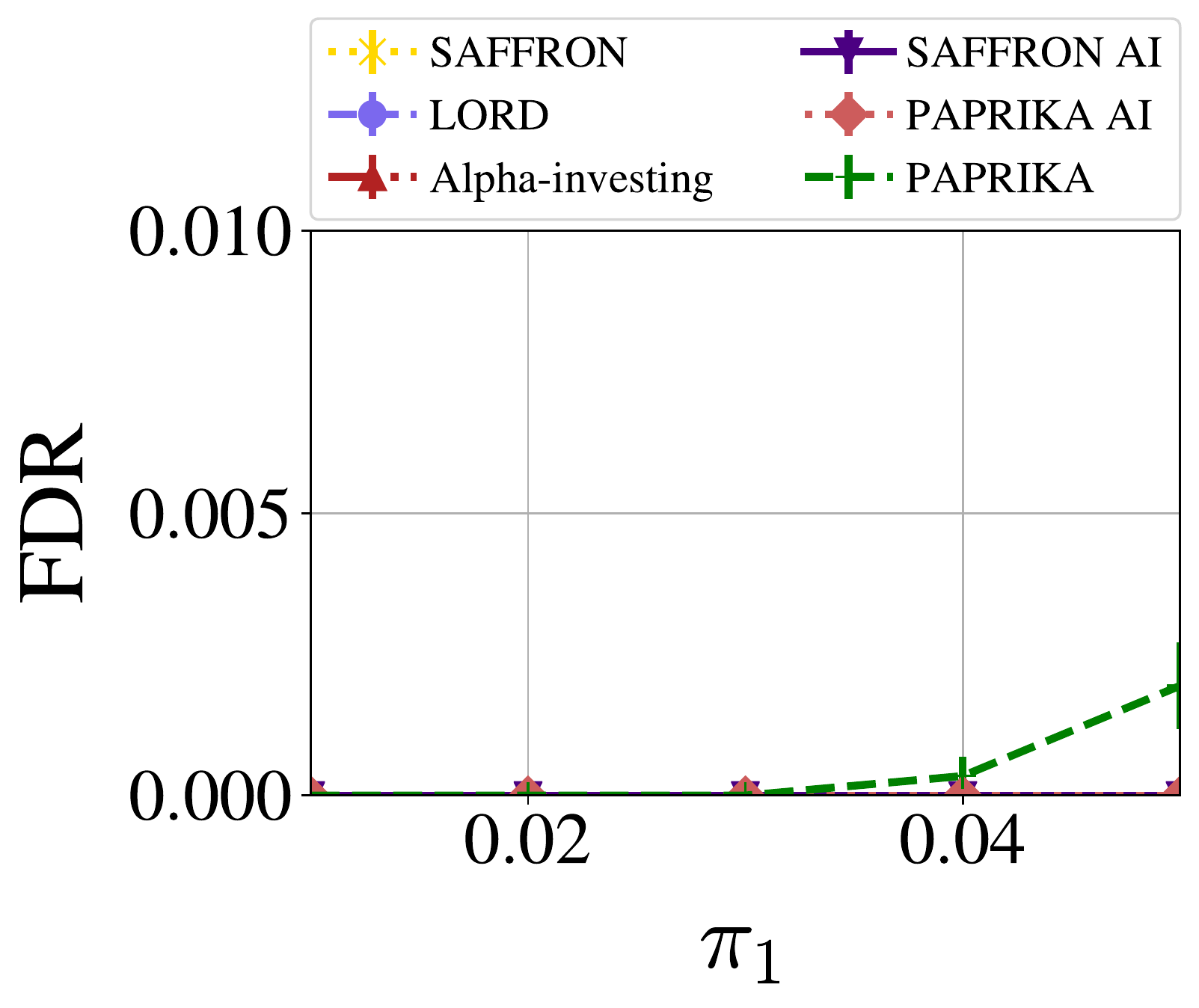}}
			\subfloat[][$\epsilon=5$]{\includegraphics[width=.45\textwidth]{./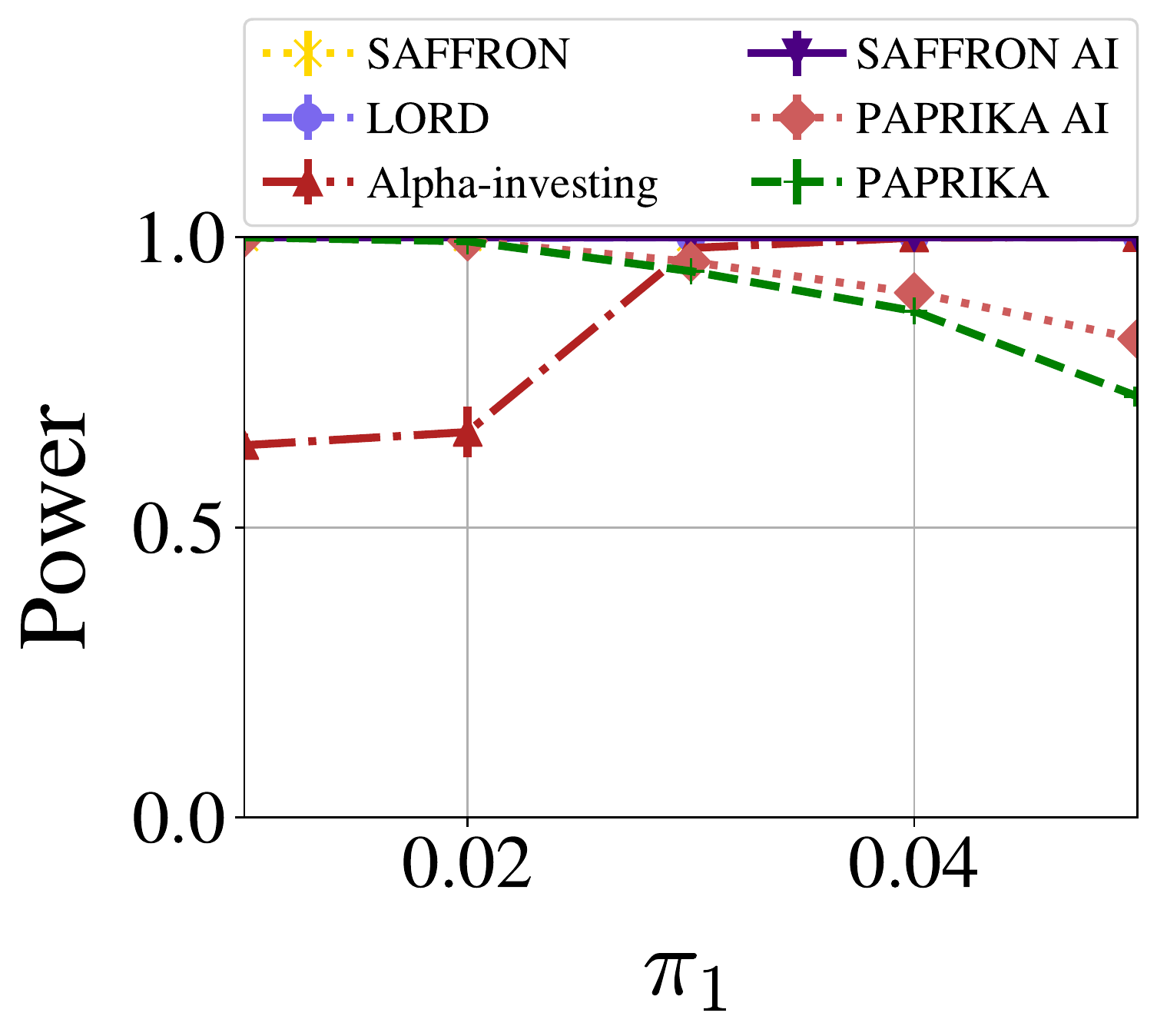}}\\
			\subfloat[][PAPRIKA AI]{\includegraphics[width=.45\textwidth]{./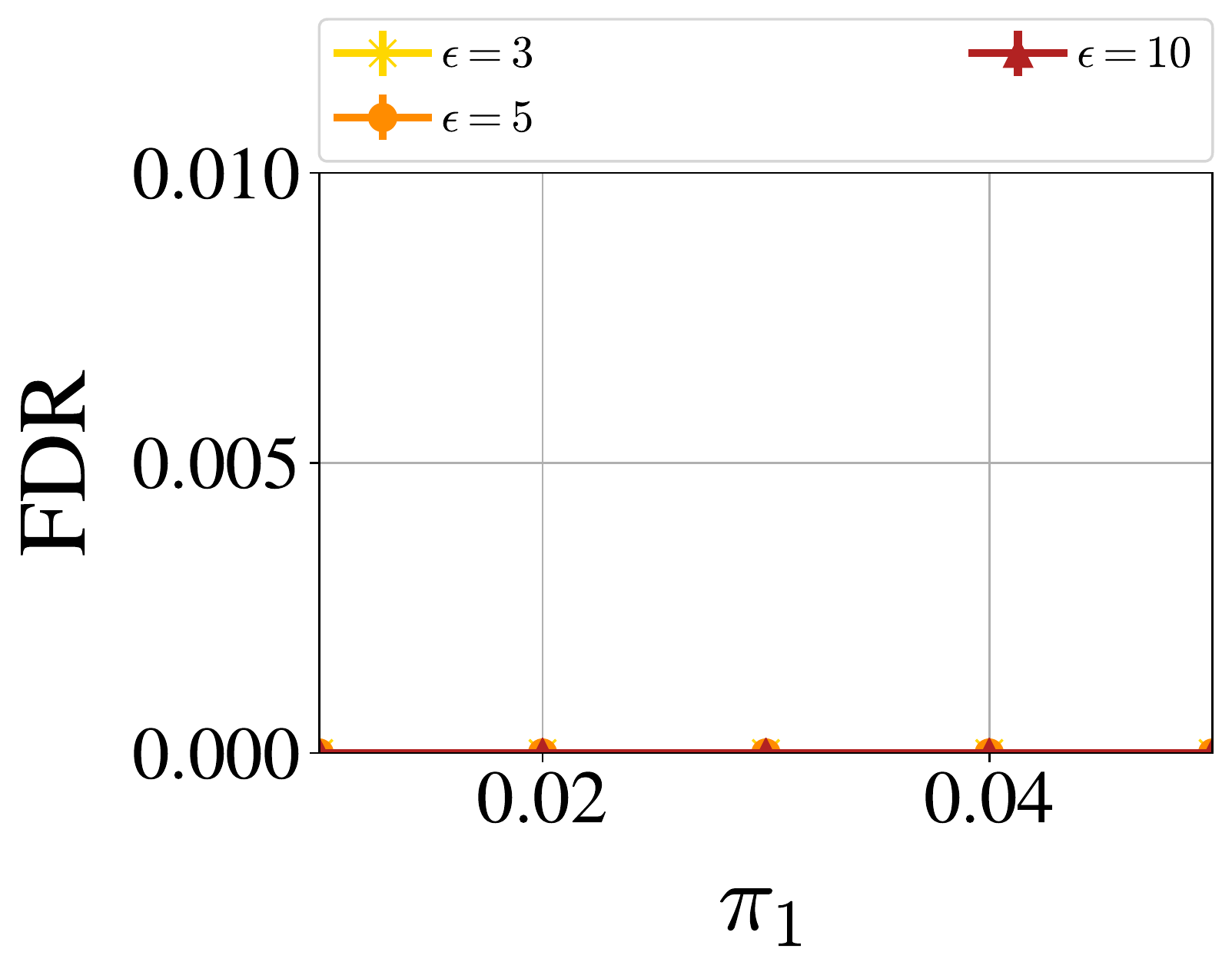}}
			\subfloat[][PAPRIKA AI]{\includegraphics[width=.45\textwidth]{./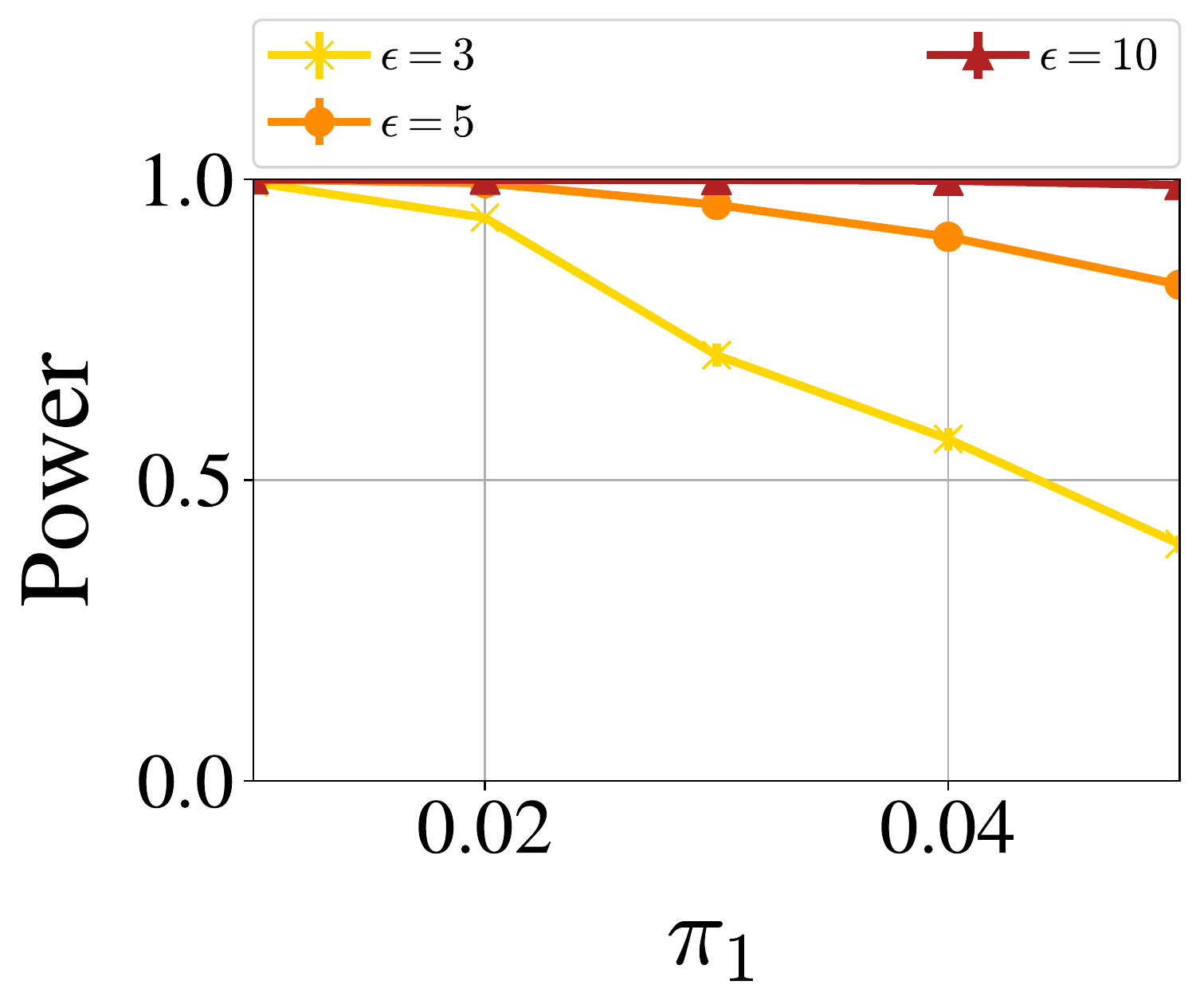}}\\
			\subfloat[][PAPRIKA]{\includegraphics[width=.45\textwidth]{./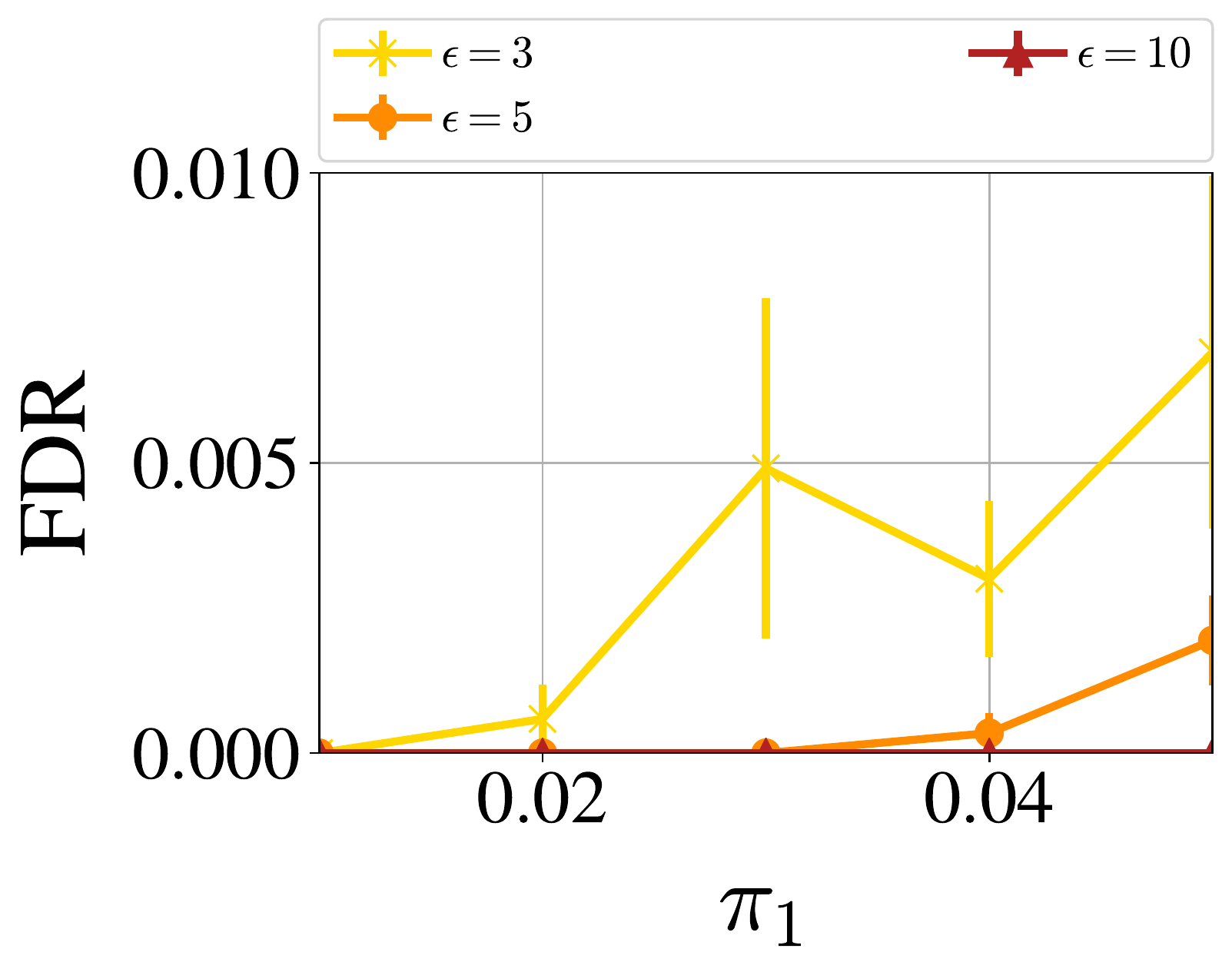}}
			\subfloat[][PAPRIKA]{\includegraphics[width=.45\textwidth]{./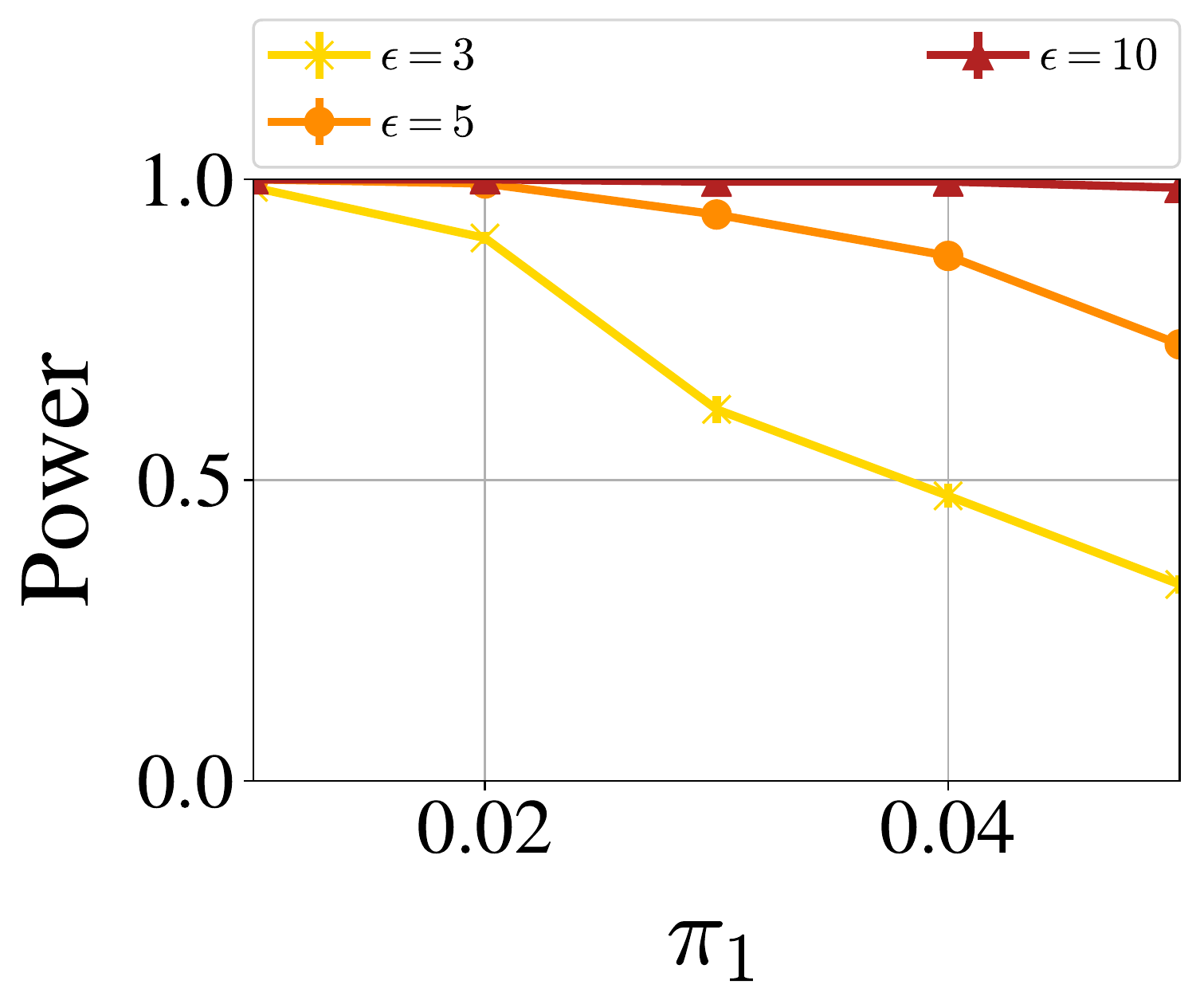}}
			\caption{\small FDR and statistical power versus fraction of non-nulls $\pi_1$ for \PSAFFRON\ (with $\lambda_j=0.2$), \PSAFFRON\ AI (with $\lambda_j=\alpha_j$), and non-private algorithms when the database consists of truncated exponential observations. 
			} \label{fig:example_expon}
		\end{figure}
	\end{minipage}
\end{center}

\new{As in the case with binomial data, we see that the performance of \PSAFFRON\ generally diminishes as $\epsilon$ decreases, and that \PSAFFRON\ AI outperforms \PSAFFRON, again reinforcing the need for tuning the parameters $\lambda_j$ based on the alpha-investing rule. All methods perform well in this setting, and the FDR of \PSAFFRON\ AI is visually indistinguishable from 0 at all levels of $\eps$ and $\pi_1$ tested. Numerical values are listed in Table \ref{table.exp} in Appendix \ref{app.plots} for ease of comparison.} 

We provide a further illustration of our experiments on truncated exponentials in Figure~\ref{fig:wealth_expon}.
In particular, we plot the rejection threshold $\alpha_t$ and wealth versus the hypothesis index.
Each ``jump'' of the wealth corresponds to a rejection. 
We observe that the rejections of our private algorithms are consistent with the rejections of the non-private algorithms, another perspective which empirically confirms their accuracy.

One hypothesis for the good performance observed in Figure \ref{fig:example_expon} is that the signal between the null and alternative hypotheses as parameterized by $\theta_i$ is very strong, meaning the algorithms can easily discriminate between the true null and true non-null hypotheses based on the observed $p$-values.  To measure this, we also varied the value of $\theta_i$ in the alternative hypotheses. These results are shown in Figure \ref{fig:signal}, which plots FDR and power of \PSAFFRON\ and \PSAFFRON\ AI with when the alternative hypotheses have parameter $\theta_i=1.90, 1.95, 2.00$. 
As expected, the performance gets better as we increase the signal, and we observe that when the signal is too weak ($\theta_i=1.90$), performance begins to decline.

For baseline of comparison, we include results for LapSAFFRON with $\epsilon = 5$, which is a na\"ive privatization of SAFFRON based on the Laplace Mechanism. For this baseline mechanism, LapSAFFRON first computes the $p$-values of each hypothesis, applies the Laplace Mechanism \cite{DMNS06} to the $p$-values, and then uses these noisy $p$-values as input to SAFFRON.  Overall privacy of the mechanism comes from advanced composition across multiple calls to the Laplace Mechanism, and post-processing guarantees of differential privacy, where the SAFFRON algorithm is post-processing on the privatized $p$-values.  We see that this baseline mechanism performs extremely poorly relative to \PSAFFRON\ and \PSAFFRON\ AI, motivating the need for our better algorithm design.


\begin{center}
	\begin{minipage}{0.8\linewidth}
		\begin{figure}[H]
			\centering
			\subfloat[][$\epsilon=5$]{\includegraphics[width=.4\textwidth]{./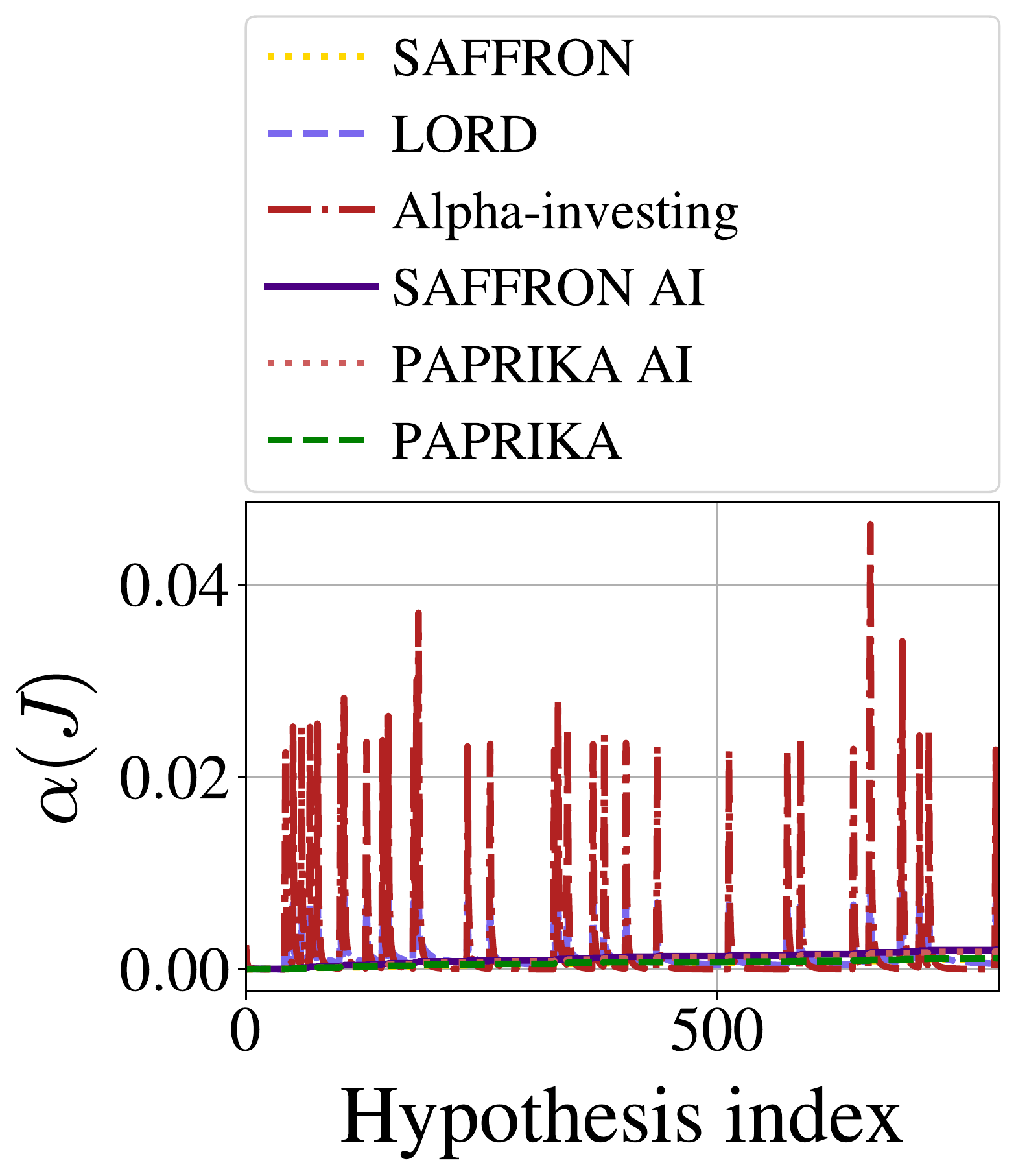}}
			\subfloat[][$\epsilon=5$]{\includegraphics[width=.4\textwidth]{./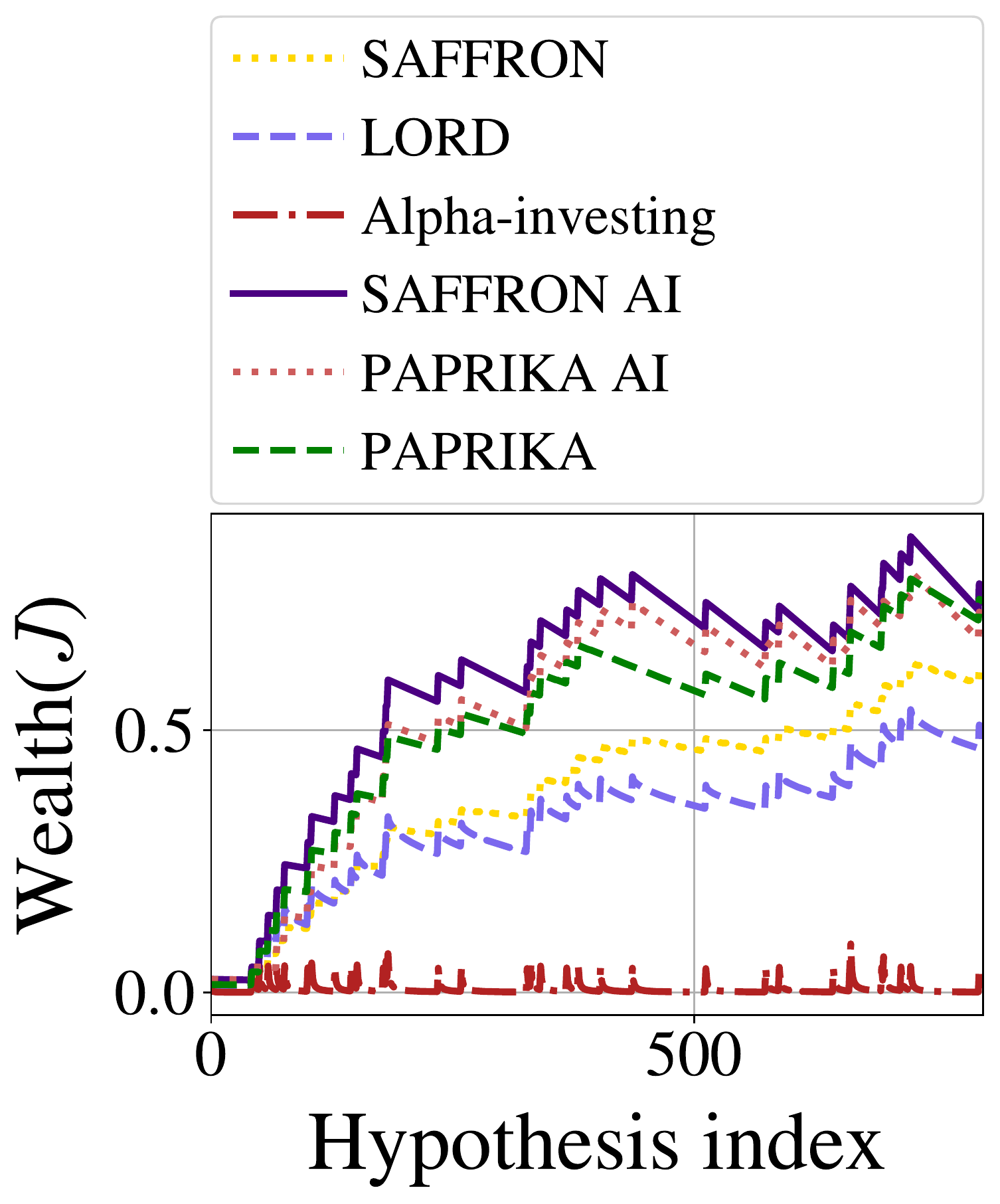}}
			\caption{\small Wealth and rejection threshold $\alpha_t$ versus hypothesis index with privacy parameter $\eps=5$ when the database consists of truncated exponential observations. \PSAFFRON\ AI and \SAFFRON\ AI used $\lambda_j=\alpha_j$, \PSAFFRON\ used $\lambda_j=0.2$, and \SAFFRON\ used $\lambda_j=0.5$. 
			} \label{fig:wealth_expon}
		\end{figure}
	\end{minipage}
\end{center}

\begin{center}
	\begin{minipage}{0.8\linewidth}
		\begin{figure}[H]
			\centering
			\subfloat[][PAPRIKA AI]{\includegraphics[width=.4\textwidth]{./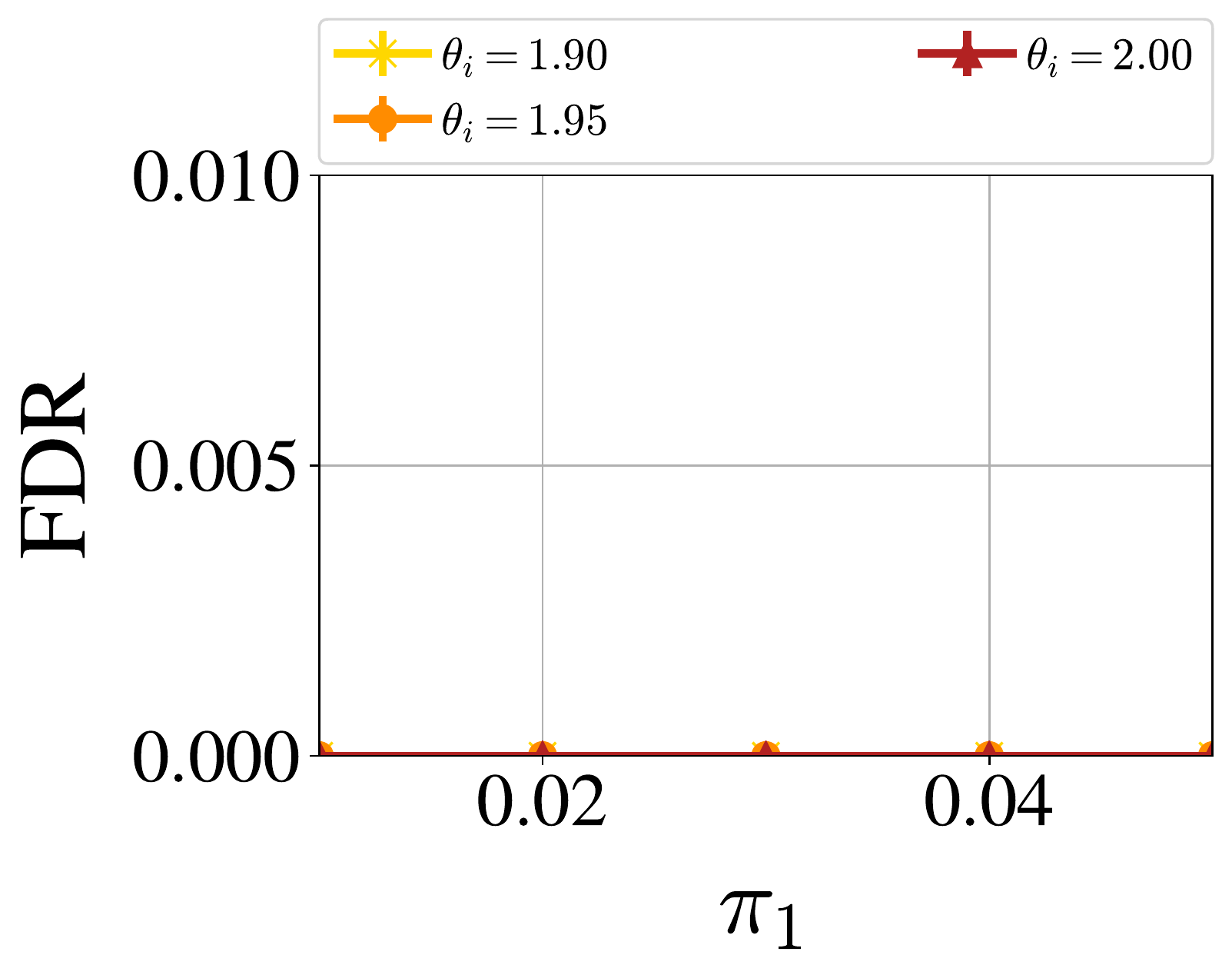}}
			\subfloat[][PAPRIKA AI]{\includegraphics[width=.4\textwidth]{./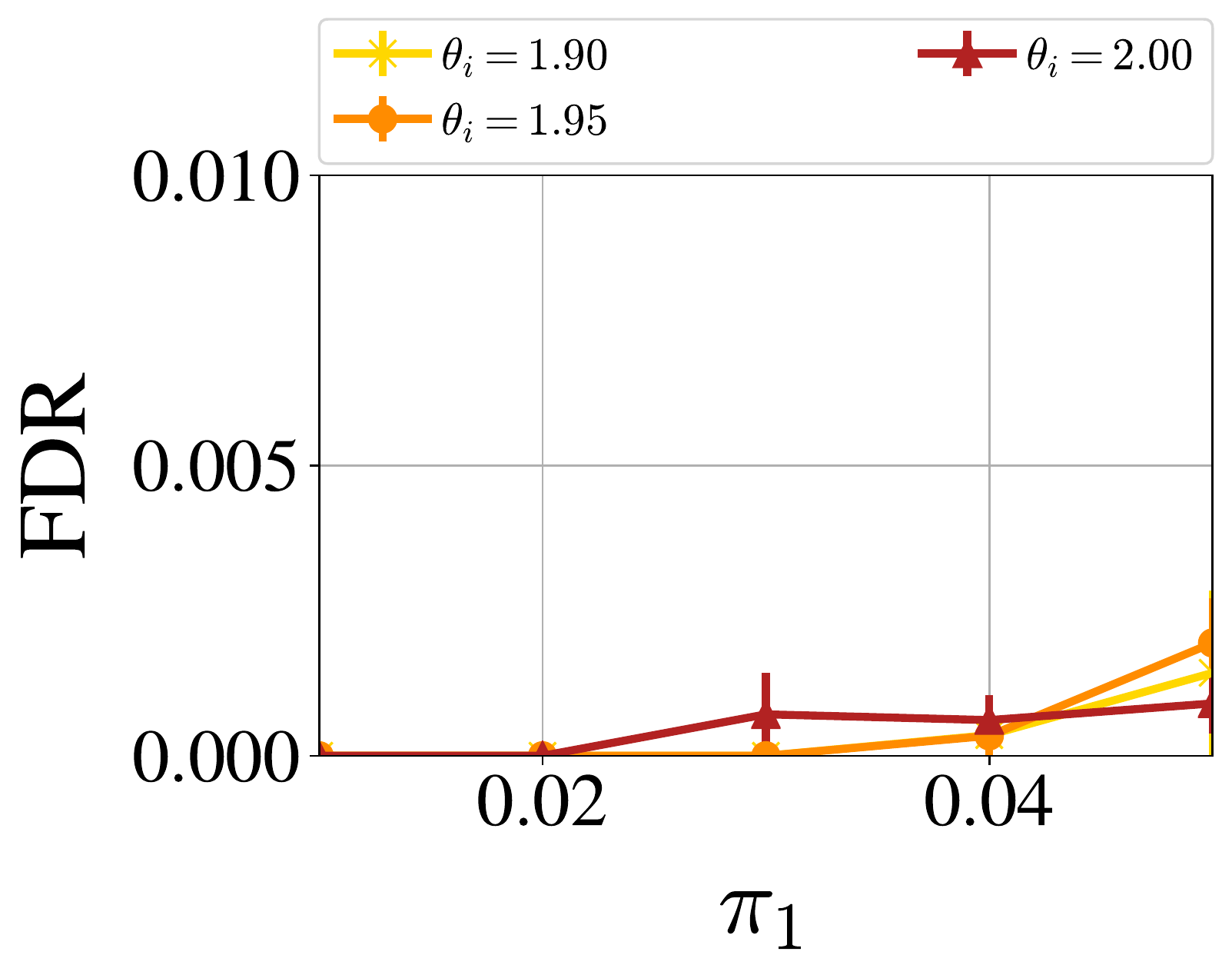}}\\
			\subfloat[][PAPRIKA]{\includegraphics[width=.4\textwidth]{./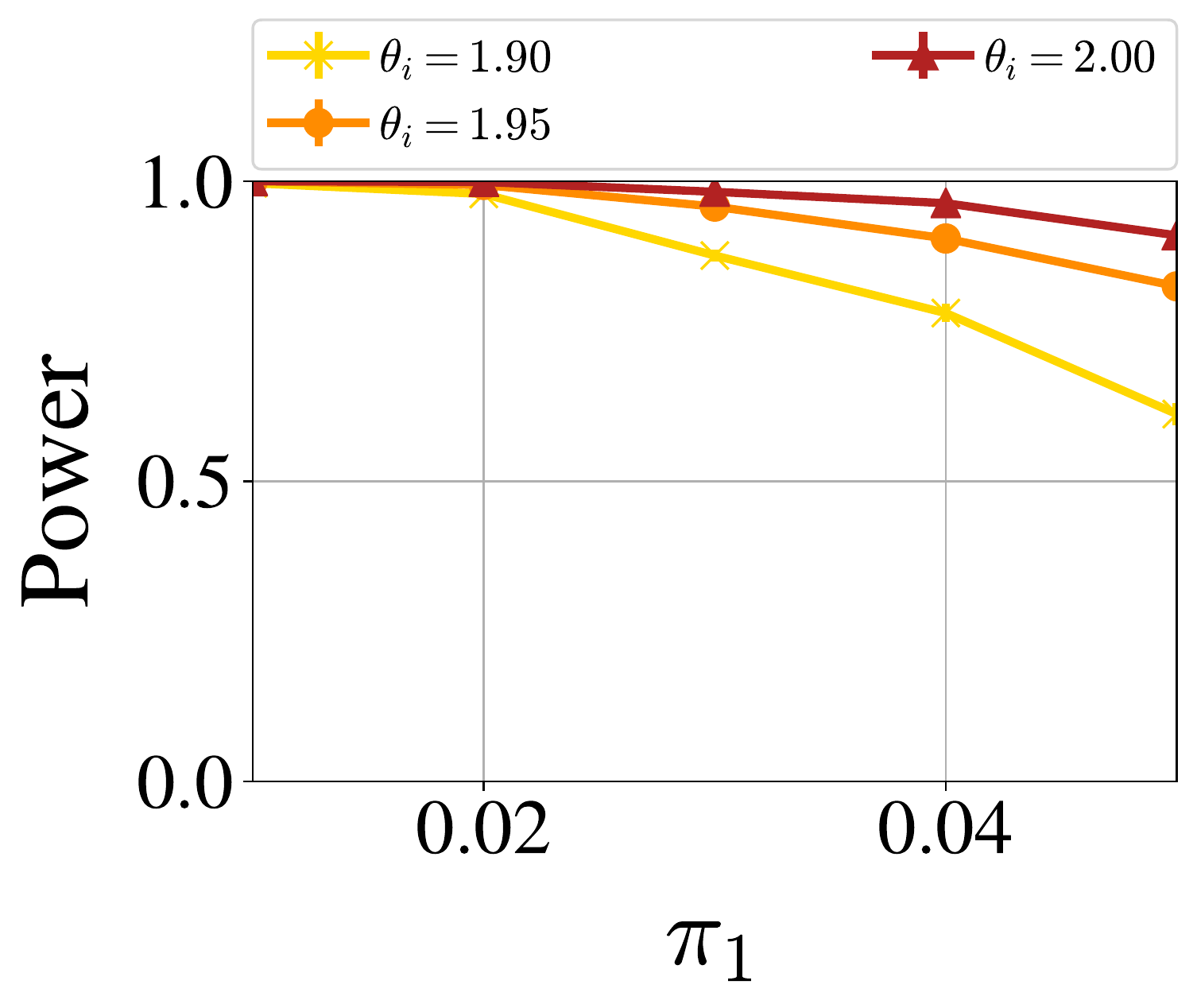}}
			\subfloat[][PAPRIKA]{\includegraphics[width=.4\textwidth]{./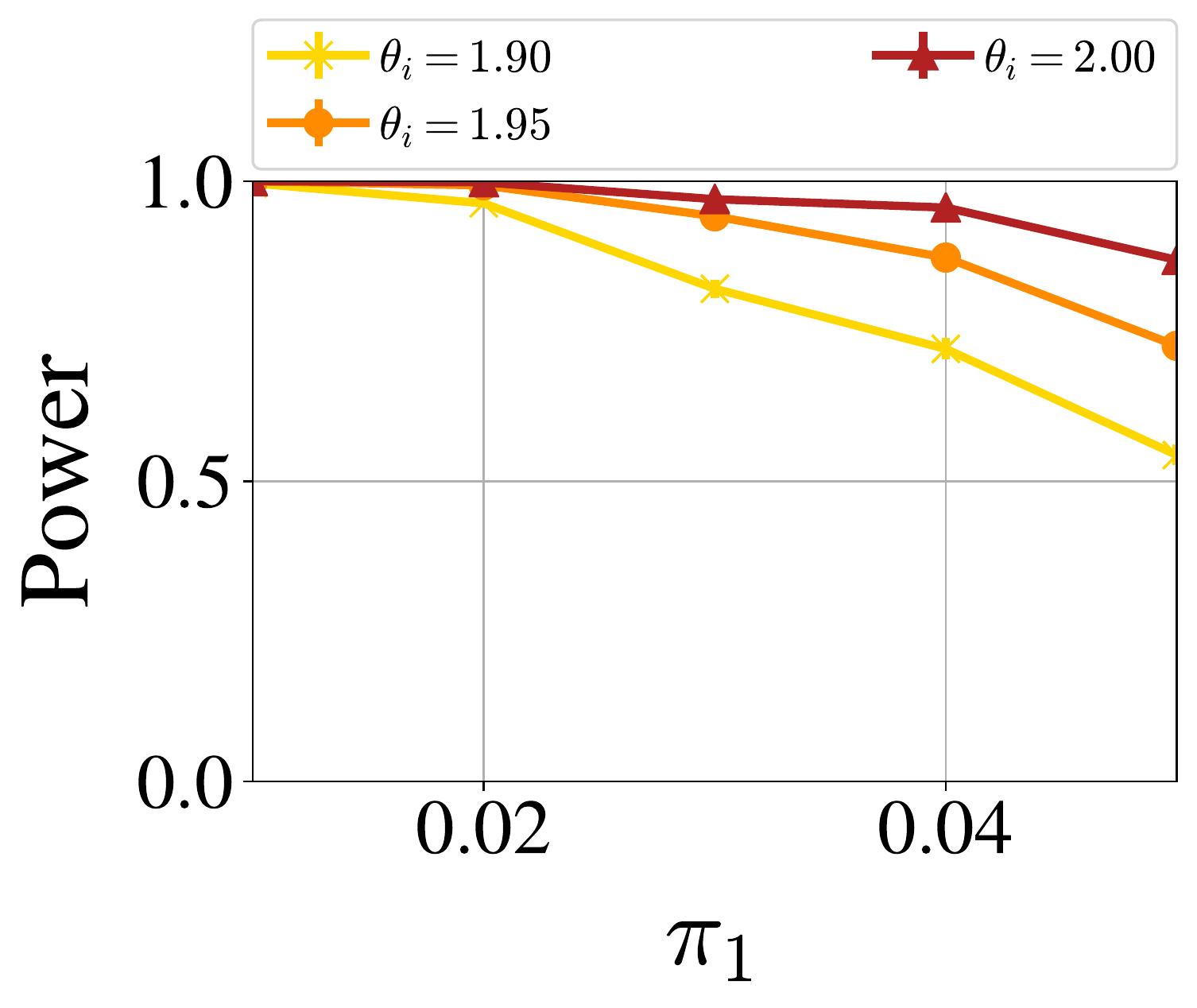}}
			\caption{\small FDR and statistical power versus expected fraction of non-null hypotheses $\pi_1$ under various choices of signal $\theta_i=1.90,1.95,2.00$  for alternative hypothesis parameters. The privacy parameter is $\eps=5$, and the database consists of truncated exponential observations. The first row shows performance of \PSAFFRON\ AI where $\lambda_j=\alpha_j$, and the second row shows performance of \PSAFFRON\ where $\lambda_j=0.2$.
			} \label{fig:signal}
		\end{figure}
	\end{minipage}
\end{center}

\subsection{Choice of shift $A$}\label{sec:shift}
We now discuss how to choose the shift parameter $A$. 
Theorem \ref{thm.priv} gives a theoretical lower bound for $A$ in terms of the privacy parameter $\delta$, but this bound may be overly conservative. 
Since the shift $A$ is closely related to the performance of FDR and statistical power, we wish to pick a value of $A$ that yields good performance in practice. 
In Theorem \ref{thm.fdr}, we show that FDR$(t)$ is less than our desired bound $\alpha$ plus the privacy parameter $\delta t$, which naturally requires that the privacy loss parameter $\delta$ be small. For a more detailed explanation, we bound Inequality~\eqref{eq.lap2} in the proof of Theorem \ref{thm.fdr} using Inequality~\eqref{eq.case2} from the proof of Theorem \ref{thm.priv}, and therefore, the empirical $\delta$ is naturally tied to the empirical FDR. As long as we can guarantee the empirical FDR to be bounded by the target FDR level, our privacy loss is bounded by the nominal $\delta$.

We use the Bernoulli example in Section \ref{sec:bernoulli} to investigate the performance under different choices of the shift $A$ with privacy parameter $\eps=5$. 
The results are summarized in Figure \ref{fig:choiceA}, which plots the FDR and power versus the expected fraction of non-nulls when we vary the shift size with $s=0.5,1,1.5,2$.

Larger shifts (corresponding to larger values of $s$) will lower the rejection threshold, which causes fewer hypotheses to be rejected.  This improves FDR of the algorithm, but harms Power, as the threshold may be too low to reject true nulls. Figure \ref{fig:choiceA} shows that the shift size parameter $s$ should be chosen by the analyst to balance the tradeoff between FDR and Power, as demanded by the application.

\begin{center}
	\begin{minipage}{0.8\linewidth}
		\begin{figure}[H]
			\centering
			\subfloat[][PAPRIKA AI]{\includegraphics[width=.4\textwidth]{./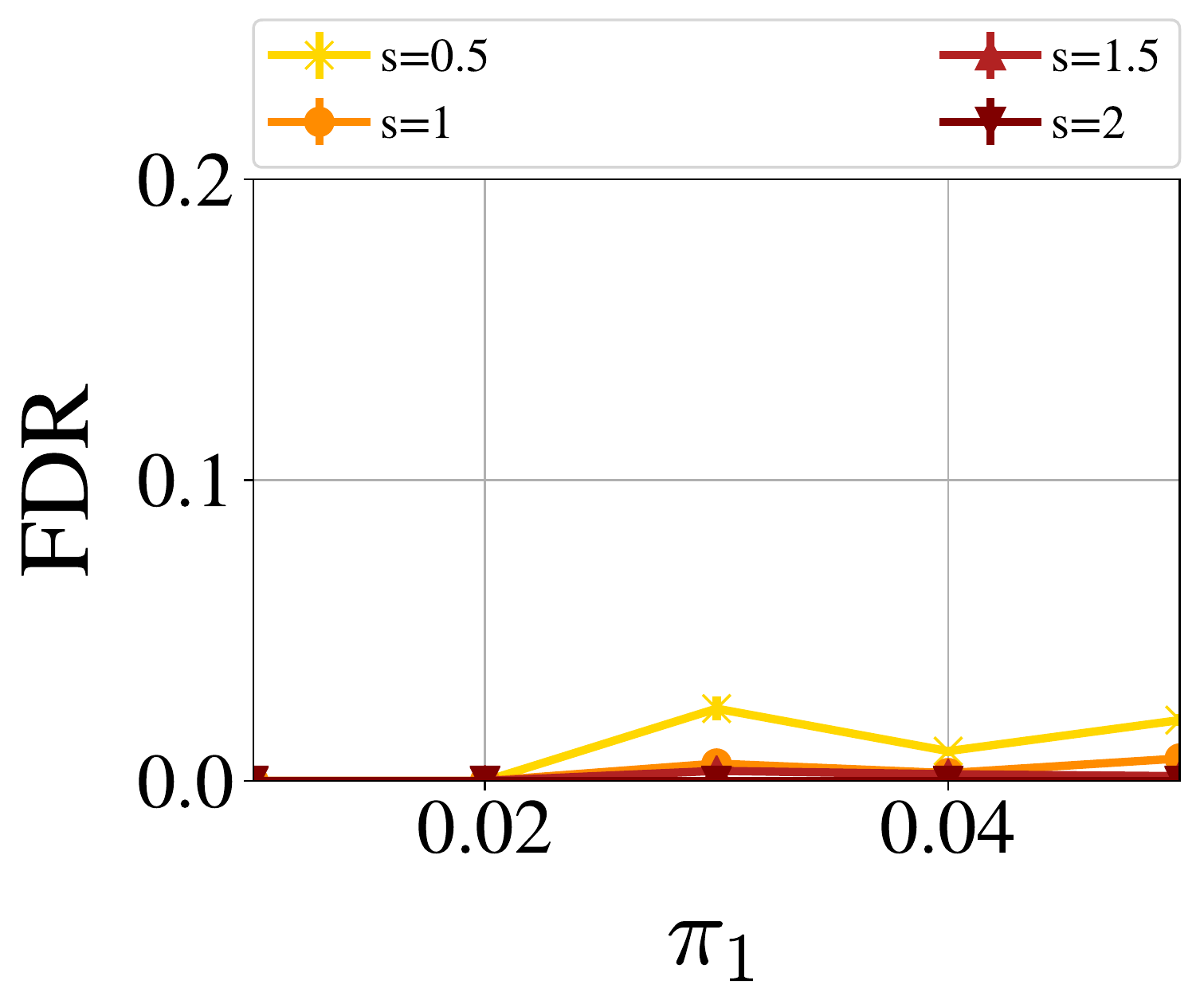}}
			\subfloat[][PAPRIKA AI]{\includegraphics[width=.4\textwidth]{./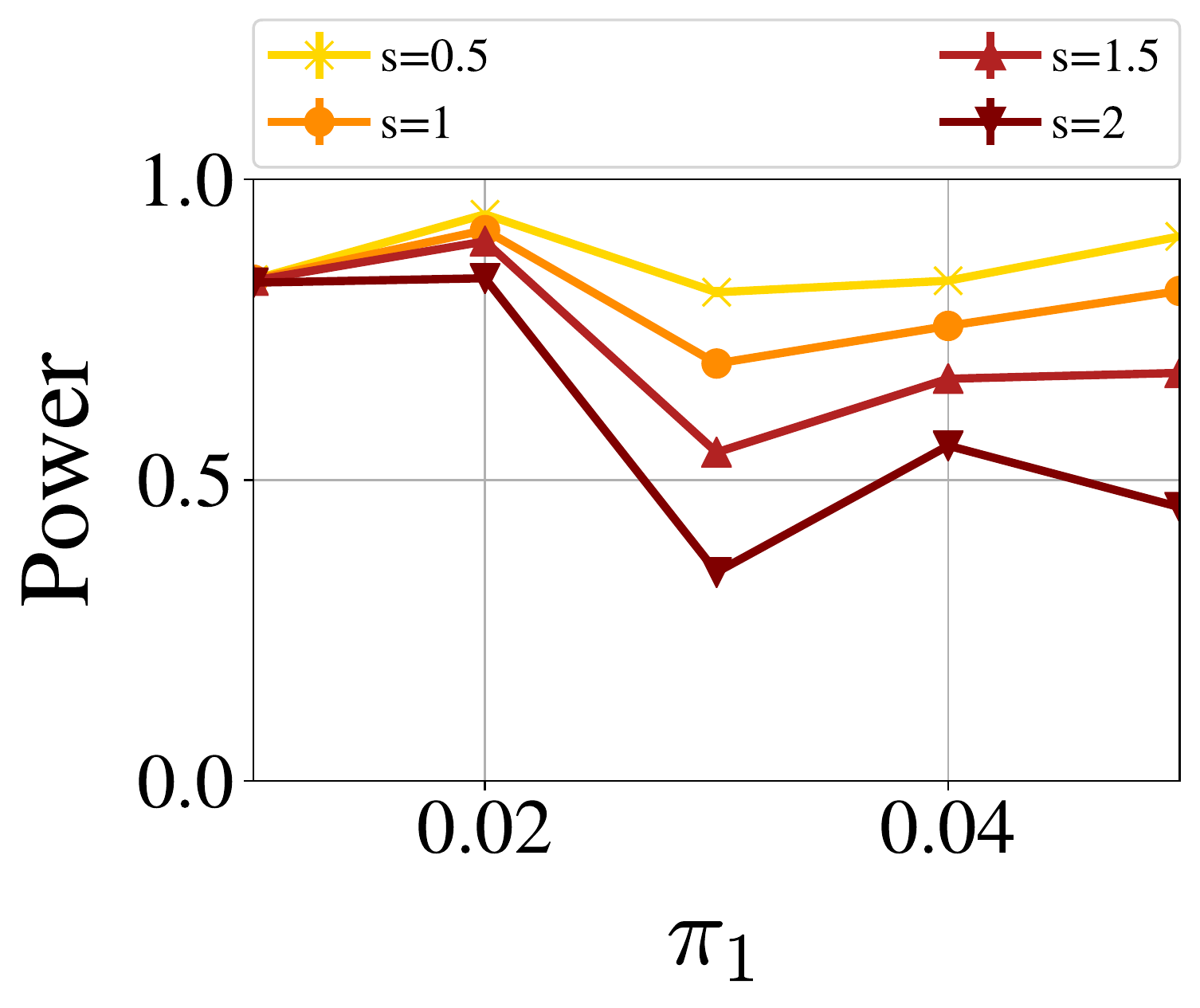}}\\
			\subfloat[][PAPRIKA]{\includegraphics[width=.4\textwidth]{./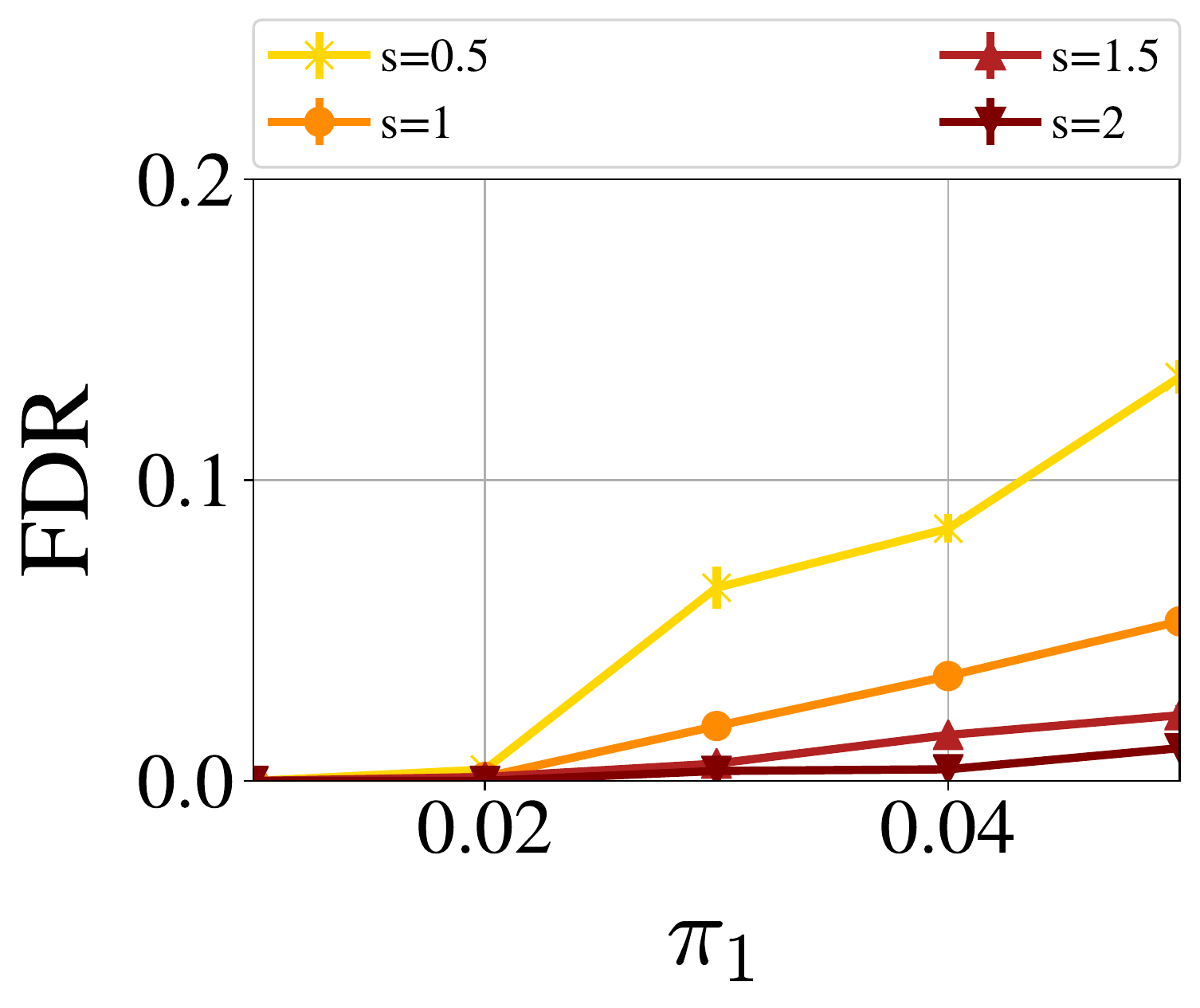}}
			\subfloat[][PAPRIKA]{\includegraphics[width=.4\textwidth]{./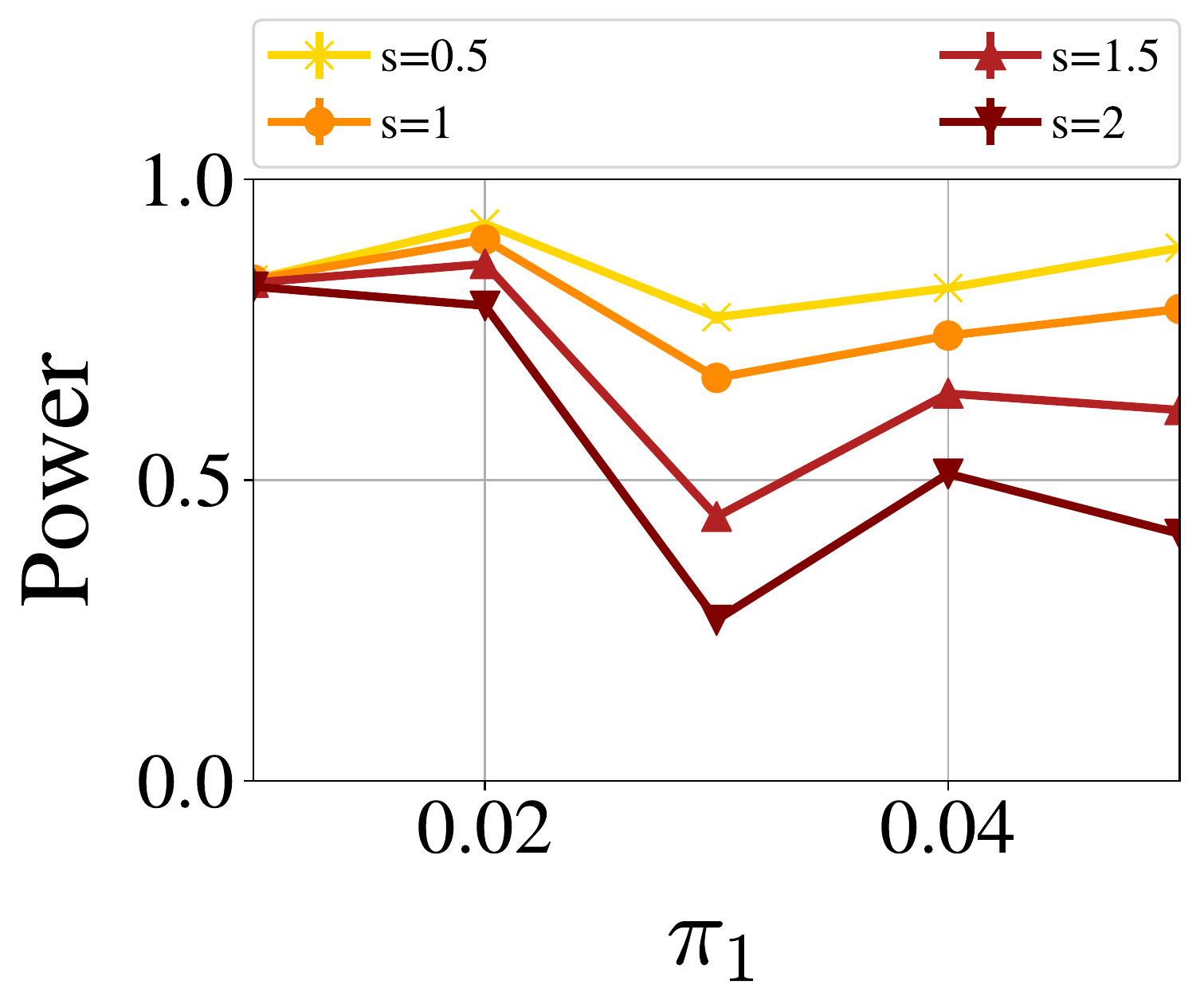}}
			\caption{\small FDR and statistical power versus expected fraction of non-null hypotheses $\pi_1$ under various choices of shift magnitude $s$. 
				The privacy parameter is $\eps=5$, and the database consists of Bernoulli observations. The first row shows performance of \PSAFFRON\ AI where $\lambda_j=\alpha_j$, and the second row shows performance of \PSAFFRON\ where $\lambda_j=0.2$.
			} \label{fig:choiceA}
		\end{figure}
	\end{minipage}
\end{center}

\bibliography{ref,biblio}
\bibliographystyle{alpha}

\newpage
\appendix

\section{Additional Tables}\label{app.plots}

Tables~\ref{table.bin} and~\ref{table.exp} report the numerical values for our experiments on Bernoulli and truncated exponential data, respectively. This information is also presented visually in Figures~\ref{fig:example_bernoulli} and~\ref{fig:example_expon}.
\begin{table}[H]
	\centering
	\resizebox{\textwidth}{!}{
		\begin{tabular}{@{}l |l |  c c |  c  c |  c c |  c  c |  c c |  c  c | c c @{}}
			\toprule
			\multirow{2}{*}{$\pi$} &\multirow{2}{*}{$\eps$}   & \multicolumn{2}{c|}{PAPRIKA AI} & \multicolumn{2}{c|}{PAPRIKA} & \multicolumn{2}{c|}{SAFFRON AI} & \multicolumn{2}{c|}{SAFFRON}& \multicolumn{2}{c|}{LORD} & \multicolumn{2}{c|}{Alpha-investing} & \multicolumn{2}{c}{LapSAFFRON}  \\ \cmidrule(r){3-16}
			&   & FDR              & power             & FDR           & power          & FDR              & power             & FDR           & power  & FDR              & power             & FDR           & power   & FDR           & power        \\ \midrule \midrule
			
			\multirow{3}{*} { 0.01}     &3          &   0               &    .825                  &    0          &   .817               &   \multirow{3}{*} {0}              &    \multirow{3}{*} {.833}                  &     \multirow{3}{*} {0}           &   \multirow{3}{*} {.833}  &   \multirow{3}{*} {0}               &    \multirow{3}{*} {.833}                  &     \multirow{3}{*} {0}           &   \multirow{3}{*} {.833}    &     \multirow{3}{*} {.990}           &   \multirow{3}{*} {.485}                                   \\      &5        &   0               &    .833                  &    0          &   .833               &                 &                     &                &     &                  &                      &                &               &             &             \\        &10       &0                  &.833                     &0              &.833                 &                  &                      &               &     &                  &                     &                &        &      &                          \\ \midrule			       
			
			\multirow{3}{*} { 0.02}      &3         &   0               &    .844                  &    .017          &   .810               &   \multirow{3}{*} {0}               &    \multirow{3}{*} {.938}                  &     \multirow{3}{*} { 0}           &   \multirow{3}{*} { .938}  &   \multirow{3}{*} { 0}               &    \multirow{3}{*} { .938}                  &     \multirow{3}{*} { 0}           &  \multirow{3}{*} {  .875}    &     \multirow{3}{*} {.973}           &  \multirow{3}{*} {  .509}                                  \\        &5      &   0               &    .916                  &    .001          &   .900               &                  &                      &                &     &                  &                     &                &               &            &              \\        &10       &   0               &    .941                  &    0          &   .938               &                  &                     &                &     &                 &                      &                &          &   &                            \\ \midrule
			
			\multirow{3}{*} { 0.03}        &3       &   .008               &    .457                  &    .103          &   .389               &   \multirow{3}{*} {.077}               &    \multirow{3}{*} {.923}                  &     \multirow{3}{*} { 0}           &   \multirow{3}{*} { .846}  &   \multirow{3}{*} { 0}               &    \multirow{3}{*} { .846}                  &     \multirow{3}{*} { 0}           &  \multirow{3}{*} {  .692}   &     \multirow{3}{*} {.977}           &  \multirow{3}{*} {  .509}                                   \\     &5         &   .006               &    .694                  &    .018          &   .670               &                  &                      &                &     &                  &                     &                &             &        &                    \\       &10        &   .015               &    .849                  &    .007          &   .808               &                  &                     &                &     &                 &                      &                &         &               &                 \\ \midrule			        
			
			\multirow{3}{*} { 0.04}     &3          &   .003               &    .604                  &    .120          &   .580               &   \multirow{3}{*} {.030}               &    \multirow{3}{*} {.970}                  &     \multirow{3}{*} { 0}           &   \multirow{3}{*} { .879}  &   \multirow{3}{*} { 0}               &    \multirow{3}{*} { .940}                  &     \multirow{3}{*} { 0}           &  \multirow{3}{*} {  .848}       &     \multirow{3}{*} {.943}           &  \multirow{3}{*} {  .512}                                \\    &5          &   .003               &    .756                  &    .035          &   .740               &                  &                      &                &     &                  &                     &                &          &      &                         \\     &10          &   .060               &    .860                  &    .008          &   .836              &                  &                     &                &     &                 &                      &                &         &         &                       \\ \midrule
			
			\multirow{3}{*} { 0.05}      &3         &   .009               &    .560                  &    .168          &   .514               &   \multirow{3}{*} {.056}               &    \multirow{3}{*} {.971}                  &     \multirow{3}{*} { .056}           &   \multirow{3}{*} { .971}  &   \multirow{3}{*} { .105}               &    \multirow{3}{*} { .971}                  &     \multirow{3}{*} { .056}           &  \multirow{3}{*} {  .971}       &     \multirow{3}{*} { .940}           &  \multirow{3}{*} {  .505}                               \\      &5        &   .007               &    .815                  &    .053          &   .785               &                  &                      &                &     &                  &                     &                &               &            &              \\       &10        &   .017               &    .938                  &    .012          &   .922              &                  &                     &                &     &                 &                      &                &        &            &                     \\ \bottomrule

	\end{tabular}}
	\vspace{0.1cm}
	\caption{Numerical values of FDR and power for Bernoulli observations experiments. LapSAFFRON corresponds to running SAFFRON on the na\"ive Laplace privatization of the p-values.}
	\label{table.bin}
	\vspace{-0.5cm}
\end{table}

\begin{table}[H]
	\centering
	\resizebox{\textwidth}{!}{
		\begin{tabular}{@{}l |l |  c c |  c  c |  c c |  c  c |  c c |  c  c|  c  c @{}}
			\toprule
			 \multirow{2}{*}{$\pi$} &\multirow{2}{*}{$\eps$}   & \multicolumn{2}{c|}{PAPRIKA AI} & \multicolumn{2}{c|}{PAPRIKA} & \multicolumn{2}{c|}{SAFFRON AI} & \multicolumn{2}{c|}{SAFFRON}& \multicolumn{2}{c|}{LORD} & \multicolumn{2}{c|}{Alpha-investing} & \multicolumn{2}{c}{LapSAFFRON} \\ \cmidrule(r){3-16}
			    &   & FDR              & power             & FDR           & power          & FDR              & power             & FDR           & power  & FDR              & power             & FDR           & power & FDR           & power         \\ \midrule \midrule
			       
			        \multirow{3}{*} { 0.01}     &3          &   0               &    .995                  &    0          &   .987               &   \multirow{3}{*} {0}              &    \multirow{3}{*} {1.00}                  &     \multirow{3}{*} {0}           &   \multirow{3}{*} {1.00}  &   \multirow{3}{*} {0}               &    \multirow{3}{*} {1.00}                  &     \multirow{3}{*} {0}           &   \multirow{3}{*} {.638}   &     \multirow{3}{*} {.989}           &   \multirow{3}{*} {.543}                                    \\      &5        &   0               &    1.00                  &    0          &   1.00               &                 &                     &                &     &                  &                      &                &               &           &               \\        &10       &0                  &1.00                     &0              &1.00                  &                  &                      &               &     &                  &                     &                &       &         &                        \\ \midrule			       
			
			        \multirow{3}{*} { 0.02}      &3         &   0               &    .936                  &    0          &   .903               &   \multirow{3}{*} {0}               &    \multirow{3}{*} {1.00}                  &     \multirow{3}{*} { 0}           &   \multirow{3}{*} { 1.00}  &   \multirow{3}{*} { 0}               &    \multirow{3}{*} { .999}                  &     \multirow{3}{*} { 0}           &  \multirow{3}{*} {  .676}    &     \multirow{3}{*} { .973}           &  \multirow{3}{*} {  .505}                                  \\        &5      &   0               &    .994                  &    0          &   .993               &                  &                      &                &     &                  &                     &                &               &        &                  \\        &10       &   0               &    .999                  &    0          &   1.00               &                  &                     &                &     &                 &                      &                &          &        &                       \\ \midrule
			        
			        \multirow{3}{*} { 0.03}        &3       &   0               &    .708                  &    .005          &   .618               &   \multirow{3}{*} {0}               &    \multirow{3}{*} {1.00}                  &     \multirow{3}{*} { 0}           &   \multirow{3}{*} { 1.00}  &   \multirow{3}{*} { 0}               &    \multirow{3}{*} { 1.00}                  &     \multirow{3}{*} { 0}           &  \multirow{3}{*} {  .982}   &     \multirow{3}{*} {.977}           &  \multirow{3}{*} {  .516}                                   \\     &5         &   0               &    .958                 &    0          &   .942               &                  &                      &                &     &                  &                     &                &               &       &                   \\       &10        &   0               &    .999                  &    0          &   .996               &                  &                     &                &     &                 &                      &                &           &           &                   \\ \midrule			        
			        
			        \multirow{3}{*} { 0.04}     &3          &   0               &    .569                  &    .003          &   .474               &   \multirow{3}{*} {0}               &    \multirow{3}{*} {1.00}                  &     \multirow{3}{*} { 0}           &   \multirow{3}{*} { 1.00}  &   \multirow{3}{*} { 0}               &    \multirow{3}{*} { 1.00}                  &     \multirow{3}{*} { 0}           &  \multirow{3}{*} {  .999}    &     \multirow{3}{*} {.944}           &  \multirow{3}{*} {  .503}                                   \\    &5          &   0               &    .905                  &    0          &   .873               &                  &                      &                &     &                  &                     &                &             &          &                  \\     &10          &   0               &    .998                  &    0          &   .996              &                  &                     &                &     &                 &                      &                &         &         &                      \\ \midrule
			      
			        \multirow{3}{*} { 0.05}      &3         &   0               &    .394                  &    .007          &   .327               &   \multirow{3}{*} {0}               &    \multirow{3}{*} {1.00}                  &     \multirow{3}{*} { 0}           &   \multirow{3}{*} { 1.00}  &   \multirow{3}{*} { 0}               &    \multirow{3}{*} { 1.00}                  &     \multirow{3}{*} { 0}           &  \multirow{3}{*} {  1.00}    &     \multirow{3}{*} {.940}           &  \multirow{3}{*} { .505}                                  \\      &5        &   0               &    .825                  &    .002          &   .726              &                  &                      &                &     &                  &                     &                &           &        &                      \\       &10        &   0               &    .990                  &    0          &   .986              &                  &                     &                &     &                 &                      &                &        &           &                      \\ \bottomrule

	\end{tabular}}
	\vspace{0.1cm}
	\caption{Numerical values of FDR and power for truncated exponential observations experiments. LapSAFFRON corresponds to running SAFFRON on the na\"ive Laplace privatization of the p-values.}
	\label{table.exp}
	\vspace{-0.7cm}
\end{table}

\vspace{0.5in}
\section{Proof of Theorem \ref{thm.priv}}\label{privacy.proofs}

Before proving Theorem \ref{thm.priv}, we will state and prove the following lemma, which will be useful in the proofs of Theorem \ref{thm.priv} and Theorem \ref{thm.fdr}.
\begin{lemma}\label{lemma2}
	If $Z_1\sim\Lap(2b)$, $Z_2\sim\Lap(b)$ and $C>0$ is a constant, we have $\Pr(Z_1\ge Z_2-C)=1-\frac{2}{3}\exp(-\frac{C}{2b})+\frac{1}{6}\exp(-C/b)$.
\end{lemma}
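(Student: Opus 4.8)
The plan is to use independence of $Z_1$ and $Z_2$ and condition on the value of $Z_2$, reducing the bivariate probability to a one-dimensional integral of the survival function of $Z_1$ against the density of $Z_2$. Recall that $Z_2\sim\Lap(b)$ has density $\frac{1}{2b}\exp(-|z|/b)$, while the survival function of $Z_1\sim\Lap(2b)$ is $\Pr(Z_1\ge t)=\frac12\exp(-t/(2b))$ for $t\ge 0$ and $\Pr(Z_1\ge t)=1-\frac12\exp(t/(2b))$ for $t<0$. Writing $\Pr(Z_1\ge Z_2-C)=\int_{-\infty}^{\infty}\Pr(Z_1\ge z-C)\,p_{Z_2}(z)\,dz$, the integrand has exactly two sources of non-smoothness: the absolute value in $p_{Z_2}$ at $z=0$, and the kink in the survival function of $Z_1$ at $z=C$, where its argument $z-C$ changes sign. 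The implicit assumption of independence of the two noise terms (they are sampled as separate Laplace draws in \PSAFFRON) is what licenses factoring the joint density in this first step.

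Since $C>0$, these two breakpoints split the real line into three regions: $(-\infty,0)$, $[0,C)$, and $[C,\infty)$. On the first two regions the argument $z-C$ is negative, so $\Pr(Z_1\ge z-C)=1-\frac12\exp((z-C)/(2b))$; on the third it is nonnegative, so $\Pr(Z_1\ge z-C)=\frac12\exp(-(z-C)/(2b))$. I would then evaluate the three resulting elementary integrals. After combining the two exponential factors in each region, each reduces to an integral of the form $\int e^{az}\,dz$ with exponent $3z/(2b)$, $-z/(2b)$, or $-3z/(2b)$ respectively, all of which are immediate.

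Summing the three contributions — which I expect to be $\frac12-\frac16 e^{-C/(2b)}$ on $(-\infty,0)$, $\frac12-\frac12 e^{-C/(2b)}$ on $[0,C)$, and $\frac16 e^{-C/b}$ on $[C,\infty)$ — yields $1-\frac23 e^{-C/(2b)}+\frac16 e^{-C/b}$, the claimed identity. There is no genuine conceptual obstacle; the only step requiring care is correctly identifying the three regions and which branch of the survival function applies on each, so that the constant terms (the two $\frac12$'s) and the cross terms cancel exactly to produce the clean coefficients $\frac23$ and $\frac16$. A useful sanity check is the limit $C\to 0^+$, where the formula gives $1-\frac23+\frac16=\frac12=\Pr(Z_1\ge Z_2)$ by symmetry, and $C\to\infty$, where it gives $1$, as expected.
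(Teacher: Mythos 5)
Your proposal is correct and follows essentially the same route as the paper: condition on $Z_2$, integrate the survival function of $Z_1\sim\Lap(2b)$ against the $\Lap(b)$ density, and split the real line at the two breakpoints $z=0$ and $z=C$ into three regions; I verified that your three regional contributions $\tfrac12-\tfrac16 e^{-C/(2b)}$, $\tfrac12-\tfrac12 e^{-C/(2b)}$, and $\tfrac16 e^{-C/b}$ are exactly right and sum to the claimed identity. No gaps.
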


\begin{proof}
	\begin{align*}
	\Pr(Z_1\ge Z_2-C)&=\int_{-\infty}^{\infty}\int_{x-C}^{\infty}\frac{1}{4b}\exp(-\frac{|y|}{2b})\frac{1}{2b}\exp(-\frac{|x|}{b})dydx\\
	&=\int_{\infty}^{C}(1-\frac{1}{2}\exp(-\frac{|x-C|}{2b}))\frac{1}{2b}\exp(-\frac{|x|}{b})dx+\int_{C}^{\infty}\frac{1}{2}\exp(-\frac{|x-C|}{2b})\frac{1}{2b}\exp(-\frac{|x|}{b})dx\\
	&=\int_{-\infty}^C\frac{1}{2b}\exp(-\frac{|x|}{b})dx-\int_{-\infty}^0\frac{1}{4b}\exp(-\frac{|3x-C|}{2b})dx\\
	&-\int_0^C\frac{1}{4b}\exp(-\frac{C+x}{2b})dx+\int_C^\infty\frac{1}{4b}\exp(-\frac{|3x-C|}{2b})dx\\
	&=1-\frac{1}{2}\exp(-\frac{C}{b})-\frac{1}{6}\exp(-\frac{C}{2b})-\frac{1}{2}\exp(-\frac{C}{2b})+\frac{1}{2}\exp(-\frac{C}{b})+\frac{1}{6}\exp(-\frac{C}{b})\\
	&=1-\frac{2}{3}\exp(-\frac{C}{2b})+\frac{1}{6}\exp(-\frac{C}{b})
	\end{align*}
\end{proof}

\privthm*

\begin{proof}
	Fix any two neighboring databases $D$ and $D'$. Let $R$ denote the random variable representing the output of \PSAFFRON($D, \alpha, \lambda, W_0, \{\gamma_j\}_{j=0}^\infty, c, \epsilon, \delta,s$) and let $R'$ denote the random variable representing the output of \PSAFFRON($D', \alpha, \lambda, W_0, \{\gamma_j\}_{j=0}^\infty, c, \epsilon, \delta,s$). Let $k$ denote the total number of hypotheses. When $\log p_t \ge \log 2\lambda$ and $\log p'_t \ge \log 2\lambda$ for all $t$, $\Pr(R=\{0,0,\ldots,0\})=1=\Pr(R'=\{0,0,\ldots,0\})$. When $\log p_t < \log 2\lambda$ and $\log p'_t < \log 2\lambda$ for all $t$, privacy follows from the privacy of \SparseVector\ with dynamic thresholds. Since the threshold at each time $t$ only depends on the threshold at time $t-1$ and and private rejection $R(t-1)$, by post-processing, the threshold $\alpha_t$ is private. Then by post-processing and the privacy of \SparseVector\ , the rejection $R(t)$ is also private. We give the formal probability argument as follows.
	For any neighboring $D, D'$ and any sequence of hypotheses, we first consider the output up to the first rejection, which is \AboveThresh\ . Consider any output $r\in \{0, 1\}^l$. Let $r=\{r_1, r_2, \ldots, r_l\}$, with $r_l=1$ and $r_1=\ldots=r_{l-1}=0$. Let
	\begin{align*}
	f_i(D,z,\alpha_i)=\Pr(\log p_i(D)+Z_i < \log \alpha_i-A+z)\\
	g_i(D,z,\alpha_i)=\Pr(\log p_i(D)+Z_i \ge \log \alpha_i-A+z),
	\end{align*}
	where $\alpha_1,\ldots, \alpha_t$ is a fixed sequence of thresholds determined by the $r$. 
  We have 
	\begin{align}
	\frac{\Pr(R=r|D)}{\Pr(R'=r|D')}&=\frac{\int_{-\infty}^{\infty}\Pr(Z_\alpha=z)\Pr(R_l(D)=r_l|r_{l-1},\ldots,r_{1})\Pr(R_2(D)=r_2|r_1)\Pr(R_1(D)=r_1)dz}{\int_{-\infty}^{\infty}\Pr(Z_\alpha=z)\Pr(R_l(D')=r_l|r_{l-1},\ldots,r_{1})\Pr(R_2(D')=r_2|r_1)\Pr(R_1(D')=r_1)dz} \notag\\ 
	&=\frac{\int_{-\infty}^{\infty}\Pr(Z_\alpha=z)g_l(D,z, \alpha_l)\prod_{i=1}^{l-1}f_i(D,z,\alpha_i)dz}{\int_{-\infty}^{\infty}\Pr(Z_\alpha=z)g_l(D',z, \alpha_l)\prod_{i=1}^{l-1}f_i(D',z,\alpha_i)dz},	\notag\\
	&=\frac{\int_{-\infty}^{\infty}\Pr(Z_\alpha=z-\eta)g_l(D,z-\eta, \alpha_l)\prod_{i=1}^{l-1}f_i(D,z-\eta,\alpha_i)dz}{\int_{-\infty}^{\infty}\Pr(Z_\alpha=z)g_l(D',z, \alpha_l)\prod_{i=1}^{l-1}f_i(D',z,\alpha_i)dz}, \label{eq.changevar}	\\
	&\le\frac{\int_{-\infty}^{\infty}\exp(\eps/2c)\Pr(Z_\alpha=z)g_l(D,z-\eta, \alpha_l)\prod_{i=1}^{l-1}f_i(D',z,\alpha_i)dz}{\int_{-\infty}^{\infty}\Pr(Z_\alpha=z)g_l(D',z, \alpha_l)\prod_{i=1}^{l-1}f_i(D',z,\alpha_i)dz}, \label{ineq.1}	\\	
	&\le\frac{\int_{-\infty}^{\infty}\exp(\eps/2c)\Pr(Z_\alpha=z)\exp(\eps/2c)g_l(D',z, \alpha_l)\prod_{i=1}^{l-1}f_i(D',z,\alpha_i)dz}{\int_{-\infty}^{\infty}\Pr(Z_\alpha=z)g_l(D',z, \alpha_l)\prod_{i=1}^{l-1}f_i(D',z,\alpha_i)dz}, \label{ineq.2} \\
	&=\exp(\eps/c).
	\end{align}
    Equation \eqref{eq.changevar} is from change of integration variable $z$ to $z-\eta$. Inequality \eqref{ineq.1} is because $Z_\alpha$ follows $\Lap(2\eta c/\eps)$ and $\log p_i(D)-\eta \le \log p_i(D')$. Inequality \eqref{ineq.2} is because 
    \begin{align*}
    g_l(D,z-\eta, \alpha_l)&=\Pr(\log p_l(D)+Z_l \ge \log \alpha_l-A+z-\eta)\\
    &\le \Pr(\log p_l(D')+\eta +Z_l \ge \log \alpha_l-A+z-\eta)\\
    &\le \Pr(\log p_l(D')+Z_l \ge \log \alpha_l-A+z-2\eta)\\
    &\le \exp(\eps/2c) \Pr(\log p_l(D')+Z_l \ge \log \alpha_l-A+z)\\
    &\le \exp(\eps/2c) g_l(D',z, \alpha_l).
    \end{align*}
    When we restart \AboveThresh\ after the first rejection, the inital threshold is the post-processing of the previous ouputs, which is also private. Then by simple composition, the overall privacy loss is $\eps$.

	For other cases, the worst case is that for all $t$, $\log p_t < \log 2\lambda$ and $\log p'_t \ge \log 2\lambda$. In this setting, we have
	\begin{equation*}
	\Pr(R'=r)=\begin{cases} 1 &\text{if } r=\{0,0,\ldots,0\}\\
	0&\text{otherwise}.
	\end{cases}
	\end{equation*}
	 To satisfy $(\eps, \delta)$-differential privacy, we need to bound the probability of outputting $r$ for database $D$. We first consider $r=\{0,0\ldots,0\}$. We wish to bound $\Pr(R'=\{0,0\ldots,0\})\le \exp(\eps) \Pr(R=\{0,0,\ldots,0\})+\delta$ and $\Pr(R=\{0,0\ldots,0\})\le \exp(\eps) \Pr(R'=\{0,0,\ldots,0\})+\delta$. The latter is trivial since $\exp(\eps) \Pr(R'=\{0,0,\ldots,0\})+\delta=\exp(\eps)+\delta$, which is greater than 1. It remains to satisfy $\Pr(R'=\{0,0\ldots,0\})\le \exp(\eps) \Pr(R=\{0,0,\ldots,0\})+\delta$, which is equivalent to $1-\delta \le \exp(\eps) \Pr(R=\{0,0,\ldots,0\})$. We have
	 \begin{align}
	 \Pr(R=\{0,0\ldots,0\}) &= \Pr(R_1=0)\Pr(R_2=0|R_1=0)\ldots\Pr(R_k=0|R_{k-1}=0)\notag\\
	 &= \prod_{t=1}^{k}\Pr(\log p_t + Z_t \ge \log \alpha_t -A +Z_\alpha) \notag\\
	 &> \prod_{t=1}^{k}\Pr(\log 2\lambda-\eta+ Z_t \ge \log \alpha_t -A +Z_\alpha) \label{eq.worst1}\\
	 &=  \prod_{t=1}^{k}\Pr(Z_t \ge Z_\alpha + \log \alpha_t -\log 2\lambda +\eta-A) \notag\\
	 &= \prod_{t=1}^{k}\left( 1- \frac{2}{3}\exp(-\frac{\eps(A+\log(2\lambda/\alpha_t)-\eta)}{4\eta c})+\frac{1}{6}\exp(-\frac{\eps(A+\log(2\lambda/\alpha_t)-\eta)}{2\eta c}\right) \label{eq.p00}\\
	 &\ge \left( 1- \frac{2}{3}\exp(-\frac{\eps(A+\log2-\eta)}{4\eta c}) \right)^k, \label{eq.p0}
	 \end{align}
	 where Inequality \eqref{eq.worst1} is because the worst case happens when $p_t$ is $\eta$ below the candidacy threshold  $\log 2\lambda$, Equation \eqref{eq.p00} applies Lemma \ref{lemma2}, and Inequality \eqref{eq.p0} follows from the facts that $\alpha_t\le \lambda$ for all $t$ and that the third term in (\ref{eq.p00}) is positive. Setting \eqref{eq.p0} to be larger than $(1-\delta)/\exp(\eps)$, we have,
	 \begin{equation}\label{eq.case1}
	 \frac{2}{3}\exp\left(-\frac{\eps(A+\log2-\eta)}{4\eta c}\right)\le 1-\left( \frac{1-\delta}{\exp(\eps)}\right) ^{\frac{1}{k}}.
	 \end{equation}
	 
	 Next, we consider all other possible outputs $r$. 
   Define the set $S:=\{r\ |\ \text{there exists a } t \text{ such that } r_t=1\}$. We wish to bound $\Pr(R\in S)\le \exp(\eps)\Pr(R'\in S)+\delta$ and $\Pr(R' \in S)\le \exp(\eps)\Pr(R\in S)+\delta$. The latter is trivial since $\Pr(R'\in S)=0$. It remains to bound $\Pr(R\in S )\le \delta$. For any $t$, we have
	 \begin{align}
	 \Pr(R\in S)&\le \Pr(R_t=1) \notag\\
	 &=\Pr(\log p_t+Z_t\le \log\alpha_t-A+Z_\alpha) \notag\\
	 &\le \Pr(\log 2\lambda+Z_t\le \log \alpha_t-A+Z_\alpha) \label{eq.worst2}\\
	 &=\Pr(Z_t\le Z_\alpha-(\log(2\lambda/\alpha_t)+A)) \notag\\
	 &\le \Pr(Z_t\le Z_\alpha-(\log 2+A)) \notag\\
     &=\frac{2}{3}\exp\left(-\frac{\eps(A+\log 2)}{4\eta c}\right)-\frac{1}{6}\exp\left(-\frac{\eps(A+\log 2)}{2\eta c}\right) \label{eq.p10}\\
	 &\le \frac{2}{3}\exp\left(-\frac{\eps(A+\log 2)}{4\eta c}\right), \label{eq.p1}
	 \end{align}
	 where Inequality \eqref{eq.worst2} is because the worst case occurs when $\log p_t=\log 2\lambda$, Equality \eqref{eq.p10} applies Lemma \ref{lemma2}, and Inequality \eqref{eq.p1} follows from the facts that $\alpha_t\le \lambda$ for all $t$ and that the second term in (\ref{eq.p10}) is negative. 
   Setting (\ref{eq.p1}) to be less than $\delta$, we have,
	 \begin{equation}\label{eq.case2}
	 \frac{2}{3}\exp\left(-\frac{\eps(A+\log2)}{4\eta c}\right)\le \delta.
	 \end{equation}
	 Combining Equations \eqref{eq.case2} and \eqref{eq.case1}, we have the condition that  $\frac{2}{3}\exp\left(-\frac{\eps(A+\log2-\eta)}{4\eta c}\right)\le \min\{\delta,1-((1-\delta)/\exp(\eps))^{1/k}\}$. 

	 Rearranging this inequality for $A$ gives
	 \begin{equation*}
	 A\ge \frac{4\eta c}{\eps}\left( \log\frac{2}{3\min\{\delta,1-((1-\delta)/\exp(\eps))^{1/k}\}}-\log 2 +\eta\right), 
	 \end{equation*}
which is how the shift term $A$ is set in \PSAFFRON.
	 
\end{proof}

\section{Proof of Theorem \ref{thm.fdr}}
\label{sec.fdr-appendix}

\fdrthm*

\begin{proof}
	For any time $t>0$, before the total number of rejections reaches $c$ we bound the number of false rejections as follows:
	\begin{align}
	\E{|\mathcal{H}^0\cap\mathcal{R}(t)|}
     \le&\sum_{j\le t, j \in \mathcal{H}^0}\E{I(\log p_j+Z_j\le \log \alpha_j-A+Z_\alpha)}\label{eqn.11}\\
	\le& \sum_{j\le t, j \in \mathcal{H}^0}Pr(\log p_j\le \log \alpha_j)+Pr(Z_j\le Z_\alpha-A) \notag\\
	\le&\sum_{j\le t, j \in \mathcal{H}^0}\E{\alpha_j}+ Pr(Z_j\le Z_\alpha-A),\label{eqn.1}
	\end{align}
	where Inequality \eqref{eqn.11} follows from the rejection rule before the total number of rejections reaches $c$, and the number of false rejections is always $0$ afterwards. Inequality \eqref{eqn.1} follows from the conditional super-uniformity property. We bound each term in \eqref{eqn.1} separately. 
   Using the law of iterated expectations by conditioning on $\mathcal{F'}^{t-1}$, we can bound the first term of \eqref{eqn.1} as follows:
	 \begin{align}
	 \sum_{j\le t, j \in \mathcal{H}^0}\E{\alpha_j}
	\le& \E{\sum_{j\le t, j \in \mathcal{H}^0}\alpha_j \E{\frac{I(p_j>2\lambda_j)}{1-2\lambda_j}|\mathcal{F'}^{t-1}}}\notag\\
	=& \E{\E{\sum_{j\le t, j \in \mathcal{H}^0}\alpha_j\frac{I(p_j>2\lambda_j)}{1-2\lambda_j}|\mathcal{F'}^{t-1}}}\notag\\
	=& \E{\sum_{j\le t, j \in \mathcal{H}^0}\alpha_j\frac{I(p_j>2\lambda_j)}{1-2\lambda_j}},\label{eqn.falsereject}	 
	 \end{align}
	 where Equation \eqref{eqn.falsereject} applies the conditional super-uniformity. Since $\widehat{\text{FDP}}_{\text{\PSAFFRON}}(t)\le \alpha$, we have,
	\begin{equation*}
	\E{\sum_{j\le t, j \in \mathcal{H}^0}\alpha_j\frac{I(p_j>2\lambda_j)}{1-2\lambda_j}}\le \alpha \E{|\mathcal{R}(t)|}.
	\end{equation*}
	Next, we bound the second term in \eqref{eqn.1} as follows:
	\begin{align*}
	\sum_{j\le t, j \in \mathcal{H}^0}\Pr(Z_j\le Z_\alpha-A)
	\le&\frac{2t}{3}\exp\left(-\frac{A\epsilon}{4\eta c}\right)-\frac{t}{6}\exp\left(-\frac{A\epsilon}{2\eta c}\right)\\
	\le& t \min\left\lbrace \delta,1-\left( \frac{1-\delta}{\exp(\eps)}\right) ^{\frac{1}{k}}\right\rbrace.
	\end{align*}
    Combining this inequality with (\ref{eqn.falsereject}), we bound mFDR as 
	\begin{align*}
	mFDR&:=\frac{\E{|\mathcal{H}^0\cap\mathcal{R}(t)|}}{\E{|\mathcal{R}(t)|}}\\
	\le&\alpha + \frac{1}{\E{|\mathcal{R}(t)|}}\sum_{j\le t, j \in \mathcal{H}^0}\Pr(Z_j\le Z_\alpha-A)\\
	\le &\alpha+\min\left\lbrace \delta,1-\left( \frac{1-\delta}{\exp(\eps)}\right) ^{\frac{1}{k}}\right\rbrace  t \\
	\le &\alpha+\delta t.
	\end{align*}
	
	If the null $p$-values are independent of each other and the non-nullls, and $\{\alpha_t\}$ is a coordinate-wise non-decreasing function of the vector $R_1,\ldots, R_{t-1}$, then we have
	
	\begin{align}
	FDR(t)&=\E{\frac{|\mathcal{H}^0\cap\mathcal{R}(t)|}{|\mathcal{R}(t)|}} \notag\\
	&=\sum_{j\le t, j \in \mathcal{H}^0}\E{\frac{I(\log p_j+Z_j\le \log \alpha_j-A+Z_\alpha)}{|\mathcal{R}(t)|}}\notag\\
	&\le \sum_{j\le t, j \in \mathcal{H}^0}\E{\frac{\min\{\alpha_j\exp(Z_\alpha-Z_j-A),1\}}{|\mathcal{R}(t)|}}\label{eq.lemma1}\\
	&\le \sum_{j\le t, j \in \mathcal{H}^0}\E{\frac{\alpha_j}{|\mathcal{R}(t)|}}+\Pr(Z_j\le Z_\alpha-A)	\label{eq.independent},
	\end{align}
	where Inequality \eqref{eq.lemma1} applies the law of iterated expectations by conditioning on $\mathcal{F'}^{t-1}$ and Lemma \ref{lemma1}. 
  Inequality (\ref{eq.independent}) follows by a case analysis:
  if $Z_j>Z_\alpha-A$, then $\exp(Z_\alpha-Z_j-A)<1$, and thus $\frac{\min\{\alpha_j\exp(Z_\alpha-Z_j-A),1\}}{|\mathcal{R}(t)|}$ reduces to $\frac{\alpha_j}{|\mathcal{R}(t)|}$. 
  On the other hand, if $Z_j \leq Z_\alpha -A$, then $\frac{\min\{\alpha_j\exp(Z_\alpha-Z_j-A),1\}}{|\mathcal{R}(t)|}\le \frac{1}{|\mathcal{R}(t)|}\le 1$, allowing us to upper bound the expectation by the probability of this event.
  	
	We bound the first term in \eqref{eq.independent} as follows:
	\begin{align}
	\sum_{j\le t, j \in \mathcal{H}^0}\E{\frac{\alpha_j}{|\mathcal{R}(t)|}}
	&\le \sum_{j\le t, j \in \mathcal{H}^0}\E{\frac{\alpha_jI(p_j>2\lambda_j)}{(1-2\lambda_j)|\mathcal{R}(t)|}}\label{eq.lemma12}\\
	&\le \E{\frac{\sum_{j\le t}\alpha_jI(p_j>2\lambda_j)}{(1-2\lambda_j)|\mathcal{R}(t)|}}\notag\\
	&=\E{\widehat{\text{FDP}}_{\text{\PSAFFRON}}(t)}\notag\\
	&\le \alpha \label{eq.saffron2},
	\end{align}
	where Inequality \eqref{eq.lemma12} applies Lemma \ref{lemma1}.

  It remains to bound the second term in (\ref{eq.independent}), which we do using Lemma~\ref{lemma2} as follows:
	\begin{align}
	\sum_{j\le t, j \in \mathcal{H}^0}\Pr(Z_j\le Z_\alpha-A)&\le \sum_{j\le t}\Pr(Z_j\le Z_\alpha-A)\notag\\
	&=\frac{2t}{3}\exp(-\frac{A\epsilon}{4\eta c})-\frac{t}{6}\exp(-\frac{A\epsilon}{2\eta c})\notag\\
	&\le \min\left\lbrace \delta,1-\left( \frac{1-\delta}{\exp(\eps)}\right) ^{\frac{1}{k}}\right\rbrace t. \label{eq.lap2}
	\end{align}
	Combining \eqref{eq.saffron2} and \eqref{eq.lap2}, we reach the conclusion that $\text{FDR}(t)\le \alpha+\min\{\delta,1-((1-\delta)/\exp(\eps))^{1/k}\} t \le \alpha + \delta t$.  
	
\end{proof}

\section{Proof of Lemma \ref{lemma1}}
\label{sec.lemma-proof}

\lemfirst*

\begin{proof}
	The proof is similar to the proof of Lemma 2 in \cite{ramdas2018saffron} with the addition of i.i.d.~Laplace noise. 
	
	In a high level, we hallucinate a vector of $p$-values that are same as the original vector of $p$-values, except for the $t$-th index. 
  This allows us to apply the conditional uniformity property, since now $p_t$ is independent of the hallucinated rejections. 
  We then connect the original rejections and the hallucinated rejections by the monotonicity of the rejections.
	
    We perform our analysis using a hallucinated process: 
    let $\tilde{p}_{1:k}^{t}$ be a copy of $p_{1:k}$ that is identical everywhere except for the $t$-th $p$-value which is set to be 1. 
    That is, 
	\begin{equation*}
	\tilde{p}_i=\begin{cases} 1 &\text{if } i=t\\
	p_i &\text{otherwise}.
	\end{cases}
	\end{equation*}
	Also let the hallucinated Laplace noises $\tilde{Z}_{1:k}^t$ be an identical copy of $Z_{1:k}$, and let $\tilde{Z}_\alpha$ be an identical copy of $Z_\alpha$. 
  The $t$-th value of $\tilde{Z}_{1:k}^t$ can be arbitrary since we have ensure the event $\{\tilde{p}_t>2\lambda_t\}$, so it will fail to become a candidate and the values of $\tilde{Z}_t$ will not be relevant.
  We denote $\tilde{C}_{1:k}$ and $\tilde{R}_{1:k}$ as the candidates and rejections made using $\tilde{p}_{1:k}^{t}$, $\tilde{Z}_{1:k}^t$, and $\tilde{Z}_\alpha$.
	
    By construction, we have $\tilde{R}_{1:t-1}=R_{1:t-1}$. On the event $\{p_t>2\lambda_t\}$, we have $R_{t}=\tilde{R}_t=0$ and $C_{t}=\tilde{C}_t=0$ because $\tilde{p}_t=1$, so both will fail to become candidates, and hence we have $\tilde{R}_{1:k}=R_{1:k}$ and the following equation holds:
    \begin{equation*}
    	\frac{\alpha_tI(p_t>2\lambda_t)}{(1-2\lambda_t)h(R_{1:k})}=\frac{\alpha_tI(p_t>2\lambda_t)}{(1-2\lambda_t)h(\tilde{R}_{1:k})}.
    \end{equation*}
    We note that when $p_t\le2\lambda_t$, the above equation still holds since both sides will be zero. Since $\tilde{R}_{1:k}^{t}$ is independent of $p_t$, we have
    \begin{align}
    \E{\frac{\alpha_tI(p_t>2\lambda_t)}{(1-2\lambda_t)h(R_{1:k})}|\mathcal{F'}^{t-1}}&=\E{\frac{\alpha_tI(p_t>2\lambda_t)}{(1-2\lambda_t)h(\tilde{R}_{1:k})}|\mathcal{F'}^{t-1}}\notag\\
    &\ge \E{\frac{\alpha_t}{h(\tilde{R}_{1:k})}|\mathcal{F'}^{t-1}}\label{eq.superuni}\\
    &\ge \E{\frac{\alpha_t}{h(R_{1:k})}|\mathcal{F'}^{t-1}}\label{eq.tildeR}
    \end{align}
	where Inequality \eqref{eq.superuni} is obtained by taking the expectation only with respect to $p_t$ by invoking the conditional super-uniformity property and independence of $p_t$ and $h(\tilde{R}_{1:k})$, and Inequality \eqref{eq.tildeR} follows from the facts that $R_i\ge \tilde{R}_i$ for all $i$ and that the function $h$ is non-decreasing.
	
	For the second inequality in the lemma statement, we hallucinate a vector of $p$-values $\bar{p}_{1:k}^{t}$  that equals $p_{1:k}$ everywhere except for the $t$-th $p$-value which is set to be 0. That is, 
	\begin{equation*}
	\bar{p}_i=\begin{cases} 0 &\text{if } i=t\\
	p_i &\text{otherwise}.
	\end{cases}
	\end{equation*}
	Also let the hallucinated Laplace noises $\bar{Z}_{1:k}^t$ be an identical copy of $Z_{1:k}$, and let $\bar{Z}_\alpha$ be an identical copy of $Z_\alpha$. We denote $\bar{C}_{1:k}$ and $\bar{R}_{1:k}$ as the candidates and rejections made using $\bar{p}_{1:k}^{t}$ and $\bar{Z}_{1:k}^t$. 
  By construction, we have $\bar{R}_i=R_i$ and $\bar{C}_i=C_i$ for all $i<t$. On the event that $\{\log p_t+Z_t\le \log \alpha_t+Z_\alpha-A\}$, since $\bar{p}_t=0$ and we inject the same Laplace noise, we have $\bar{R}_t=R_t=1$ and $\bar{C}_t=C_t=1$, and hence also $\bar{R}_{1:k}=R_{1:k}$. Then the following equation holds:
	\begin{equation*}
	\frac{I(\log p_t +Z_t \le \log \alpha_t+Z_\alpha-A)}{h(R_{1:k})}=\frac{I(\log p_t +Z_t \le \log \alpha_t+Z_\alpha-A)}{h(\bar{R}_{1:k})}.
	\end{equation*}
    We note that when $\log p_t+Z_t > \log \alpha_t+Z_\alpha-A$, the above equation still holds since both sides will be zero.  Since $\bar{R}_{1:k}$ and $Z_t$, $Z_\alpha$ are independent of $p_t$, we can take conditional expectations to obtain
	\begin{align}
	\E{\frac{I(\log p_t +Z_t \le \log \alpha_t+Z_\alpha-A)}{h(R_{1:k})}|\mathcal{F'}^{t-1}}&=\E{\frac{I(\log p_t +Z_t \le \log \alpha_t+Z_\alpha-A)}{h(\bar{R}_{1:k})}|\mathcal{F'}^{t-1}}\notag\\
	&\le \E{\frac{\min\{\alpha_t\exp(Z_\alpha-Z_t-A),1\})}{h(\bar{R}_{1:k})}|\mathcal{F'}^{t-1}} \label{eq.superuni2}\\
	&\le \E{\frac{\min\{\alpha_t\exp(Z_\alpha-Z_t-A),1\})}{h(R_{1:k})}|\mathcal{F'}^{t-1}} \label{eq.barR},
	\end{align}
	where Inequality \eqref{eq.superuni2} follows by taking expectation only with respect to $p_t$ by invoking the conditional uniformity property and the fact that the support of p-values is $[0,1]$, and Inequality \eqref{eq.barR} follows from the facts that $h(R_{1:k})\le h(\bar{R}_{1:k})$ since $R_i\le \bar{R}_i$ for all $i$ and that the function $h$ is non-decreasing.
\end{proof}

\end{document}